\documentclass[11pt]{article}

\usepackage{fullpage}
\usepackage{amsmath,amssymb,amsthm,amsfonts,natbib,graphicx,url}

\usepackage{bm}

\def\1{\bm{1}}

\def\rmk{{\mathbf{k}}}

\def\rmB{{\mathbf{B}}}

\def\rmD{{\mathbf{D}}}

\def\rmI{{\mathbf{I}}}

\def\rmK{{\mathbf{K}}}

\def\rmW{{\mathbf{W}}}

\def\rmPsi{{\bm{\Psi}}}

\DeclareMathAlphabet{\mathsfit}{\encodingdefault}{\sfdefault}{m}{sl}
\SetMathAlphabet{\mathsfit}{bold}{\encodingdefault}{\sfdefault}{bx}{n}

\def\gB{{\mathcal{B}}}
\def\gC{{\mathcal{C}}}
\def\gD{{\mathcal{D}}}

\def\gF{{\mathcal{F}}}
\def\gG{{\mathcal{G}}}
\def\gH{{\mathcal{H}}}

\def\gL{{\mathcal{L}}}

\def\gN{{\mathcal{N}}}

\def\gR{{\mathcal{R}}}

\def\gV{{\mathcal{V}}}

\def\gX{{\mathcal{X}}}
\def\gY{{\mathcal{Y}}}
\def\gZ{{\mathcal{Z}}}

\def\sN{{\mathbb{N}}}

\def\sP{{\mathbb{P}}}

\def\fp{{\mathfrak{p}}}
\def\fR{{\mathfrak{R}}}

\newcommand{\E}{\mathbb{E}}

\newcommand{\R}{\mathbb{R}}

\newcommand{\lan}{\langle}
\newcommand{\ran}{\rangle}
\newcommand{\abs}[1]{\left\lvert #1 \right\rvert}
\newcommand{\inprod}[2]{\langle #1, #2 \rangle}
\newcommand{\norm}[1]{\lVert#1\rVert}

\DeclareMathOperator*{\argmax}{arg\,max}
\DeclareMathOperator*{\argmin}{arg\,min}

\DeclareMathOperator{\Tr}{Tr}

\let\originalleft\left
\let\originalright\right
\renewcommand{\left}{\mathopen{}\mathclose\bgroup\originalleft}
\renewcommand{\right}{\aftergroup\egroup\originalright}

\usepackage{booktabs}
\usepackage{mathtools}
\usepackage{algorithm}
\usepackage{algorithmic}
\usepackage{hyperref}[links]
\hypersetup{
  colorlinks,
  linkcolor={red!72!black},
  citecolor={blue!72!black},
  urlcolor={magenta!72!black}
}
\usepackage[most]{tcolorbox}

\newtheorem{innercustomgeneric}{\customgenericname}
\providecommand{\customgenericname}{}
\newcommand{\newcustomtheorem}[2]{%
  \newenvironment{#1}[1]
  {%
   \renewcommand\customgenericname{#2}%
   \renewcommand\theinnercustomgeneric{##1}%
   \innercustomgeneric
  }
  {\endinnercustomgeneric}
}

\newtheorem{theorem}{Theorem}
\newtheorem{lemma}{Lemma}
\newtheorem{proposition}{Proposition}

\newtheorem{definition}{Definition}
\newtheorem{assumption}{Assumption}

\newtheorem{remark}{Remark}

\newcustomtheorem{customassumption}{Assumption}

\makeatletter
\long\def\@makecaption#1#2{
        \vskip 0.8ex
        \setbox\@tempboxa\hbox{\small {\bf #1:} #2}
        \parindent 1.5em  
        \dimen0=\hsize
        \advance\dimen0 by -3em
        \ifdim \wd\@tempboxa >\dimen0
                \hbox to \hsize{
                        \parindent 0em
                        \hfil 
                        \parbox{\dimen0}{\def\baselinestretch{0.96}\small
                                {\bf #1.} #2
                                } 
                        \hfil}
        \else \hbox to \hsize{\hfil \box\@tempboxa \hfil}
        \fi
        }
\makeatother

\newcommand{\mygraybox}[1]{
{\begin{tcolorbox}[
    colback=gray!10,       
    colframe=gray!70,      
    arc=6pt,               
    boxrule=0.8pt,         
    left=8pt,              
    right=8pt,             
    top=6pt,               
    bottom=6pt,             
    enhanced jigsaw,
    parbox=false
]
#1
  \end{tcolorbox}
  }
}

\begin{document}

\begin{center}
  {\bf{\LARGE Conditional Counterfactual Mean Embeddings: \\ Doubly Robust Estimation and Learning Rates}}
\end{center}

\begin{center}

	\vspace*{0.3in}

  \begin{tabular}{c @{\hspace{3cm}} c}
    Thatchanon Anancharoenkij & Donlapark Ponnoprat \\  
		Department of Statistics & Department of Statistics\\
    Chiang Mai University & Chiang Mai University \\
    \texttt{thatchanon\_ananch@cmu.ac.th} & \texttt{donlapark.p@cmu.ac.th} \\

  \end{tabular}
  
  \vspace*{0.3in}

\begin{abstract}%
  A complete understanding of heterogeneous treatment effects involves characterizing the full conditional distribution of potential outcomes. To this end, we propose the \emph{Conditional Counterfactual Mean Embeddings} (CCME), a framework that embeds conditional distributions of counterfactual outcomes into a reproducing kernel Hilbert space (RKHS). Under this framework, we develop a two-stage meta-estimator for CCME that accommodates any RKHS-valued regression in each stage. Based on this meta-estimator, we develop three practical CCME estimators: (1) \emph{Ridge Regression} estimator, (2) \emph{Deep Feature} estimator that parameterizes the feature map by a neural network, and (3) \emph{Neural-Kernel} estimator that performs RKHS-valued regression, with the coefficients parameterized by a neural network. We provide finite-sample convergence rates for all estimators, establishing that they possess the double robustness property. Our experiments demonstrate that our estimators accurately recover distributional features including multimodal structure of conditional counterfactual distributions.
\end{abstract}
\end{center}

\section{Introduction}

Causal inference is fundamental to scientific inquiry, policy design, and development of reliable artificial intelligence, where the primary goal is to evaluate the impact of interventions on outcomes of interest. Consider a scenario where a decision maker wants to understand the causal effect of a binary treatment $A \in \{0,1\}$ on an outcome $Y$, with particular interest in how this effect varies across subpopulations characterized by covariates $V$. The potential outcomes framework \citep{Rubin1974, Holland1986} provides a principled approach to this setting: it posits that each unit has two potential outcomes $Y^0$ and $Y^1$---also referred to as \emph{counterfactuals}---corresponding to the two treatment levels, with the observed outcome given by $Y = AY^1 + (1-A)Y^0$. This framework highlights two fundamental challenges: (i) for each unit, the outcome under the alternative treatment remains unobserved, and (ii) a broader set of covariates $X$ may confound the relationship between treatment and outcome. Given these challenges, the literature has traditionally focused on estimating scalar summaries of treatment effects, such as the average treatment effect $\mathbb{E}[Y^1 - Y^0]$ and the conditional average treatment effect $\mathbb{E}[Y^1 - Y^0 \mid V = v]$.

While these scalar summaries provide valuable high-level insights, they often obscure critical nuances in the underlying distribution: different counterfactual distributions can share identical means yet exhibit vastly different shapes \cite{Kennedy2023}. A treatment might leave the average outcome unchanged while drastically altering the skewness or tail probabilities of the distribution. For instance, in finance, a policy might preserve expected returns while significantly increasing the probability of catastrophic loss. Furthermore, a multimodal counterfactual distribution often indicates the presence of distinct underlying subgroups with different responses to the treatment. Capturing these behaviors calls for the estimation of the \emph{counterfactual distributions}, that is, the distributions of $Y^0$ and $Y^1$. 

However, focusing solely on the \emph{marginal} counterfactual distribution is sometimes insufficient for personalized decision-making. In real-world applications, treatment effects are often highly heterogeneous, varying significantly across subgroups defined by some features. For example, in healthcare, a drug might demonstrate a favorable risk profile for the general population but induce severe adverse events in patients with specific genetic markers. To enable safe, targeted interventions, it is therefore necessary to estimate the \emph{conditional} counterfactual distribution, which characterizes the full spectrum of potential outcomes specific to an individual's context.

To study the conditional counterfactual distributions, we leverage the framework of \emph{Kernel Mean Embeddings} (KME) \cite{Muandet2017}, which allows us to manipulate probability distributions as vectors in a \emph{reproducing kernel Hilbert space} (RKHS). This framework enables a wide range of statistical tasks, such as density estimation \cite{Song2008, Sriperumbudur2011,Kanagawa2014}, hypothesis testing \cite{Gretton2012,Chwialkowski2016} and sampling \cite{Welling2009,Chen2010}. For this reason, KME has seen considerable success as a tool for studying counterfactual distributions \cite{Muandet2021,Park2021,Singh2023}.  

These successes motivate us to study the \emph{Conditional Counterfactual Mean Embeddings} (CCME), the embeddings of conditional counterfactual distributions in the RKHS, allowing us to extend the analysis of counterfactual distributions when the distributions vary across subpopulations. Inheriting the representation properties of the KME, the CCME enables a wide range of statistical tasks on conditional counterfactual distributions, such as density estimation, conditional independence testing, and sampling.

To estimate the CCME, we utilize \emph{doubly robust} methods \cite{Robins1994, Scharfstein1999}. The classical doubly robust methods debias the outcome variable by incorporating two models that describe the relationships between (i) the outcome and the covariates, and (ii) the treatment assignment and the covariates. These methods yield estimators that are consistent even if only one of the two models is correctly specified. The key insight of our approach is to extend these methods to the RKHS setting by instead \emph{debiasing the feature map of the outcome}. As a result, we obtain CCME estimators in the functional space that inherit the double robustness property.

While doubly robust estimators have found widespread use in average treatment effect estimation, their extension to the RKHS setting raises important theoretical questions. In particular, it remains unclear how the dimensions of the variables affect learning rates. Understanding these dimensional dependencies is essential for developing estimators that provide valid inference in high-dimensional settings.

\textbf{Contribution.} We propose the \emph{Conditional Counterfactual Mean Embeddings} (CCME) as a way to represent conditional counterfactual distributions in RKHS and develop doubly robust estimators through the double machine learning framework. Our specific contributions are three-fold:
\begin{enumerate}
    \item \textbf{A Meta-Estimator for CCME:} We propose a two-stage meta-estimator for CCME that accommodates arbitrary propensity score models and conditional mean embedding (CME) estimators in the first stage, and any CME estimator in the second stage.
    
    \item \textbf{Three Specific CCME Estimators:} We develop three concrete instances of the meta-estimator, namely the \emph{Ridge Regression} estimator, the \emph{Deep Feature} estimator, and the \emph{Neural-Kernel} estimator, each of which offers distinct advantages in terms of simplicity, flexibility and scalability.
    
    \item \textbf{Finite-Sample Convergence Rates:} We establish finite-sample error bounds for all three estimators that explicitly track how convergence rates depend on the dimensions of the outcome, the covariates, and the conditioning variables. Crucially, we prove that our estimators achieve \emph{double robustness in rates}; specifically, the first-stage error is controlled by the minimum of the rates achieved by the propensity score estimator and the CME estimator. 
\end{enumerate}

\section{Related Work}

The estimation of counterfactual distributions has evolved through several approaches. Early work focused on estimating cumulative distribution functions under linear parametric assumptions \cite{Abadie2002,Chernozhukov2013,Diaz2017}. In parallel, another line of work developed methods for estimating probability density functions (PDFs) using kernel smoothing combined with propensity score weighting \cite{DiNardo1996} or doubly robust methods \cite{Bickel2001,Kim2024}. More recently, \cite{Kennedy2023} proposed an alternative approach using cosine series approximation within a doubly robust framework. Other recent advances in PDF estimation include doubly robust methods via kernel Stein discrepancies \cite{Liu2016,Chwialkowski2016,Gorham2017} developed by \cite{Martinez-Taboada24}, and via normalizing flows \cite{Tabak2010,Rezende2015} developed by \cite{Melnychuk23}.

A significant development occurred with the introduction of kernel mean embeddings (KME) into causal inference. \cite{Muandet2021} pioneered this direction by proposing Counterfactual Mean Embeddings (CME), which represent counterfactual distributions as elements in a reproducing kernel Hilbert space (RKHS). They proposed to estimate the CME using the plug-in estimator, that is, the kernel ridge regression \cite{Song2009, Grunewalder2012} on each treatment group. The CME framework enabled applications in treatment effect measurement and hypothesis testing of distributional effects; for example, doubly robust estimators for CME were employed to perform hypothesis testing of causal effects \cite{Martinez-Taboada23,Fawkes2024}.

Extensions to more general settings have also been explored. \cite{Luedtke2024} established a comprehensive framework for one-step estimation of differentiable functional-valued parameters, proving consistency and asymptotic linearity of their estimator. \cite{Zenati2025} studied RKHS embeddings of counterfactual distributions under stochastic discrete interventions, which includes binary treatments as a special case. They leveraged the results of \cite{Luedtke2024} to propose a doubly robust estimator. However, a key limitation of both \cite{Luedtke2024} and \cite{Zenati2025} is their focus on \emph{marginal} counterfactual distributions rather than conditional counterfactual distributions.

The literature on \emph{conditional} counterfactual distributions (CCD) remains relatively sparse. \cite{Park2021} was the first to introduce CCME for hypothesis testing of CCD, though their estimator is a one-step plug-in estimator, which may suffer from selection bias. In contrast, our doubly robust estimators correct this bias through propensity score weighting. \cite{Singh2023} proposed a related CCME framework for conditional counterfactual density estimation in continuous treatment regimes, whereas our work focuses on binary treatments. In a complementary direction, \cite{Kallus2023} developed doubly robust methods for estimating treatment effects on various statistics of CCD, including quantiles and conditional value-at-risk. However, our CCME framework offers a distinct advantage: by preserving the full distributional information as a functional object in RKHS, it enables a broader range of downstream tasks such as density estimation \cite{Kanagawa2014}, conditional independence testing \cite{Fukumizu2007,Zhang2011,Huang2022,Scetbon2022}, and sampling \cite{Chen2010}.

\section{Background and Notations}

We briefly describe the framework to study conditional counterfactual distributions. As a complement, we provide full definitions of positive definite kernels, Bochner integrals, etc., in Appendix~\ref{app:sobolev}.

\subsection{Problem Setup}
We observe an i.i.d.\ sample $Z_1,\ldots,Z_{2n}$ where $Z_i = (X_i, A_i, Y_i) \sim P$. Here, $X \in \mathcal{X} \subset \mathbb{R}^{d_x}$ is a vector of covariates, $A \in \{0,1\}$ is a binary treatment assignment, and $Y \in \mathcal{Y} \subset \mathbb{R}^{d_y}$ is the observed outcome. We adopt the potential outcomes framework \cite{Rubin1974, Holland1986}, under which $Y = A Y^1 + (1-A) Y^0$, where $Y^1$ and $Y^0$ are referred to as the \emph{potential outcomes} or \emph{counterfactuals} under treatment ($A=1$) and control ($A=0$), respectively. 
Our focus is the conditional distribution of the counterfactual $Y^1$ given covariates $V = \eta(X) \in \mathcal{V} \subset \mathbb{R}^{d_v}$, where $\eta : \gX \to \gV$ is a known function (e.g., a projection onto a subset of covariates). 

\subsection{Reproducing Kernel Hilbert Spaces}
Let $\gH_{\gY}$ denote an RKHS on the outcome space $\gY$ with inner product $\inprod{\cdot}{\cdot}_{\gH_{\gY}}$ and reproducing kernel $k_\gY: \gY \times \gY \to \mathbb{R}$. The kernel is \emph{reproducing} in the sense that $\inprod{g}{k_\gY(\cdot, y)}_{\gH_{\gY}} = g(y)$ for any $g \in \gH_{\gY}$ and $y \in \gY$. The \emph{canonical feature map} $\phi: \gY \to \gH_{\gY}$ is defined as $\phi(y) = k_\gY(\cdot, y)$, which satisfies $k_\gY(y, y') = \inprod{\phi(y)}{\phi(y')}_{\gH_{\gY}}$.

The \emph{mean embedding} of a random variable $Y \in \gY$ is defined as $\mu_Y = \E_{Y}[\phi(Y)] \in \gH_{\gY}$, where the expectation is the Bochner integral. When $k_\gY$ is a \emph{characteristic kernel}, the mapping from the space of probability distributions to the RKHS is injective \cite{Fukumizu2007,Sriperumbudur2010}. This property ensures that $\mu_Y$ encodes all information about the distribution of $Y$.

The \emph{conditional mean embedding} (CME) $\mu_{Y|X}: \gX \to \gH_\gY$ is defined as $\mu_{Y|X}(x) = \E[\phi(Y) \mid X=x]$, which satisfies $\E[g(Y) \mid X=x] = \inprod{g}{\mu_{Y|X}(x)}_{\gH_{\gY}}$ for any $g \in \gH_{\gY}$.

\subsection{The Target Estimand}\label{sec:id}

Our goal is to estimate the conditional distribution of the counterfactual outcome $Y^1$ given a set of covariates $V = \eta(X)$. In the RKHS framework, we represent the conditional distribution by the \emph{Conditional Counterfactual Mean Embeddings} (CCME):
\begin{align}\label{eq:ccmedef}
    \mu_{Y^1\vert V}(v) = \E[\phi(Y^{1}) \mid V=v] \in \gH_{\gY}.
\end{align}
This embedding fully characterizes the conditional distribution of $Y^1$ given $V=v$ when the kernel $k_\gY$ is \emph{characteristic} \cite{Fukumizu2007,Sriperumbudur2010}, which includes commonly used kernels such as Gaussian, Mat\'ern, and Laplace kernels. Consequently, as the conditional distribution of $Y^1$ can differ from that of $Y$, the embedding $\mu_{Y^1\vert V}(v)$ might not be the same as $\mu_{Y\vert V}(v) = \E[\phi(Y) \mid V=v]$. 

\section{Doubly Robust Identification of CCME}

To identify the CCME from observed data $(X, A, Y)$, we must specify the causal mechanism that connects the three variables. To achieve this, we adopt a standard technique for treatment effect estimation from observational data \cite{Rosenbaum1983,Rosenbaum1987} and extend it to RKHS settings. 

First, we introduce two \emph{nuisance functions} that model the conditional distributions of $A$ and $Y$ given covariates $X$:
\begin{enumerate}
    \item The \emph{propensity score}: $\pi(x) = P(A=1|X=x)$
    \item The CME of the treated outcome conditional on $X$: $\mu_{0}(x) = \E[\phi(Y)|X=x, A=1]$
\end{enumerate}
Using these nuisance functions, we construct the RKHS-valued pseudo-outcome $\xi(Z) \in \gH_{\gY}$ as follows:
\begin{equation}\label{eq:defxi}
    \xi(Z) = \frac{A}{\pi(X)}\left( \phi(Y) - \mu_{0}(X) \right) 
    + \mu_{0}(X).
\end{equation}

Then, we require the following standard assumptions:
\begin{assumption}\label{ass:unconfound} 
(i) \textit{Conditional ignorability:} $Y^a \perp A \mid X$ for each $a\in\{0,1\}$. (ii) \textit{Positivity:} There exists $\varepsilon > 0$ such that $\varepsilon \le \pi(x) \le 1-\varepsilon$ for all $x \in \gX$.
\end{assumption}
\begin{assumption}\label{ass:boundk} 
The kernel $k_\gY$ is continuous and bounded, that is, $\sup_{y \in \gY}k_\gY(y,y) < \infty$.
\end{assumption}
Conditional ignorability ensures that treatment assignment is independent of potential outcomes given covariates $X$, i.e., there are no unmeasured confounders. Strong positivity requires that the propensity score is uniformly bounded away from 0 and 1, ensuring that both treatment groups are represented across all covariate values.

The boundedness of kernels is a mild assumption that is satisfied by commonly used kernels, e.g., Gaussian, Matérn, and Laplace kernels, but not by polynomial and linear kernels.

Under these assumptions, the CCME is identified via the pseudo-outcome: 
\begin{proposition}\label{prop:identification}
Under Assumptions~\ref{ass:unconfound} and \ref{ass:boundk}, the CCME satisfies the following identification result:
\begin{equation}\label{eq:identification}
\mu_{Y^1\vert V}(v) = \E[\xi(Z) \mid V=v]. 
\end{equation}
\end{proposition}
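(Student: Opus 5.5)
The plan is to use the tower property for Bochner conditional expectations, conditioning first on $X$ and then on $V=\eta(X)$. Since $V$ is a measurable function of $X$, $\sigma(V)\subseteq\sigma(X)$, and so
\begin{equation*}
\E[\xi(Z)\mid V] = \E\bigl[\E[\xi(Z)\mid X]\,\big|\, V\bigr], \qquad \mu_{Y^1\vert V}(V) = \E\bigl[\E[\phi(Y^1)\mid X]\,\big|\, V\bigr].
\end{equation*}
It therefore suffices to prove the pointwise statement $\E[\xi(Z)\mid X] = \E[\phi(Y^1)\mid X]$ almost surely. The result then follows by taking conditional expectations given $V$ and evaluating at $V=v$.

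The first step is well-definedness, where Assumption~\ref{ass:boundk} enters. With $\kappa=\sup_y k_\gY(y,y)$ we have $\norm{\phi(y)}_{\gH_\gY}=\sqrt{k_\gY(y,y)}\le\sqrt{\kappa}$, and hence $\norm{\mu_0(x)}\le\sqrt\kappa$. Continuity of $k_\gY$ makes $\gH_\gY$ separable and $\phi$ measurable, so $\phi(Y)$ and $\phi(Y^1)$ are Bochner integrable. Positivity gives $1/\pi(X)\le 1/\varepsilon$, so $\norm{\xi(Z)}\le \sqrt\kappa(2/\varepsilon+1)$, and all the conditional expectations above exist. Conditional Bochner expectations commute with bounded linear functionals, and $\gH_\gY$ is separable. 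If one wants to avoid vector-valued conditional expectation theory, the cleanest route is to prove everything weakly: show $\inprod{g}{\E[\xi(Z)\mid X]} = \inprod{g}{\E[\phi(Y^1)\mid X]}$ for every $g\in\gH_\gY$, which reduces the problem to scalar identities about $g(Y)$, $g(Y^1)$ and $\inprod{g}{\mu_0(X)}$.

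The core computation is then as follows. Since $\mu_0(X)$ and $\pi(X)$ are $X$-measurable, they pull out of the conditional expectation:
\begin{equation*}
\E[\xi(Z)\mid X] = \frac{1}{\pi(X)}\E\bigl[A\,\phi(Y)\mid X\bigr] - \frac{\E[A\mid X]}{\pi(X)}\mu_0(X) + \mu_0(X).
\end{equation*}
The middle term is $-\mu_0(X)$ because $\E[A\mid X]=\pi(X)$. For the first term, $A\phi(Y)=A\phi(Y^1)$ by consistency, since $Y=Y^1$ when $A=1$. Conditional ignorability, $Y^1\perp A\mid X$, then gives $\E[A\phi(Y^1)\mid X]=\pi(X)\,\E[\phi(Y^1)\mid X]$. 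So the right-hand side collapses to $\E[\phi(Y^1)\mid X]$, as required.

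Notice that the correction term vanishes identically whatever $\mu_0$ is, which is the double robustness. It is still useful to record that $\mu_0(X)=\E[\phi(Y)\mid X,A=1]=\E[\phi(Y^1)\mid X]$, again by consistency and ignorability. This is well-defined because positivity makes $P(A=1\mid X)>0$.

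I expect the only delicate point to be the measure-theoretic justification of the factorization $\E[A\phi(Y^1)\mid X]=\pi(X)\E[\phi(Y^1)\mid X]$ for Hilbert-valued variables. I will handle it by pairing with an arbitrary $g\in\gH_\gY$: this turns the claim into the scalar identity $\E[A\,g(Y^1)\mid X]=\pi(X)\E[g(Y^1)\mid X]$, which is standard under conditional independence with bounded $g(Y^1)$. Since the two $\gH_\gY$-valued random elements agree weakly for every $g$, they agree almost surely.
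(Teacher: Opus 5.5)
Your proposal is correct and follows essentially the same route as the paper. Both establish Bochner integrability via the bounded kernel and positivity, with the same bound $\sqrt{\kappa}(2/\varepsilon+1)$, then apply the tower property through $\sigma(V)\subseteq\sigma(X)$, and compute $\E[\xi(Z)\mid X]=\E[\phi(Y^1)\mid X]$ using consistency, $\E[A\mid X]=\pi(X)$, and conditional ignorability. Your weak-pairing justification of the Hilbert-valued factorization adds a bit of rigor the paper omits, and your remark that the $\mu_0$ terms cancel for any $X$-measurable $\mu_0$ slightly streamlines the paper's substitution $\mu_0(X)=\E[\phi(Y^1)\mid X]$.
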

The proof is provided in Appendix~\ref{sec:iden_proof}. Having established identification, we develop our estimator in the next section.

\section{The Meta-Estimator}\label{sec:meta}

The identification result from the previous section suggests a two-stage estimation strategy that has proven successful for estimating conditional treatment effects \cite{Chernozhukov2018,Semenova2020,Kennedy2023b}: first, estimate the nuisance functions $\pi$ and $\mu_{0}$; second, construct pseudo-outcomes $\widehat{\xi}(Z)$ and regress them on $V$ to obtain the CCME estimator. We formalize this approach through a general meta-estimator framework, outlined in Algorithm~\ref{alg:meta_ccme}.

To start, we randomly split the full dataset $\gD$ of size $2n$ into two disjoint folds, $\mathcal{D}_{0}$ and $\mathcal{D}_{1}$, each of size $n$. The two-stage estimation then proceeds as follows:

\subsubsection*{Stage 1: Nuisance Estimation}

Using $\mathcal{D}_{0}$, we estimate the two nuisance functions:
\begin{enumerate}
    \item A propensity score estimator $\widehat{\pi}: \gX \to (0, 1)$.
    \item A conditional mean embedding estimator $\widehat\mu_{0}: \gX \to \gH_\gY$.
\end{enumerate}
The propensity score can be estimated using any classification model that provides probability predictions. For the CME, several estimators have been proposed \cite{Song2009,Xu21,Shimizu24}, which we explore in the next section.

\subsubsection*{Stage 2: Pseudo-Outcome Regression}

Using $\mathcal{D}_{1}$, we construct pseudo-outcomes from the estimated nuisance functions. For each observation $Z_{1i} = (X_{1i}, A_{1i}, Y_{1i}) \in \mathcal{D}_{1}$ with corresponding covariates $V_{1i}$, the pseudo-outcome $\widehat{\xi}(Z_{1i}) \in \gH_\gY$ is constructed as:
\begin{equation}\label{eq:pseudo_outcome}
    \widehat{\xi}(Z_{1i}) = \frac{A_{1i}}{\widehat{\pi}(X_{1i})}\left(\phi(Y_{1i}) - \widehat\mu_{0}(X_{1i})\right) 
    + \widehat\mu_{0}(X_{1i}).
\end{equation}
We then estimate the CCME by regressing the pseudo-outcomes on $V$. Specifically, we solve a regularized empirical risk minimization problem over a hypothesis class $\gF$ of $\gH_\gY$-valued functions:
\begin{equation}\label{eq:generic_loss}
    \widehat{\mu}_{Y^1|V} = \argmin_{\mu \in \gF} \frac{1}{n} \sum_{i=1}^n 
    \norm{\mu(V_{1i}) - \widehat{\xi}(Z_{1i})}_{\gH_\gY}^2 + \Lambda(\mu),
\end{equation}
where $\Lambda: \gF \to \mathbb{R}_+$ is a regularization functional that controls model complexity. The choice of $\gF$ and $\Lambda$ determines the specific estimator. In the next section, we present three specific choices of $\gF$ and $\Lambda$ that yield practical estimators for the CCME.

\begin{algorithm}[t]
\caption{Meta-Algorithm for CCME Estimation}
\label{alg:meta_ccme}
\begin{algorithmic}[1]
\REQUIRE Dataset $\gD = \{Z_i\}_{i=1}^{2n}$ where $Z_i = (X_i, A_i, Y_i)$
\REQUIRE Nuisance estimation algorithms $\mathsf{Alg}_{\pi}, \mathsf{Alg}_{\mu}$
\REQUIRE Hypothesis class $\gF$ and regularization $\Lambda: \gF \to \mathbb{R}_+$

\vspace{0.1cm}

\STATE Randomly split $\gD$ into $\gD_0=\{Z_{0i}\}_{i=1}^n$ and $\gD_1=\{Z_{1i}\}_{i=1}^n$

\vspace{0.1cm}
\STATE \textbf{Stage 1: Nuisance Estimation on $\gD_0$}
\STATE \quad $\widehat{\pi} \leftarrow \mathsf{Alg}_{\pi}(\gD_0)$ 
\STATE \quad $\widehat\mu_{0} \leftarrow \mathsf{Alg}_{\mu}(\gD_0)$

\vspace{0.1cm}
\STATE \textbf{Stage 2: Pseudo-Outcome Regression on $\gD_1$}
\FOR{$i = 1$ to $n$}
    \STATE  $\widehat{\xi}(Z_{1i}) \leftarrow \frac{A_{1i}}{\widehat{\pi}(X_{1i})}\left(\phi(Y_{1i}) - \widehat\mu_{0}(X_{1i})\right)$ \\
    \STATE \quad \quad \quad \quad \quad \quad $+ \widehat\mu_{0}(X_{1i})$
\ENDFOR
\STATE $\widehat{\mu}_{Y^1|V} \leftarrow \argmin_{\mu \in \gF} \frac{1}{n} \sum_{i=1}^n$ \\
\STATE \quad \quad \quad \quad \quad $\norm{\mu(V_{1i}) - \widehat{\xi}(Z_{1i})}_{\gH_\gY}^2 + \Lambda(\mu)$

\vspace{0.1cm}

\RETURN $\widehat{\mu}_{Y^1|V}$
\end{algorithmic}
\end{algorithm}

\section{Three Practical Estimators}\label{sec:three_estimators}

Learning the meta-estimator requires minimizing the second-stage loss \eqref{eq:generic_loss}. However, direct optimization in the infinite-dimensional Hilbert space is impractical. To derive a tractable algorithm from the meta-estimator, we must select the hypothesis class $\mathcal{F}$ so that the loss can be expressed in terms of kernel matrices computed from observed variables. In this section, we propose three choices of hypothesis class for both first- and second-stage CME regression, yielding three computationally tractable estimators for the CCME with different computational-statistical trade-offs.

Throughout, we work with the two splits of data: $\mathcal{D}_0 = \{Z_{0i}\}_{i=1}^n$ and $\mathcal{D}_1 = \{Z_{1i}\}_{i=1}^n$, where $Z_{0i}= (X_{0i}, A_{0i}, Y_{0i})$ and $Z_{1i}=(X_{1i}, A_{1i}, Y_{1i})$. 
The conditioning covariate is $V_{1i} = \eta(X_{1i})$. 

Let $\mathbf{I}_n$ be the $n \times n$ identity matrix. Define an $\gH_\gY$-valued vector 
$\boldsymbol{\widehat\Xi}  = (\widehat{\xi}(Z_{11}), \ldots, \widehat{\xi}(Z_{1n}))^\top$, which can be considered as a linear map from $\R^n$ to $\gH_\gY$ through $\boldsymbol{\widehat\Xi} (\mathbf{c}) = \sum_{i=1}^n c_i \widehat\xi(Z_{1i})$.

For brevity, we will describe only the second-stage estimation. The first-stage CME estimation follows in the same manner with $V$ replaced by $X$, $\widehat{\xi}(Z_{1i})$ replaced by $\widehat{\phi}(Y_{0(i)})$ and $\boldsymbol{\widehat{\Xi}}$ replaced by $\boldsymbol{\Phi}_0=(\phi(Y_{0(1)}), \ldots, \phi(Y_{0(m)}))^\top$. The full algorithms of these estimators are provided in Appendix~\ref{sec:algorithms}.

\subsection{Ridge Regression Estimator}

We use the kernel ridge regression (KRR) \citep{Song2009} with $V_{1i}$ as the input and $\widehat\xi(Z_{1i})$ as the target. To formalize this, we let $k_{\mathcal{V}}: \mathcal{V} \times \mathcal{V} \to \mathbb{R}$ be a kernel on $\mathcal{V}$ with associated RKHS $\mathcal{H}_{\mathcal{V}}$. Denote by $\gB\gL(\mathcal{H}_\mathcal{Y},\mathcal{H}_\mathcal{Y})$ the set of bounded linear operators from $\gH_\gY$ to itself. We define the operator-valued kernel $\Gamma_\gV: \mathcal{V} \times \mathcal{V} \to \gB\gL(\mathcal{H}_\mathcal{Y}, \mathcal{H}_\mathcal{Y})$ as $\Gamma_\gV(v, v') = k_{\mathcal{V}}(v, v') \mathrm{Id}_{\mathcal{H}_\mathcal{Y}}$, and let $\mathcal{H}_{\Gamma_\gV}$ be the associated vector-valued RKHS. Setting $\mathcal{F} = \mathcal{H}_{\Gamma_\gV}$ in the least squares problem \eqref{eq:generic_loss} with regularizer $\Lambda = \lambda_1 \|\cdot\|^2_{\mathcal{H}_{\Gamma_\gV}}$ for some $\lambda_1>0$ yields the following problem:
\begin{equation*}\label{eq:emp_ccme_loss}
    \widehat{\mu}_{\mathrm{RR}} = \argmin_{\mu \in \mathcal{H}_{\Gamma_\gV}} \frac{1}{n} \sum_{i=1}^n 
    \|\mu(V_{1i}) - \widehat{\xi}(Z_{1i})\|_{\mathcal{H}_\mathcal{Y}}^2 
    + \lambda_1 \|\mu\|^2_{\mathcal{H}_{\Gamma_\gV}}.
\end{equation*}
The Ridge Regression estimator $\widehat{\mu}_{\mathrm{RR}}$ can be written explicitly as \cite{Song2009,Grunewalder2012,Park2020}:
\begin{equation}\label{eq:emp_ccme_form}
    \widehat{\mu}_{\mathrm{RR}}(v) = \boldsymbol{\widehat\Xi}
    (\rmK_{V} + n \lambda_1 \mathbf{I}_n)^{-1}\rmk_{V}(v),
\end{equation}
where $\rmK_V = (k_{\mathcal{V}}(V_{1i}, V_{1j}))_{i,j=1}^n \in \R^{n\times n}$ and $\rmk_{V}(v)= (k_{\mathcal{V}}(v, V_{11}),\ldots,k_{\mathcal{V}}(v,V_{1n}))^{\top}$.

\subsection{Deep Feature Estimator}

The empirical CCME estimator in \eqref{eq:emp_ccme_form} requires $O(n^3)$ time for matrix inversion, making it computationally expensive for large sample sizes. As an alternative, \citep{Xu21} proposes the \emph{Deep Feature} estimator, which takes the form $\mu_{\mathrm{DF}}(v) = C \psi_\theta(v)$, where $\psi_\theta : \gV \to \R^M$ is a neural network, parameterized by $\theta$, that learns a feature map of $M \ll n$ dimensions, and $C: \R^M \to \gH_\gY$ is a linear operator. This reduces the matrix inversion to $O(M^3)$ time.

To formalize the Deep Feature estimator, we denote by $\Theta$ a parameter space for neural networks from $\gV$ to $\R^M$ and $\gB\gL(\R^{M};\gH_\gY)$ the set of bounded linear operators from $\R^{M}$ to $\gH_\gY$. We then solve the following variant of the least squares problem \eqref{eq:generic_loss}:
\begin{equation}\label{eq:first_stage_DF_loss}
    \begin{split}
    \argmin_{\substack{\theta \in \Theta, \\ C \in \gB\gL(\R^{M};\gH_\gY)}}
    \biggl(\frac{1}{n}\sum_{i=1}^{n} \lVert C\psi_\theta(V_{1i})  &- \widehat\xi(Z_{1i}) \rVert_{\gH_\gY}^{2} + \lambda_1\lVert C \rVert_{\text{HS}}^{2}\biggr),
    \end{split}
\end{equation}
where $\|\cdot\|_{\text{HS}}$ is the Hilbert-Schmidt norm and $\lambda_1 > 0$ is the regularization parameter. For each fixed $\psi_\theta$, we can minimize \eqref{eq:first_stage_DF_loss} with respect to $C$ by using Fr\'echet derivative \cite[Section 5.3.1]{Brault2016} to obtain:
\begin{equation}\label{eq:C_hat}
    \widehat{C}_{\psi_\theta} = \boldsymbol{\widehat\Xi} \rmPsi_\theta 
    (\rmPsi_\theta^{\top}\rmPsi_\theta + n\lambda_1\mathbf{I}_{M})^{-1},
\end{equation}
where $\rmPsi_\theta =(\psi_\theta(V_{11}),\ldots,\psi_\theta(V_{1n}))^\top \in \R^{n \times M}$. Substituting $C = \widehat{C}_{\psi_\theta}$ into \eqref{eq:first_stage_DF_loss} yields the following loss for $\psi_\theta$:
\begin{equation}\label{eq:first_stage_DF_psi}
     \widehat\gL_{\mathrm{DF}}(\theta) = \Tr\left( \rmK_{\widehat\xi} 
    \left(\mathbf{I}_n - \rmPsi_\theta(\rmPsi_\theta^{\top}\rmPsi_\theta + n\lambda_1\mathbf{I}_{M})^{-1}\rmPsi_\theta^\top \right) \right),
\end{equation}
where $\rmK_{\widehat{\xi}}  = (\langle \widehat{\xi}(Z_{1i}), \widehat{\xi}(Z_{1j}) \rangle_{\gH_\gY})_{i,j=1}^n \in \mathbb{R}^{n \times n}$. 


Optimizing the loss with gradient descent, we obtain a learned feature map $\psi_{\widehat\theta}$. Our estimator is obtained by applying the optimal linear operator $\widehat{C}_{\psi_{\widehat\theta}}$ defined in \eqref{eq:C_hat}:
\begin{equation*}\label{eq:DF_final}
    \widehat{\mu}_{\mathrm{DF}}(v) = \widehat{C}_{\psi_{\widehat\theta}} \psi_{\widehat\theta}(v) 
    = \boldsymbol{\widehat{\Xi}} \rmPsi_{\widehat\theta} 
    (\rmPsi^{\top}_{\widehat\theta}\rmPsi_{\widehat\theta} + n\lambda_1\mathbf{I}_{M})^{-1} \psi_{\widehat\theta}(v).
\end{equation*}

\subsection{Neural-Kernel Estimator}

The neural-kernel estimator \citep{Shimizu24} offers a third approach for CME estimation that completely eliminates the matrix inversion and the regularization parameter $\lambda_1$. 

To formalize this estimator, we let $\widetilde\gY_M = \{\widetilde{y}_{j}\}_{j=1}^{M}$ be a set of $M$ grid points in $\gY$, which we specify before estimation. 

Let $f_\theta:\gV \to \R^M$ be a neural network parameterized by $\theta \in \Theta$ with $M$ outputs, and $f_{\theta}(v)_j$ be the $j$-th component of $f_\theta(v)$. The Neural-Kernel estimator takes the form $\mu_{\mathrm{NK}}(v) = \sum_{j=1}^M f_\theta(v)_j\phi(\widetilde{y}_j)$. We then solve the following variant of the least squares problem \eqref{eq:generic_loss}:
\begin{equation*}
     \argmin_{\theta \in \Theta} \frac{1}{n}\sum_{i=1}^n 
    \left\lVert  \sum_{j=1}^M f_{\theta}(V_{1i})_j\phi(\widetilde{y}_j) - \widehat\xi(Z_{1i})\right\rVert^2_{\gH_{\gY}}.
\end{equation*}
Expanding the squared norm using the reproducing property yields the following loss:
\begin{align}\label{eq:NK_loss}
    \widehat\gL_{\mathrm{NK}}(\theta) = \frac{1}{n} \sum_{i=1}^n \left[  f_\theta(V_{1i})^\top \rmK_{M}  f_\theta(V_{1i})  - 2  f_\theta(V_{1i})^\top \mathbf{b}_i \right],
\end{align}
where $\rmK_{M} = (k_\gY(\widetilde{y}_j, \widetilde{y}_l))_{j,l=1}^M  \in \mathbb{R}^{M \times M}$, and $\mathbf{b}_i = (\langle \phi(\widetilde{y}_1), \widehat{\xi}(Z_{1i}) \rangle_{\gH_\gY},\ldots, \langle \phi(\widetilde{y}_M), \widehat{\xi}(Z_{1i}) \rangle_{\gH_\gY})^\top \in \R^M$.

Now suppose that the first-stage CME estimator $\widehat\mu_{0}$ is also a Neural-Kernel estimator \emph{with the same grid points}. In other words, $\widehat\mu_{0}(x) = \sum_{j=1}^M g_{\widehat\theta_0}(x)_j\phi(\widetilde{y}_j)$ where $g_{\widehat\theta_0}$ is the learned network from the first stage. Plugging in the pseudo-outcome formula from \eqref{eq:pseudo_outcome} in the definition of $\mathbf{b}_i$, we obtain:
\begin{align*}
    \mathbf{b}_i &= 
    \frac{A_{1i}}{\widehat{\pi}(X_{1i})} \rmk_i + \left(1 - \frac{A_{1i}}{\widehat{\pi}(X_{1i})}\right) \rmK_{M} g_{\widehat\theta_0}(X_{1i}),
\end{align*}
where $\rmk_i = (k_\gY(\widetilde{y}_1, Y_{1i}), \ldots, k_\gY(\widetilde{y}_M, Y_{1i}))^\top$.
Substituting this expression into \eqref{eq:NK_loss} yields:
\begin{align*}
    \widehat\gL_{\mathrm{NK}}(\theta) &= \frac{1}{n} \sum_{i=1}^n \Bigg[  f_\theta(V_{1i})^\top \rmK_{M}  f_\theta(V_{1i})  \\
    &\quad - 2\frac{A_{1i}}{\widehat{\pi}(X_{1i})} f_\theta(V_{1i})^\top \rmk_i - 2\left(1 - \frac{A_{1i}}{\widehat{\pi}(X_{1i})}\right) f_\theta(V_{1i})^\top \rmK_{M} g_{\widehat\theta_0}(X_{1i}) \Bigg].
\end{align*}
Hence, we can compute the loss in \eqref{eq:NK_loss} solely in terms of the Gram matrix of the grid points and the neural network. Letting $f_{\widehat\theta}$ be a neural network obtained by optimizing the loss with gradient descent. The final estimator is given by:
\begin{equation*}\label{eq:NK_final}
    \widehat{\mu}_{\mathrm{NK}}(v) = \sum_{j=1}^{M} f_{\widehat\theta}(v)_j \phi(\widetilde{y}_{j}).
\end{equation*}

\textbf{Choosing an estimator.} The choice of estimator is mainly determined by sample size. The Ridge Regression estimator becomes computationally intractable for large $n$ due to $O(n^3)$ matrix inversion costs, making the Deep Feature and Neural-Kernel estimators preferable in such regimes. Nonetheless, the Ridge Regression estimator has the advantages of having only two tuning parameters and generally smaller variance compared to the neural estimators.

Between the two neural estimators, the Deep Feature estimator typically exhibits lower variance due to the regularization, while the Neural-Kernel estimator can achieve comparable variance reduction through weight regularization or early stopping. For both methods, we recommend using cross-validation to select the regularization parameter $\lambda_1$.

\section{Theoretical Analysis}

We establish the convergence rates of our proposed estimators in the case that the conditional density of $Y^1$ given $V$ exists. In particular, we derive finite-sample learning rates that depend on the sample size $n$, the feature map dimension $M$, the variable dimensions $d_x$, $d_v$, and $d_y$, as well as the smoothness of the conditional density and the kernel function. Here, the smoothness is measured in \emph{Sobolev spaces}, denoted by $W^{s,q}$. For specific $s>0$ and $q \ge 1$, $W^{s,q}$ is the space of functions whose derivatives up to order $s$ has bounded $L^q$-norms. We provide necessary background on Sobolev spaces in Appendix~\ref{app:sobolev}.

In the following results, we denote by $\fp^1(y\vert v)$ the conditional density of $Y^1$ given $V=v$. We use $A \lesssim B$ and $A \asymp B$ when $A \le cB$ and $c_1B \le A \le c_2B$, respectively, for some constants $c,c_1,c_2>0$ independent of $n$. The norm $\norm{\cdot}$ is the standard Euclidean norm.

\subsection{Assumptions}

Here are assumptions required for our theoretical analysis:

\begin{assumption} \label{ass:domain}
    $Y^1$ and $V$ are continuous random variables. The marginal distribution of $V$ is strictly positive on $\gV$. The domains $\gY \subset \R^{d_y}$, $\gX \subset \R^{d_x}$ and $\gV \subset \R^{d_v}$ are non-empty, bounded sets with Lipschitz boundaries and satisfy a uniform interior cone condition (see Definition \ref{def:cone_condition}).
\end{assumption}
The assumption on $Y^1$ and $V$ ensures the existence of the conditional density $\fp^1(y\vert v)$. The assumption on the domains is often required for analysis of Sobolev spaces (see e.g. \cite[Chapter 5]{Evans2010}). It is satisfied by any ``well-behaved'' domain that does not have any sharp cusps, such as balls or cubes.

\begin{assumption} \label{ass:kernel}
    The kernel $k_\gY(y, y') = \varphi(y-y')$ is translation invariant. Its Fourier transform $\widehat{\varphi}(\omega)$ satisfies:
    \[ \widehat{\varphi}(\omega) \lesssim (1 + \norm{\omega}^{2})^{-\tau}, \]
    for some $\tau > d_y/2$. This implies that $\gH_\gY$ is norm-equivalent to the Sobolev space $W^{\tau, 2}(\gY)$.
\end{assumption}
The translation invariance of the kernel allows us to analyze the smoothness of the kernel and the functions in the RKHS via their Fourier transforms (cf. Proposition \ref{prop:kernelinv}). The Gaussian kernel, for example, satisfies this assumption for all $\tau > 0$. The Mat\'ern kernel on $\R^{d_y}$ with smoothness parameter $\alpha$ satisfies the assumption with $\tau = \alpha + d_y/2$.

The next two assumptions concern the smoothness of the conditional density function of $Y^1$ given $V=v$, denoted by $\fp^1(y\vert v)$: 
\begin{assumption} \label{ass:target}
    For every $v \in \gV$, the density function $y \mapsto \fp^1(y|v)$ belongs to a Sobolev space $W^{s,q}(\gY)$ for some $s \in \mathbb{N}_0$ and $q \ge 1$. Moreover,
    \[ \E  \left[ \norm{\fp^1(\cdot|V)}_{W^{s,q}(\gY)}^{q} \right] < \infty. \]
\end{assumption}

\begin{assumption} \label{ass:smooth_density}
    For every outcome $y \in \gY$, the function $v \mapsto \fp^1(y|v)$ belongs to a Sobolev space $W^{r,2}(\R^{\gV})$ for some $r \in \mathbb{N}_0$. Moreover,
    \[ \int_\gY \norm{\fp^1(y|\cdot)}_{W^{r,2}(\R^{\gV})}^{2} \, \text{d}y < \infty. \]
\end{assumption}
These assumptions quantify the smoothness of the conditional density $\fp^1$ via the Sobolev indices $s$, $r$, and $q$. As will be shown in our main results, these indices appear explicitly in our convergence rate bounds, indicating that smoother densities lead to faster rates.
\begin{remark}
One might wonder whether the smoothness of $\fp^1(y\vert v)$ with respect to $y$ alone suffices to ensure convergence. \cite[Theorem 2.2]{Li2022b} shows that the answer is negative by providing a counterexample of an unestimable conditional density that is only H\"older smooth in $y$ but not in $v$.
\end{remark}

\subsection{Convergence Rates}

Our convergence rates, expressed as the mean-squared error in the RKHS norm, are bounded by the risks of the first-stage nuisance estimators and the second-stage estimator. For the first-stage, we define the risks of the propensity score and the CME estimators as:
\begin{align*}
    \gR^{2}_{\pi}(\widehat\pi) \coloneqq \E\left[(\widehat{\pi}(X) - \pi(X))^{2}\right], \qquad \gR^{2}_{\mu_{0}}(\widehat\mu_{0}) \coloneqq \E\left[\norm{\widehat\mu_{0}(X) - \mu_{0}(X)}_{\gH_\gY}^{2}\right].
\end{align*}
For the second-stage, we define the risk of the $\gH_\gY$-valued regression of the pseudo-outcome:
\begin{equation*}
    \gR^{2}_{\widehat{\xi}}(\widehat\mu_{Y^1\vert V}) \coloneqq 
    \E\left[ \norm{\widehat\mu_{Y^1|V}(V) - \E[\widehat\xi(Z)|V]}^{2}_{\gH_\gY}\right].
\end{equation*}

\subsubsection{Meta-Estimator}
We start with the convergence rate for the meta-estimator:
\mygraybox{
\begin{theorem}[Meta-Estimator Rate]\label{thm:ECME}
    Suppose that Assumptions~\ref{ass:unconfound} and \ref{ass:boundk}  holds. In addition, assume that $\E \left[\norm{\widehat\mu_{0}(X)}^{2}_{\gH_{\gY}}\right] < \infty$. Then the estimation error of the meta-estimator $\widehat\mu_{Y^1\vert V}$ from Algorithm~\ref{alg:meta_ccme} satisfies:
    \begin{align}\label{eq:meta_bound}
        &\E\left[ \norm{\widehat\mu_{Y^1\vert V}(V) - \mu_{Y^1\vert V}(V)}^{2}_{\gH_\gY}\right] \lesssim \gR^{2}_{\widehat\xi}(\widehat\mu_{Y^1\vert V}) 
        + \min\left\{\gR^{2}_{\pi}(\widehat\pi) , \gR^{2}_{\mu_{0}}(\widehat\mu_{0})\right\}.
    \end{align}
\end{theorem}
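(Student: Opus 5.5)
The plan is to split the error into a second-stage regression error and a first-stage bias, then show the bias is controlled by a product of nuisance errors, which in turn is bounded by the minimum of the two risks. Throughout, I condition on the first fold $\gD_0$, so that $\widehat\pi$ and $\widehat\mu_{0}$ are fixed functions, and take expectations over an independent draw $Z=(X,A,Y)$. Define $\widetilde\mu(v) \coloneqq \E[\widehat\xi(Z)\mid V=v]$. By the triangle inequality and $(a+b)^2\le 2a^2+2b^2$,
\begin{equation*}
\E\norm{\widehat\mu_{Y^1\vert V}(V)-\mu_{Y^1\vert V}(V)}^2_{\gH_\gY}\le 2\,\gR^2_{\widehat\xi}(\widehat\mu_{Y^1\vert V})+2\,\E\norm{\widetilde\mu(V)-\mu_{Y^1\vert V}(V)}^2_{\gH_\gY}.
\end{equation*}
The first term is exactly the second-stage risk. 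It remains to bound the second (bias) term.

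\textbf{Step 1: express the bias through $\xi$.} By Proposition~\ref{prop:identification}, $\mu_{Y^1\vert V}(V)=\E[\xi(Z)\mid V]$, so the bias equals $\E[\widehat\xi(Z)-\xi(Z)\mid V]$. The Bochner integrals are well defined because $k_\gY$ is bounded (Assumption~\ref{ass:boundk}) and $\E\norm{\widehat\mu_{0}(X)}^2_{\gH_\gY}<\infty$. Since $V=\eta(X)$, the tower property gives
\begin{equation*}
\E[\widehat\xi-\xi\mid V]=\E\bigl[\E[\widehat\xi-\xi\mid X]\bigm| V\bigr].
\end{equation*}

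\textbf{Step 2: compute the inner conditional expectation.} Using $\E[A\phi(Y)\mid X]=\pi(X)\mu_{0}(X)$ and $\E[A\mid X]=\pi(X)$, a direct computation yields the standard product form
\begin{equation*}
\E[\widehat\xi(Z)-\xi(Z)\mid X]=\Bigl(\frac{\pi(X)}{\widehat\pi(X)}-1\Bigr)\bigl(\mu_{0}(X)-\widehat\mu_{0}(X)\bigr)=\frac{\pi(X)-\widehat\pi(X)}{\widehat\pi(X)}\bigl(\mu_{0}(X)-\widehat\mu_{0}(X)\bigr).
\end{equation*}

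\textbf{Step 3: bound the bias.} By Jensen's inequality for Hilbert-valued conditional expectations ($\norm{\E[W\mid V]}^2\le\E[\norm{W}^2\mid V]$), the bias term is at most
\begin{equation*}
\E\Bigl[\frac{(\pi(X)-\widehat\pi(X))^2}{\widehat\pi(X)^2}\norm{\mu_{0}(X)-\widehat\mu_{0}(X)}^2_{\gH_\gY}\Bigr].
\end{equation*}
To turn this product into a minimum, I bound one factor uniformly in each of two ways.
\begin{itemize}
\item Since $|\pi-\widehat\pi|\le 1$, the bias is at most $\widehat\varepsilon^{-2}\,\gR^2_{\mu_{0}}(\widehat\mu_{0})$, where $\widehat\varepsilon$ is a lower bound on $\widehat\pi$.
\item Bounding $\norm{\mu_{0}-\widehat\mu_{0}}_{\gH_\gY}$ uniformly gives a bound of order $\gR^2_\pi(\widehat\pi)$. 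Here $\norm{\mu_{0}(x)}^2\le\sup_y k_\gY(y,y)$ by Assumption~\ref{ass:boundk}.
\end{itemize}
Taking the smaller of the two bounds gives the $\min$ in \eqref{eq:meta_bound}. A final expectation over $\gD_0$ (or simply conditioning throughout) completes the argument.

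\textbf{Main obstacle.} Step 3 needs two uniform bounds that the statement does not explicitly supply. The first is that $\widehat\pi$ is bounded away from zero; I will take this as implicit in $\widehat\pi:\gX\to(0,1)$, for example by clipping at $\varepsilon$. The second is a sup-norm bound on $\widehat\mu_{0}$; the stated hypothesis $\E\norm{\widehat\mu_{0}(X)}^2<\infty$ alone is not enough. I would first try to obtain the second bound from the structure of the estimators, since all three of them produce uniformly bounded $\widehat\mu_{0}$ (for example, ridge regression has bounded coefficients for fixed $\lambda$). If that route fails, I would absorb $\sup_x\norm{\widehat\mu_{0}(x)}_{\gH_\gY}$ into the constant hidden in $\lesssim$.
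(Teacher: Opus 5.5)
Your proposal is correct and follows the paper's proof essentially step for step. The paper uses the same split into $2\gR^2_{\widehat\xi}(\widehat\mu_{Y^1\vert V})$ plus the bias $\E\norm{\E[\widehat\xi(Z)-\xi(Z)\mid V,\gD_0]}^2_{\gH_\gY}$, the same reduction (by conditioning on $X$) to the product $\widehat\pi(X)^{-1}(\pi(X)-\widehat\pi(X))(\mu_{0}(X)-\widehat\mu_{0}(X))$, and the same device of bounding one factor uniformly in each of two ways to obtain the minimum. The only difference is that the paper applies a conditional Cauchy--Schwarz where you bound pointwise, which is immaterial.

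The obstacle you flag is genuine, and the paper's argument has it too. The supporting Lemma~\ref{lem:nuisance_bdd} assumes $\widehat\pi\in(\epsilon,1)$, which is not among the theorem's hypotheses. Its $\gR^2_\pi$ branch needs $\E[\norm{\widehat\mu_{0}(X)-\mu_{0}(X)}^2_{\gH_\gY}\mid V,\gD_0]$ to be bounded uniformly, and this does not follow from $\E\norm{\widehat\mu_{0}(X)}^2_{\gH_\gY}<\infty$. So your resolution is the right one: use the uniform sup bounds on $\widehat\mu_{0}$ that the Ridge Regression, Deep Feature and Neural-Kernel sections actually verify.
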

}
Here, the assumption $\E \left[\norm{\widehat\mu_{0}(X)}^{2}_{\gH_{\gY}}\right] < \infty$ is mild as there are many ways to control the norm of the CME estimator, such as regularization or truncation of the model's parameters.

We can see that if the second-stage estimator is consistent and \emph{either one} of the nuisance estimators is consistent, then the meta-estimator is also consistent; this establishes the double robustness property of the meta-estimator.

\begin{remark}
	By modifying the proof, the last term in \eqref{eq:meta_bound} can be replaced by the product of $L^4$-norms of the nuisance estimation errors: $\E\left[(\widehat{\pi}(X) - \pi(X))^{4}\right]^{1/2}\cdot \E\left[\norm{\widehat\mu_{0}(X) - \mu_{0}(X)}_{\gH_\gY}^{4}\right]^{1/2}.$ However, convergence rates are often derived in terms of $L^2$-norms. One way to address this mismatch is by assuming that the ratio between the $L^4$ norm and the $L^2$ norm is bounded (see \cite[Assumption 8]{Foster2023}). We instead exploit the boundedness of the kernels to avoid this complication.
\end{remark}

For the rest of the analysis, we derive finite-sample convergence rates for all three concrete estimators. To maintain generality, our analysis specifies only the second-stage estimator (one of our three CME-based methods), while allowing arbitrary first-stage estimators for nuisance functions. As our rates for these estimators are similar in form, we will state the rates first and then discuss their implications at the end of this section.

\subsubsection{Ridge Regression Estimator}

As the second stage of the Ridge Regression estimator is the KRR estimator, we can use one of the existing results e.g. \cite{Talwai2022,Li2022,Li2024} to establish the rate. In particular, as the estimator \eqref{eq:emp_ccme_form} is calculated based on the kernel function $k_\gV$ on $\gV$, we impose an assumption on this function instead of on $k_\gY$:

\begin{customassumption}{4$\mathbf{'}$} \label{ass:kernel1}
    The kernel of the conditioning variable $k_\gV(v, v') = \varphi(v-v')$ is translation invariant. Its Fourier transform $\widehat{\varphi}(\omega)$ satisfies:
    \[ \widehat{\varphi}(\omega) \asymp (1 + \norm{\omega}^{2})^{-\tau}.  \]
    This implies that $\gH_\gV$ is norm-equivalent to the Sobolev space $W^{\tau, 2}(\gV)$.
\end{customassumption}

We now state our bound for the convergence rate of the Ridge Regression estimator:

\mygraybox{
\begin{theorem}[Ridge Regression Rate]\label{thm:plugin_rate}
Let Assumptions \ref{ass:unconfound}, \ref{ass:boundk}, \ref{ass:domain}, \ref{ass:kernel1}, and \ref{ass:smooth_density} hold with $\tau > \max\{r/2, d_v/2\}$. 

For the second-stage estimator $\widehat{\mu}_{\mathrm{RR}}$, choose regularization parameter $\lambda_1 \asymp n^{-\frac{\tau}{r + d_v/2}}$. Then the estimator achieves the following upper bound:
\begin{align}
    &\E\left[ \|\widehat{\mu}_{\mathrm{RR}}(V) - \mu_{Y^1\vert V}(V)\|_{\gH_\gY}^2 \right]  \lesssim n^{-\frac{2r}{2r + d_v}} 
    + \min\left\{\gR^{2}_{\pi}(\widehat\pi), \gR^{2}_{\mu_{0}}(\widehat\mu_{0})\right\}.
\end{align}
\end{theorem}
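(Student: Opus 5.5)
The plan is to reduce everything to Theorem~\ref{thm:ECME}. That theorem already bounds the error of the meta-estimator by $\gR^{2}_{\widehat\xi}(\widehat\mu_{\mathrm{RR}})$ plus the minimum of the two nuisance risks. So it suffices to show that the kernel ridge regression of the pseudo-outcomes $\widehat\xi(Z_{1i})$ on $V_{1i}$ satisfies
\[
\gR^{2}_{\widehat\xi}(\widehat\mu_{\mathrm{RR}}) \lesssim n^{-\frac{2r}{2r+d_v}}.
\]
We work conditionally on $\gD_0$. Since $\widehat\pi$ and $\widehat\mu_{0}$ are fixed given $\gD_0$, the pairs $(V_{1i},\widehat\xi(Z_{1i}))$ are i.i.d. Hence the second stage is a standard vector-valued KRR problem with operator-valued kernel $k_\gV\,\mathrm{Id}_{\gH_\gY}$. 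The regression target is $\bar\mu(v):=\E[\widehat\xi(Z)\mid V=v,\gD_0]$, and the task is to invoke a known vector-valued KRR rate, e.g.\ \cite{Li2024} (or \cite{Talwai2022,Li2022}), in the Sobolev setting of Assumption~\ref{ass:kernel1}.

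The ingredients such a result needs are the following.
\begin{itemize}
\item[(a)] \emph{Input kernel.} $k_\gV$ is bounded, and $\gH_\gV\simeq W^{\tau,2}(\gV)$ with $\tau>d_v/2$. By Assumption~\ref{ass:domain} (bounded Lipschitz domain, positive marginal density), this gives eigenvalue decay $\mu_i\asymp i^{-2\tau/d_v}$ and an embedding property for every $\alpha>d_v/(2\tau)$.
\item[(b)] \emph{Noise.} A moment or boundedness condition on the noise $\widehat\xi(Z)-\bar\mu(V)$.
\item[(c)] \emph{Source condition.} $\bar\mu$ lies in the interpolation space $[\gH_\gV]^{\beta}\otimes\gH_\gY$ with $\beta=r/\tau$, equivalently in $W^{r,2}(\gV;\gH_\gY)$.
\end{itemize}
The choice $\lambda_1\asymp n^{-\tau/(r+d_v/2)}=n^{-1/(\beta+p)}$, with $p=d_v/(2\tau)$, then gives the rate $n^{-\beta/(\beta+p)}=n^{-2r/(2r+d_v)}$ in $L^2(P_V;\gH_\gY)$. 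The condition $\tau>r/2$ ensures $\beta<2$, so we stay within the saturation range of ridge regression.

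For (b), we use Assumptions~\ref{ass:unconfound}(ii) and \ref{ass:boundk} together with the fact that $\widehat\pi$ is bounded away from zero, which we take as part of the construction (e.g.\ by clipping to $[\varepsilon,1-\varepsilon]$). Then $\norm{\widehat\xi(Z)}_{\gH_\gY}\lesssim \norm{\phi(Y)}+\norm{\widehat\mu_0(X)}$. The hypothesis $\E\norm{\widehat\mu_0(X)}^2<\infty$, or a truncation of $\widehat\mu_0$, then gives the second-moment and Bernstein-type conditions. If the cited theorem requires subexponential tails, I would truncate $\widehat\mu_0$ at the scale $\sup_y k_\gY(y,y)^{1/2}$; this only decreases $\gR_{\mu_0}$ because $\norm{\mu_0}$ obeys that bound.

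The main obstacle is (c). Assumption~\ref{ass:smooth_density} controls the smoothness in $v$ of $\mu_{Y^1\mid V}(v)=\int\phi(y)\fp^1(y\mid v)\,dy$, and not of $\bar\mu$. I would handle this by decomposing
\[
\widehat\mu_{\mathrm{RR}}-\mu_{Y^1\mid V} = \bigl(\widehat\mu_{\mathrm{RR}}-\widetilde\mu\bigr) + \bigl(\widetilde\mu-\mu_{Y^1\mid V}\bigr),
\]
where $\widetilde\mu$ is the KRR estimator applied to the oracle target $\mu_{Y^1\mid V}$ with the same design, i.e.\ the same $\lambda_1$ and data. The two pieces are bounded as follows.
\begin{itemize}
\item \emph{Oracle piece.} The smoothness of $\mu_{Y^1\mid V}$ follows from differentiating under the integral:
\[
\norm{\partial_v^\alpha\mu_{Y^1\mid V}(v)}_{\gH_\gY}\le \sup_{y}\norm{\phi(y)}\int\abs{\partial_v^\alpha\fp^1(y\mid v)}\,dy.
\]
Combining Cauchy--Schwarz on the bounded set $\gY$ with Assumption~\ref{ass:smooth_density} gives $\mu_{Y^1\mid V}\in W^{r,2}(\gV;\gH_\gY)$. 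The vector-valued KRR rate then yields $n^{-2r/(2r+d_v)}$ for this piece.
\item \emph{Residual piece.} The difference between the two targets is the bias function $b(v)=\bar\mu(v)-\mu_{Y^1\mid V}(v)$. By Proposition~\ref{prop:identification} and the same computation as in Theorem~\ref{thm:ECME}, $\E\norm{b(V)}^2\lesssim\min\{\gR^2_\pi,\gR^2_{\mu_0}\}$. The ridge smoother $\boldsymbol{\cdot}\,(\rmK_V+n\lambda_1\mathbf I_n)^{-1}\rmk_V(\cdot)$ is a linear operator whose $L^2(P_V)$ operator norm is bounded by a constant with high probability, by standard concentration of the empirical covariance at $\lambda_1\gtrsim n^{-1/(1+p)}$. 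Hence its effect on $b$ and on the noise is bounded by $\E\norm{b(V)}^2$ plus the usual variance term.
\end{itemize}
Adding the two pieces, and averaging over $\gD_0$, gives the stated bound.
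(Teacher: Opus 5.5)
Your proof is correct, but it handles the second-stage error differently from the paper. The paper bounds $\norm{\widehat\mu_0(X)}_{\gH_\gY}\lesssim\lambda_0^{-1}$ for a KRR first stage and invokes Theorem~\ref{thm:ECME}. It then applies \cite[Corollary~2]{Li2024} directly to the pseudo-outcome regression to get $\gR^2_{\widehat\xi}(\widehat\mu_{\mathrm{RR}})\lesssim n^{-2r/(2r+d_v)}$. That last step tacitly assumes the source condition for the actual regression target $\bar\mu=\E[\widehat\xi(Z)\mid V,\gD_0]$, uniformly in $\gD_0$. Assumption~\ref{ass:smooth_density}, however, only gives smoothness of $\mu_{Y^1\vert V}$, which is exactly the obstacle you flag in (c).

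Your fix exploits the linearity of the ridge smoother $S$. You split the responses into an oracle part, the bias $b(V_{1i})$, and conditionally centred noise. You verify $\mu_{Y^1\vert V}\in W^{r,2}(\gV;\gH_\gY)$ directly, and you control $Sb$ by the $L^2$-stability of $S$ on the event where the empirical covariance concentrates. The paper's route buys modularity: any second-stage learner with a rate for its own target plugs into Theorem~\ref{thm:ECME}. Yours buys a proof that runs under the stated assumptions, at the price of being specific to linear smoothers and needing a concentration lemma.

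Some clean-up is needed.
\begin{itemize}
\item Your final argument never actually uses Theorem~\ref{thm:ECME}. It yields $\gR^2_{\widehat\xi}(\widehat\mu_{\mathrm{RR}})\lesssim n^{-2r/(2r+d_v)}+\E\norm{b(V)}^2_{\gH_\gY}$, not the bound your opening paragraph says suffices. Present it as a direct bound on the total error instead.
\item The concentration step needs $n\lambda_1^{\alpha}\gtrsim\log n$ for some $\alpha\in(p,\beta+p)$. Your $\lambda_1$ satisfies this via the embedding property in (a). The condition $\lambda_1\gtrsim n^{-1/(1+p)}$ that you wrote fails whenever $r<\tau$.
\item On the failure event, use the crude bound $\norm{Sb}^2_{L^2(P_V)}\lesssim\lambda_1^{-1}\frac1n\sum_i\norm{b(V_{1i})}^2_{\gH_\gY}$, with failure probability $n^{-c}$ for $c$ large.
\item The $\gR^2_\pi$ branch of $\E\norm{b(V)}^2_{\gH_\gY}\lesssim\min\{\gR^2_\pi,\gR^2_{\mu_0}\}$ needs $\widehat\mu_0$ to be uniformly bounded, not merely square-integrable. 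Your truncation supplies this, as does the KRR bound $\norm{\widehat\mu_0(x)}_{\gH_\gY}\lesssim\lambda_0^{-1}$.
\end{itemize}
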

}

\subsubsection{Deep Feature Estimator}

The convergence rate of the Deep Feature estimator depends on the tail behavior of $\widehat\varphi$. We distinguish two cases: when $\widehat\varphi$ decays polynomially, the rate depends explicitly on the decay parameters, whereas for Gaussian kernels, the rate nearly matches the parametric rate up to logarithmic factors.

\mygraybox{
\begin{theorem}[Deep Feature Estimator Rate]\label{thm:DF_upper_bdd}
Let Assumptions \ref{ass:unconfound}, \ref{ass:boundk}, \ref{ass:domain}, \ref{ass:target}, and \ref{ass:smooth_density} hold. Consider the Deep Feature estimator whose neural network has a total number of weights $W$ and depth $L$. Let $b = 2r/d_{v}$.
\begin{enumerate}
\item If Assumption \ref{ass:kernel} holds, define $a = 2(s+\tau)/d_{y} - 2(1/q - 1/2)_{+}$. By choosing $M \asymp n^{\frac{b}{(a+2)(b+1) - 1}}$ and $WL \asymp n^{\frac{a+1}{(a+2)(b+1)-1}}$, the estimator achieves the following bound:
\begin{align*}
    &\E[\norm{\widehat\mu_{\mathrm{DF}}(V) - \mu_{Y^1\vert V}(V)}^{2}_{\gH_{\gY}}]  \lesssim  n^{-\frac{ab}{(a+2)(b+1) - 1}}(\log n)^{2}  + \min\left\{\gR^{2}_{\pi}(\widehat\pi), \gR^{2}_{\mu_{0}}(\widehat\mu_{0})\right\}.
\end{align*}

\item If $k_\gY$ is a Gaussian kernel, by choosing $M \asymp (\log n)^{d_y/2}$ and $WL \asymp n^{\frac{1}{b+1}}$, the estimator achieves the following bound:
\begin{align*}
    &\E[\norm{\widehat\mu_{\mathrm{DF}}(V) - \mu_{Y^1\vert V}(V)}^{2}_{\gH_{\gY}}] \lesssim n^{-\frac{2r}{2r+d_v}} (\log n)^{\frac{2b d_y + d_y + 4b}{2(b+1)}}  + \min\left\{\gR^{2}_{\pi}(\widehat\pi), \gR^{2}_{\mu_{0}}(\widehat\mu_{0})\right\}.
\end{align*}
\end{enumerate}
\end{theorem}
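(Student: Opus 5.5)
The bound follows from Theorem~\ref{thm:ECME}. It reduces the problem to controlling the second-stage risk $\gR^{2}_{\widehat\xi}(\widehat\mu_{\mathrm{DF}})$, the error of regressing the pseudo-outcome $\widehat\xi(Z)$ onto $V$ with the Deep Feature class. The plan is to run an approximation--estimation decomposition for this least-squares problem, conditional on $\gD_0$, and then balance terms by choosing $M$ and $WL$. Conditional on $\gD_0$, the pseudo-outcomes are i.i.d.\ and uniformly bounded in $\gH_\gY$. This uses positivity (we assume $\widehat\pi$ is clipped away from $0$), the boundedness of $k_\gY$ from Assumption~\ref{ass:boundk}, and the second-moment or truncation control of $\widehat\mu_0$. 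So standard bounded-response nonparametric regression bounds apply. The regression function is $\E[\widehat\xi(Z)\mid V]$, which equals $\mu_{Y^1|V}$ plus a bias term. That bias is already absorbed into the $\min\{\gR^2_\pi,\gR^2_{\mu_0}\}$ term by Theorem~\ref{thm:ECME}. It therefore suffices to approximate $\mu_{Y^1|V}(v)=\int \phi(y)\fp^1(y|v)\,dy$ by functions of the form $C\psi_\theta(v)$.

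\textbf{Approximation step (two layers).} First, approximate in the $y$-direction. Choose $M$ fixed elements $e_1,\dots,e_M\in\gH_\gY$, for instance kernel translates at grid or quasi-uniform points, or a truncated eigenbasis. Approximate $\mu_{Y^1|V}(v)$ by $\sum_j c_j(v) e_j$, where $c_j(v)$ is a linear functional of $\fp^1(\cdot|v)$. The $\gH_\gY$-error equals the squared RKHS norm of the residual of the density's embedding. By Assumptions~\ref{ass:kernel} and \ref{ass:target}, the embedding of a $W^{s,q}$ density lies in a Sobolev space of order $s+\tau$ (up to the $(1/q-1/2)_+$ embedding loss). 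Kolmogorov $n$-width bounds then give a squared error of order $M^{-a}$ with $a=2(s+\tau)/d_y-2(1/q-1/2)_+$. In the Gaussian case, exponential eigenvalue decay instead gives an error of order $\exp(-cM^{2/d_y})$, which explains the choice $M\asymp(\log n)^{d_y/2}$. Second, approximate in the $v$-direction. By Assumption~\ref{ass:smooth_density}, each coefficient $c_j(\cdot)$ is $W^{r,2}$. Use ReLU approximation results in the style of Yarotsky and Siegel: a network with $WL$ parameters times depth approximates $M$ such functions jointly with total squared $L^2$ error about $M(WL/M)^{-2r/d_v}$, up to logs. The linear map $C$ then reassembles the coefficients. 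Its Hilbert--Schmidt norm stays bounded by the Gram structure of the $e_j$.

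\textbf{Estimation step.} Conditional on $\gD_0$, bound the excess risk of the regularized least-squares estimator by approximation error plus complexity. The complexity is governed by the pseudo-dimension of the class $\{C\psi_\theta\}$, which is $O(WL\log W)$ for the network together with $O(M)$ for the linear head, divided by $n$, with a $(\log n)^2$ factor from covering arguments (Györfi et al.\ / Bartlett et al.). Because the output is $\gH_\gY$-valued, I would reduce to scalar regression by projecting onto the $M$-dimensional range of $C$. Alternatively, I would use a vector-contraction inequality. The ridge term $\lambda_1\|C\|_{\mathrm{HS}}^2$ with small $\lambda_1$ contributes only lower-order error.

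\textbf{Balancing.} Combining the bounds gives
\[
\gR^2_{\widehat\xi}\lesssim M^{-a}+M^{1+b}(WL)^{-b}+\frac{WL\,(\log n)^2}{n}.
\]
Solving $M^{-a}=M^{1+b}(WL)^{-b}=WL/n$ yields exactly the stated exponents $M\asymp n^{b/((a+2)(b+1)-1)}$, $WL\asymp n^{(a+1)/((a+2)(b+1)-1)}$, and rate $n^{-ab/((a+2)(b+1)-1)}$. In the Gaussian case, $M$ is polylogarithmic and balancing $M(WL)^{-b}$ against $WL/n$ gives $n^{-b/(b+1)}=n^{-2r/(2r+d_v)}$ with the stated log power. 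I expect the main obstacle to be the approximation step: making the $y$-direction $n$-width bound rigorous uniformly in $v$ under only the averaged moment condition of Assumption~\ref{ass:target}, and keeping the $M$-dependence of the simultaneous network approximation at the $M^{1+b}$ order. For the uniformity issue, I would first try taking expectations over $V$ inside the $n$-width bound.
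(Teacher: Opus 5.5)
Your overall structure matches the paper's four-term decomposition: a second-stage oracle inequality plus the doubly robust nuisance term, a fixed $M$-dimensional truncation in $y$ costing $M^{-a}$, ReLU approximation of the coefficient functions, and a complexity term. The last step, however, contains a genuine error. With $M=n^{\beta}$ and $WL=n^{\gamma}$, the equations $M^{-a}=M^{1+b}(WL)^{-b}=WL/n$ give $\beta=b/((a+1)(b+1))$, $\gamma=(a+b+1)/((a+1)(b+1))$ and rate $n^{-ab/((a+1)(b+1))}$. These are not the stated exponents, whose common denominator is $D=(a+2)(b+1)-1$. Since the theorem fixes the tuning, what matters is your bound at $M\asymp n^{b/D}$, $WL\asymp n^{(a+1)/D}$. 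There your middle term is $M^{1+b}(WL)^{-b}\asymp n^{-b(a-b)/D}$, which is larger than the target $n^{-ab/D}$ by a factor $n^{b^2/D}$; for $a=b=1$ it does not even tend to zero. So even if every ingredient is granted, the stated bound does not follow. In the Gaussian case you also silently replace $M^{1+b}$ by $M$; with your own accounting the polylogarithmic factor changes as well.

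The stated exponents are exactly the solution of $M^{-a}=M(WL)^{-b}=M^{2}WL/n$. They therefore require an approximation error linear in $M$ and an estimation error carrying $M^{2}$, which are the paper's terms.

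\begin{itemize}
\item \textbf{Approximation term.} The paper fixes the orthonormal Fourier-annulus basis $\{e'_j\}$ and uses $C_Mw=\sum_{j\le M}w_je'_j$. It shows each coefficient satisfies $c_j\in W^{r,2}(\gV)$, and lets each coordinate attain the full rate $(WL)^{-r/d_v}$ instead of splitting the budget as you do. Orthonormality then gives $M(WL)^{-b}$.
\item \textbf{Estimation term.} Lemma~\ref{lem:DFRad} reduces the $\gH_\gY$-valued centered class $C\psi-C^\star\psi^\star$ to $2M$ scalar coordinates. It does this through a $v$-dependent orthonormal basis adapted to $\mathrm{Im}\,C+\mathrm{Im}\,C^\star$, a Maurer-type vector contraction. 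This costs a factor $M$ in the local Rademacher complexity, hence $M^2\delta_n^2$ with $\delta_n^2\asymp WL\log W\log n/n$ in the Foster et al.\ excess-risk bound.
\end{itemize}

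Your estimation claim of $WL(\log n)^2/n$ rests on an ``$O(M)$ pseudo-dimension for the linear head,'' which is not valid as written. $C$ maps into the infinite-dimensional $\gH_\gY$ and its range varies with $C$. So ``projecting onto the range of $C$'' is precisely the nontrivial reduction, and carried out this way it produces the $M$-dependence above. To prove the theorem as stated you need bounds with the paper's $M$-dependence. Your ingredients, even if made rigorous, would support a different result with a different tuning.
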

}

\subsubsection{Neural-Kernel Estimator}

For the Neural-Kernel estimator, we need two additional assumptions regarding the grid points and kernel:
\begin{assumption} \label{ass:grid}
    The points $\{\widetilde{y}_{j}\}_{j=1}^{M}\subset \gY$ form a quasi-uniform grid with fill distance $h \asymp M^{-1/d_y}$, where the fill distance is defined as $h \coloneqq \sup_{y \in \gY} \min_{1 \le j \le M} \norm{y - \widetilde{y}_{j}}$.
\end{assumption}

\begin{assumption} \label{ass:kernel_nk}
    For any $C_y>0$ such that $\gY \subset [-C_y, C_y]^{d_y}$, $\inf_{\norm{\omega} \le 26d_y/C_y} \widehat{\varphi}(\omega)  > 0$.
\end{assumption}
Assumption~\ref{ass:grid} requires the grid points to be roughly uniformly spaced in $\gY$. As the RKHS-norm can be written as the $L^2$-norm weighted by the inverse of $\widehat{\varphi}$ (cf. Proposition \ref{prop:kernelinv}), Assumption~\ref{ass:kernel_nk} ensures that the norm is well-controlled. This condition is satisfied by commonly used kernels, e.g., Gaussian, Mat\'ern, and Laplace kernels, but not by polynomial or linear kernels.

\mygraybox{
\begin{theorem}[Neural-Kernel Estimator Rate]\label{thm:NK_upper_bdd}
Let Assumptions \ref{ass:unconfound}, \ref{ass:boundk}, \ref{ass:domain}, \ref{ass:target}, \ref{ass:smooth_density}, \ref{ass:grid}, and \ref{ass:kernel_nk} hold. Consider the Neural-Kernel estimator whose neural network has a total number of weights $W$ and depth $L$. Let $b \coloneqq 2r/d_{v}$.

Set the bandwidth parameter to $\sigma = M^{-1/d_y}$. For the first-stage nuisance estimator $\widehat\mu_{0}(x) = \sum_{j=1}^M \widehat{g}_j(x)\phi(\widetilde{y}_j)$, assume $\sup_{x \in \gX} \norm{\widehat{g}(x)} < \infty$.

\begin{enumerate}
\item If Assumption \ref{ass:kernel} holds, define $a = 2(s+\tau)/d_{y} - 2(1/q - 1/2)_{+}$. By choosing $M \asymp n^{\frac{b}{(a+2)(b+1) - 1}}$ and network capacity $WL \asymp n^{\frac{a+1}{(a+2)(b+1)-1}}$, the estimator achieves:
\begin{align*}
    &\E[\norm{\widehat\mu_{\mathrm{NK}}(V) - \mu_{Y^1\vert V}(V)}^{2}_{\gH_{\gY}}]  \lesssim  n^{-\frac{ab}{(a+2)(b+1) - 1}}(\log n)^{2}  + \min\left\{\gR^{2}_{\pi}(\widehat\pi), \gR^{2}_{\mu_{0}}(\widehat\mu_{0})\right\}.
\end{align*}

\item If $k_\gY$ is a Gaussian kernel, by choosing $WL \asymp n^{\frac{1}{b+1}}$ and $M \asymp (\log n)^{2d_y}$, the estimator achieves:
\begin{align*}
    &\E[\norm{\widehat\mu_{\mathrm{NK}}(V) - \mu_{Y^1\vert V}(V)}^{2}_{\gH_{\gY}}]  \lesssim n^{-\frac{2r}{2r+d_v}} (\log n)^{\frac{4bd_y + 2d_y + 2b}{b+1}}  + \min\left\{\gR^{2}_{\pi}(\widehat\pi), \gR^{2}_{\mu_{0}}(\widehat\mu_{0})\right\}.
\end{align*}
\end{enumerate}
\end{theorem}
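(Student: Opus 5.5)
Theorem~\ref{thm:ECME} already reduces the total error to $\gR^{2}_{\widehat\xi}(\widehat\mu_{\mathrm{NK}})$ plus the first-stage term $\min\{\gR^{2}_{\pi},\gR^{2}_{\mu_0}\}$. To invoke it, we first verify its hypothesis $\E\|\widehat\mu_0(X)\|^2_{\gH_\gY}<\infty$. This follows from $\sup_x\|\widehat g(x)\|<\infty$ together with boundedness of $k_\gY$, since $\|\widehat\mu_0(x)\|^2=\widehat g(x)^\top\rmK_M\widehat g(x)\le \|\rmK_M\|_{\mathrm{op}}\|\widehat g(x)\|^2$.

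The remaining work is to bound the second-stage risk $\gR^2_{\widehat\xi}$ for the Neural-Kernel regression of the pseudo-outcome on $V$. The pseudo-outcomes are bounded: positivity (with $\widehat\pi$ bounded below, as for the projected targets), bounded $k_\gY$ and bounded $\widehat g$ give this. Consequently the projected responses $\mathbf{b}_i$ and the targets are uniformly bounded, which is what a least-squares generalization analysis needs. Conditional on $\gD_0$, the target $\E[\widehat\xi(Z)\mid V]$ differs from $\mu_{Y^1|V}$ only by the first-stage bias already accounted for. Hence it suffices to bound the risk against $\mu_{Y^1|V}$ plus that bias, i.e.\ to treat the regression as if its target were $\mu_{Y^1|V}(v)=\int \phi(y)\fp^1(y|v)\,dy$.

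I would then use the standard decomposition into an approximation term and an estimation term.

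\textbf{Estimation.} An oracle inequality for empirical risk minimization over deep ReLU networks with bounded outputs (truncated), with pseudo-dimension $\lesssim WL\log W$, gives an estimation error of order $M\cdot WL\log(WL)\log n/n$. The factor $M$ comes from the $M$ outputs, which are measured through $\rmK_M$.

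\textbf{Approximation in $y$.} Replace $\mu_{Y^1|V}(v)$ by a kernel interpolant or quasi-interpolant $\sum_j c_j(v)\phi(\widetilde y_j)$ on the quasi-uniform grid of Assumption~\ref{ass:grid}. Using Assumptions~\ref{ass:target}, \ref{ass:kernel} and the Sobolev sampling inequalities driven by the fill distance $h\asymp M^{-1/d_y}$, the squared RKHS error should be $\lesssim M^{-a}$ with $a=2(s+\tau)/d_y-2(1/q-1/2)_+$. The role of Assumption~\ref{ass:kernel_nk} is to keep the coefficients $c_j(v)$ controlled, via Proposition~\ref{prop:kernelinv}, so that they have bounded size.

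\textbf{Approximation in $v$.} Each coefficient $c_j(\cdot)$ inherits $W^{r,2}$ smoothness from Assumption~\ref{ass:smooth_density}. Approximating it by a ReLU network with $WL$ capacity gives an error of $(WL)^{-2r/d_v}=(WL)^{-b}$ per coordinate, weighted through $\rmK_M$. Summing over the $M$ coordinates gives roughly $M(WL)^{-b}$.

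Balancing $M^{-a}+M(WL)^{-b}+M\,WL\,(\log n)^2/n$ with the stated choices of $M$ and $WL$ gives $n^{-ab/((a+2)(b+1)-1)}(\log n)^2$; I would check the exponents algebraically. In the Gaussian case, with bandwidth $\sigma=M^{-1/d_y}$, the interpolation error decays exponentially in $M^{1/d_y}$ (spectral convergence). Taking $M\asymp(\log n)^{2d_y}$ makes this term negligible, and balancing the other two terms with $WL\asymp n^{1/(b+1)}$ gives $n^{-b/(b+1)}=n^{-2r/(2r+d_v)}$ times polylogarithmic factors.

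The main obstacle is the $y$-approximation step. One must obtain, uniformly in $v$, an RKHS-norm bound for approximating the mean embedding $\int\phi(y)\fp^1(y|v)\,dy$ on the grid. At the same time, the coefficients must stay bounded and inherit smoothness in $v$, because the network must approximate them. The Gaussian case adds a second difficulty: the bandwidth shrinks with $M$, so the constants depend on $\sigma$ and must be tracked carefully. I would attempt this step via Fourier-side bounds in the spirit of Proposition~\ref{prop:kernelinv}, combined with existing results for Neural-Kernel CME estimators in \cite{Shimizu24}.
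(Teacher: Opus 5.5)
Your architecture matches the paper's: statistical, approximation, nuisance and projection terms; $\Pi_M$ as the kernel interpolant on the grid; coefficients $\mathbf{c}(v)=\rmK_M^{-1}\mathbf{b}(v)$ inheriting $W^{r,2}$ regularity; and the same balancing. However, two load-bearing steps are missing.

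First, you never control the spectrum of $\rmK_M$ uniformly in $M$, although your balance silently requires it. Boundedness of $k_\gY$ only gives $\lambda_{\max}(\rmK_M)\le MB_k$, and for a fixed bandwidth the largest eigenvalue really does grow linearly in $M$. That is enough for the finiteness hypothesis of Theorem~\ref{thm:ECME}. It is not enough for the constants hidden in ``$\lesssim$'', which must not grow with $M$ because $M$ grows with $n$. These include the uniform bound on $\|\widehat\xi(Z)\|_{\gH_\gY}$ that your oracle inequality needs, the constant in Lemma~\ref{lem:nuisance_bdd}, and the $\rmK_M$-weighting in your $v$-approximation. With only $\lambda_{\max}(\rmK_M)\lesssim M$, the $v$-approximation term becomes $M^{2}(WL)^{-b}\asymp n^{(b-ab)/D}$ with $D=(a+2)(b+1)-1$, which exceeds the claimed $n^{-ab/D}$.

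This is what $\sigma=M^{-1/d_y}$ is for, in both parts of the theorem and not only the Gaussian one. Lemma~\ref{lem:Llamb} rescales the grid to a fixed separation and bounds $\lambda_{\max}(\rmK_M)$ by a Riemann sum for $\|\varphi\|_{L^1}$. It bounds $\lambda_{\min}(\rmK_M)$ below via \cite{schaback1995}, which is where Assumption~\ref{ass:kernel_nk} is used. The paper needs this lower bound for strong convexity in the coefficient vector (Lemma~\ref{lem:erb}). Your use of the assumption to keep $\rmK_M^{-1}\mathbf{b}(v)$ bounded rests on the same eigenvalue bound and should be stated that way.

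Your estimation step is otherwise a legitimate alternative. A least-squares oracle inequality in $\gH_\gY$ with coordinatewise covering gives $M\,WL\log W\log n/n$, rather than the paper's $M^{2}WL\log W\log n/n$ from vector contraction plus Lemma~\ref{lem:erb}. It needs no strong convexity. Under the stated choices $M\,WL/n\asymp n^{-(ab+b)/D}\le n^{-ab/D}$, so the claimed rate still follows once $\lambda_{\max}(\rmK_M)=O(1)$ is established.

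Second, the projection error, which you leave open, needs an idea that a native-space sampling inequality alone does not give: an exponent in which $\tau$ and $s$ add. The paper's Lemma~\ref{lem:projbound} supplies this with a duality (doubling) argument. The residual $e_v=\mu_v-\Pi_M\mu_v$ is orthogonal to every $\phi(\widetilde y_j)$, hence vanishes on the grid. Since $\widehat\mu_v=\widehat\varphi\,\widehat{\fp}^1(\cdot|v)$, Proposition~\ref{prop:kernelinv} yields
\[
\|e_v\|^2_{\gH_\gY}=\langle e_v,\mu_v\rangle_{\gH_\gY}=\langle e_v,\fp^1(\cdot|v)\rangle_{L^2(\gY)}\le\|e_v\|_{W^{-\widetilde s,2}(\gY)}\,\|\fp^1(\cdot|v)\|_{W^{\widetilde s,2}(\gY)}.
\]
A duality step together with the zero-data sampling inequality of \cite{rieger2010} then gives $\|e_v\|_{W^{-\widetilde s,2}(\gY)}\lesssim h^{\tau+\widetilde s}\|e_v\|_{\gH_\gY}$. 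Dividing through yields $\|e_v\|_{\gH_\gY}\lesssim h^{\tau+\widetilde s}\|\fp^1(\cdot|v)\|_{W^{s,q}(\gY)}$, i.e.\ $M^{-a}$ after squaring with $h\asymp M^{-1/d_y}$.

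In the Gaussian case, the same identity combined with the Gaussian sampling inequality gives $\exp(-B_0M^{1/(2d_y)}\log M)$, which is negligible once $M\asymp(\log n)^{2d_y}$. Be aware that the paper applies these inequalities with $h\asymp M^{-1/d_y}$ while treating their constants (e.g.\ $B_0$) as independent of $\sigma$. The bandwidth tracking you anticipate is therefore not carried out there. Your concern is legitimate, and the paper's argument will not settle it for you.
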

}

\subsubsection{Discussion of the Rates}
The convergence rates derived in Theorems~\ref{thm:plugin_rate}, \ref{thm:DF_upper_bdd}, and \ref{thm:NK_upper_bdd} highlight the roles of the smoothness of the conditional density $\fp^1(y|v)$, the kernel, and the dimensions of $\gX$, $\gV$, $\gY$.

For all three estimators, the rates improve with the smoothness $r$ of the function $v \mapsto \fp^1(y|v)$ and worsen with $d_v$. For the Deep Feature and Neural-Kernel estimators, the rates also improve with the smoothness $s$ of the function $y \mapsto \fp^1(y|v)$ and the smoothness of the kernel $\tau$, and worsen with the outcome dimension $d_y$. The rate of the Ridge Regression estimator, by contrast, does not depend on $d_y$ at all under the stronger Assumption~\ref{ass:kernel1}.

Ignoring the first-stage nuisance term, all three estimators achieve the minimax non-parametric regression rate $n^{-\frac{2r}{2r+d_v}}$ over $\gV$, though by different means. The Ridge Regression estimator attains this rate directly, as it performs regression in the full RKHS without finite-dimensional projection. The Deep Feature and Neural-Kernel estimators approximate the RKHS by projecting onto an $M$-dimensional subspace, incurring an additional approximation error governed by $d_y$ through the parameter $a$. For Sobolev-type kernels (Assumption~\ref{ass:kernel}), this yields the rate $n^{-\frac{ab}{(a+2)(b+1)-1}}$ (up to logarithmic factors). In the regime where $a \to \infty$, i.e., when the kernel or the density is very smooth, the approximation error becomes negligible and this rate approaches $n^{-\frac{b}{b+1}} = n^{-\frac{2r}{2r+d_v}}$, which is the minimax rate. For Gaussian kernels, whose Fourier transforms decay exponentially, this regime is effectively realized: the dependence on $d_y$ reduces to polylogarithmic factors and the rate is $n^{-\frac{2r}{2r+d_v}}$ up to these factors.

The term $\min\left\{\gR^2_\pi(\widehat\pi), \gR^2_{\mu_{0}}(\widehat\mu_{0})\right\}$, which appears in all three bounds, reflects a double robustness phenomenon: the first-stage error is governed by the nuisance with the smaller error. Since $\gR^2_{\mu_{0}}$ measures the error of a $\gH_\gY$-valued regression over $\gX$, its rate is analogous to that of the second-stage but with $d_v$ replaced by $d_x$. When $d_x > d_v$, the first-stage error converges more slowly than the second-stage rate and becomes the bottleneck. Concretely, suppose the second-stage regression achieves the minimax rate. If $\pi$ belongs to $W^{\alpha,2}(\gX)$ and the conditional density $\widetilde{\fp}^1(y|x)$ of $Y^1$ given $X$ satisfies $\widetilde{\fp}^1(y|\cdot) \in W^{\beta,2}(\gX)$ for some $\alpha,\beta>0$, and they are estimated using minimax optimal estimators, then the final rate (ignoring the logarithmic factors) is:
\begin{equation*}
    n^{-\frac{2r}{2r + d_v}} + \min\left\{n^{-\frac{2\alpha}{2\alpha + d_x}},\; n^{-\frac{2\beta}{2\beta + d_x}}\right\}.
\end{equation*}
Since typically $d_v \le d_x$, the first-stage nuisance term dominates whenever the maps $\pi$ and $x \mapsto \widetilde{\fp}^1(y|x)$ are less smooth than $v \mapsto \fp^1(y|v)$.
\begin{remark}
The Deep Feature and Neural-Kernel estimators were originally proposed without finite-sample convergence guarantees. By setting $\widehat{\pi} = \pi$ and $\widehat\mu_{0} = \mu_{0}$ in Theorems~\ref{thm:DF_upper_bdd} and~\ref{thm:NK_upper_bdd}, the first-stage nuisance term vanishes and the bounds reduce to the second-stage rates alone. This yields the first finite-sample convergence rates for these estimators.
\end{remark} 

\section{Application to Conditional Counterfactual Density Estimation}\label{sec:ccd_estimation}

The CCME can be used to recover the full conditional counterfactual density (CCD) $\fp^1(y|v)$. If $k_\gY$ is a translation-invariant kernel that integrates to one (e.g., the Gaussian kernel $k_\gY(y, y') = (\sqrt{2\pi}\sigma)^{-d_y} \exp(-\|y-y'\|^2/2\sigma^2)$), the CCME satisfies:
 \[ \langle \mu_{Y^1|V}(v), \phi(y) \rangle_{\gH_\gY}
 = \E_{Y^1\sim \fp^1(\cdot |v)}[k_\gY(Y^1, y)], \]
 which is exactly the kernel density estimator. Thus, we obtain the estimated density $\widehat{\fp}^1(y|v)$ at any query point $y \in \gY$ by computing the inner product:
\begin{equation*}\label{eq:density_from_ccme}
    \widehat{\fp}^1(y|v) = \langle \widehat{\mu}_{Y^1|V}(v), \phi(y) \rangle_{\gH_\gY}.
\end{equation*}
Below, we derive the explicit formulae for the density estimators based on all three proposed methods. For the formulae to be explicit, we denote the set of the treated units in $\gD_0$ by $\gD_0^{A=1} = \{Z_{0(1)},\ldots,Z_{0(m)}\} = \{Z_{0i} \in \gD_0 \mid A_{0i}=1\}$. Recall $\boldsymbol{\Phi}_0 = (\phi(Y_{0(1)}), \ldots, \phi(Y_{0(m)}))^\top$ from Section~\ref{sec:three_estimators}. Define $\rmk_{Y_1}(y) = (k_\gY(y, Y_{11}),\ldots,k_\gY(y, Y_{1n}))^\top$ and $\rmk_{Y_0}(y) = (k_\gY(y, Y_{0(1)}),\ldots,k_\gY(y, Y_{0(m)}))^\top$. For brevity, we also denote $\omega_i = A_{1i}/\widehat{\pi}(X_{1i})$, $\mathbf{D}_\omega = \text{diag}(\omega_1, \dots, \omega_n)$, and $\mathbf{D}_{1-\omega} = \mathbf{I}_n - \mathbf{D}_\omega$.

\subsection{Inference with Ridge Regression Estimator}

We recall from \eqref{eq:emp_ccme_form} the Ridge Regression estimator $\widehat{\mu}_{\mathrm{RR}}(v) = \boldsymbol{\widehat\Xi} (\rmK_{V} + n \lambda_1 \mathbf{I}_n)^{-1}\rmk_{V}(v)$. We assume the first-stage CME estimator is also the kernel ridge regression fitted on the treatment group in $\mathcal{D}_0$, denoted by $\widehat{\mu}_{0,\mathrm{RR}} = \boldsymbol{\Phi}_0(\rmK_{X_0} + m\lambda_0 \rmI_m)^{-1}\rmk_{X_0}$ where $\rmK_{X_0} = (k_{\mathcal{X}}(X_{0(i)}, X_{0(j)}))_{i,j=1}^m$ and $\rmk_{X_0}(x)= (k_{\mathcal{X}}(x, X_{0(1)}),\ldots,k_{\mathcal{X}}(x, X_{0(m)}))^{\top}$. Constructing the pseudo-outcome $\widehat\xi(Z_{1i}) = A_{1i} / \widehat{\pi}(X_{1i}) (\phi(Y_{1i}) - \widehat{\mu}_{0,\mathrm{RR}}(X_{1i}))+\widehat{\mu}_{0,\mathrm{RR}}(X_{1i})$, we obtain the CCD estimator:
\begin{align}\label{eq:emp_density}
    \widehat{\fp}^1_{\mathrm{RR}}(y|v) &= \left\langle \boldsymbol{\widehat\Xi}
    (\rmK_{V} + n \lambda_1 \mathbf{I}_n)^{-1}\rmk_{V}(v), \phi(y) \right\rangle_{\gH_\gY} \nonumber \\
     &= (\langle \widehat\xi(Z_{11}), \phi(y) \rangle, \ldots, \langle \widehat\xi(Z_{1n}), \phi(y) \rangle)^\top(\rmK_{V} + n \lambda_1 \mathbf{I}_n)^{-1}\rmk_{V}(v) \nonumber \\
    &= \left[ \rmk_{Y_1}(y)^\top\rmD_\omega + \rmk_{Y_0}(y)^\top (\rmK_{X_0} + m\lambda_0 \rmI_m)^{-1}\rmK_{X_0X_1}\rmD_{1-\omega} \right](\rmK_{V} + n \lambda_1 \mathbf{I}_n)^{-1}\rmk_{V}(v),
\end{align}
where $\rmK_{X_0X_1} = (k_\gX(X_{0(i)}, X_{1j}))_{1 \le i \le m,1 \le j \le n}$ is a cross-kernel matrix.

\subsection{Inference with Deep Feature Estimator}\label{sec:df_density}

Recall from \eqref{eq:C_hat} the Deep Feature estimator $\widehat{\mu}_{\mathrm{DF}}(v) = \boldsymbol{\widehat\Xi} \rmPsi_{\widehat\theta} (\rmPsi_{\widehat\theta}^{\top}\rmPsi_{\widehat\theta} + n\lambda_1\mathbf{I}_{M})^{-1} \psi_{\widehat\theta}(v)$. Analogously, with a learned feature map $\psi_{\widehat\theta_0}$, denoting $\boldsymbol{\widehat\Psi}_{0} =(\psi_{\widehat\theta_0}(X_{0(1)}),\ldots,\psi_{\widehat\theta_0}(X_{0(m)}))^\top$, the first-stage Deep Feature estimator of the CME is given by:
\begin{equation*}\label{eq:df_cross_pred}
    \widehat{\mu}_{0,\mathrm{DF}}(x) = \boldsymbol{\Phi}_0\boldsymbol{\widehat\Psi}_{0} (\boldsymbol{\widehat\Psi}_{0}^\top \boldsymbol{\widehat\Psi}_{0} + m\lambda_0 \mathbf{I}_{M})^{-1} \psi_{\widehat\theta_0}(x),
\end{equation*}
Constructing the pseudo-outcome $\widehat\xi(Z_{1i})$ based on $\widehat{\mu}_{0,\mathrm{DF}}(X_{1i})$, we obtain the CCD estimator:
\begin{align}\label{eq:df_density}
    &\widehat{\fp}^1_{\mathrm{DF}}(y|v) \nonumber \\
    &\quad  = \left\langle \boldsymbol{\widehat\Xi}
    \rmPsi_{\widehat\theta} (\rmPsi_{\widehat\theta}^{\top}\rmPsi_{\widehat\theta} + n\lambda_1\mathbf{I}_{M})^{-1} \psi_{\widehat\theta}(v), \phi(y) \right\rangle_{\gH_\gY} \nonumber \\
    &\quad  = \left[ \rmk_{Y_1}(y)^\top\rmD_{\omega} + \rmk_{Y_0}(y)^\top \boldsymbol{\widehat\Psi}_{0}(\boldsymbol{\widehat\Psi}_{0}^\top \boldsymbol{\widehat\Psi}_{0} + m\lambda_0 \mathbf{I}_{M})^{-1} \boldsymbol{\widehat\Psi}_{01} \rmD_{1-\omega}  \right] \rmPsi_{\widehat\theta} (\rmPsi_{\widehat\theta}^{\top}\rmPsi_{\widehat\theta} + n\lambda_1\mathbf{I}_{M})^{-1} \psi_{\widehat\theta}(v),
\end{align}
where $\boldsymbol{\widehat\Psi}_{01} = (\psi_{\widehat\theta_0}(X_{11}),\ldots,\psi_{\widehat\theta_0}(X_{1n}))$ is the first-stage network evaluated on the second-stage covariates.

\subsection{Inference with Neural-Kernel Estimator}\label{sec:nk_density}

Since the Neural-Kernel estimator $\widehat{\mu}_{\mathrm{NK}}(v) = \sum_{j=1}^M f_{\widehat{\theta}}(v)_j \phi(\widetilde{y}_j)$ is expressed as a linear combination of feature maps, the CCD estimator can be easily computed:
\begin{equation}\label{eq:nk_density}
    \widehat{\fp}^1_{\mathrm{NK}}(y|v) =  \left\langle \sum_{j=1}^M f_{\widehat{\theta}}(v)_j \phi(\widetilde{y}_j), \phi(y) \right\rangle_{\gH_\gY} = \sum_{j=1}^M f_{\widehat{\theta}}(v)_j k_\gY(\widetilde{y}_j, y).
\end{equation} 
This makes inference with the Neural-Kernel estimator significantly faster compared to the previous two estimators, since there is no need for calculation nor inversion of large matrices.

\section{Experiments}

We evaluate our three estimators on synthetic and semi-synthetic data. For each method, we compare four pseudo-outcome constructions: (i) the proposed \emph{Doubly Robust} (DR) pseudo-outcome $\widehat{\xi}(Z)$ from \eqref{eq:pseudo_outcome}; (ii) \emph{Inverse Propensity Weighting} (IPW), $\widehat{\xi}_{\text{IPW}}(Z) = \frac{A}{\widehat{\pi}(X)}\phi(Y)$; (iii) the \emph{Plug-in} (PI), $\widehat{\xi}_{\text{PI}}(Z) = \widehat\mu_{0}(X)$; and (iv) the \emph{One-Step} estimator that directly uses $\widehat\mu_{0}(X)$ to estimate the CCME, as proposed by \cite{Muandet2021,Park2021}. Full experimental details are provided in Appendix~\ref{sec:exp_details}.

\subsection{Analysis of Double Robustness}\label{sec:exp1}

\paragraph{Setup.} We generate $X \sim \mathcal{N}(\mathbf{1}_{10}, \mathbf{I}_{10})$ with $V = (X_1,X_2,X_3,X_4,X_5)$ as the conditioning variable. The propensity score is
$\pi(X) = 0.1 + 0.8 \cdot \mathbb{I}(X_1 \in [0, 2] \text{ and } X_6 \geq 1.5)$,
inducing strong confounding. Our estimator for $\pi$ is a random forest with 100 trees, each with maximum depth $4$, which can accurately estimates the true propensity score with a sufficiently large sample size. The counterfactual outcome is
\begin{align*}
    Y^1 &= 1.0 + X^\top \beta + 2.0 + X^\top \gamma + S + \epsilon, \\
    S &\sim \text{Bernoulli}(\mathsf{logistic}(0.5 X_1)) \cdot 15.0, \quad \epsilon \sim \mathcal{N}(0, s^2(X)),
\end{align*}
where $\beta = (1.0, -0.5, 0.8, -0.7, 0.6, 1.0, 0.3, -0.2, 0.1, -0.3)^\top$, $\gamma = (0.8, 0, 0, 0.6, 0, 2.0, 0.4, 0, 0, 0.2)^\top$, $\mathsf{logistic}(\cdot)$ is the logistic function, and $s(X) = 0.5(1 + 0.5|X_1| + 0.3|X_5|)$. The term $S$ generates a bimodal structure whose mixture probability depends on $X_1$.

We consider three misspecification scenarios: (a) both nuisance models correctly specified, (b) $\pi$ misspecified by replacing the random forest with a logistic regression, and (c) $\mu_{0}$ misspecified by excluding $X_6$ from the outcome model. We vary $n \in \{200, 500, 1000, 2000, 5000, 10000, 20000\}$ across 10 seeds and measure the MSE against the analytically computed ground truth $\fp^1(y|v)$.

\begin{figure}[!t]
    \centering
    \includegraphics[width=\textwidth]{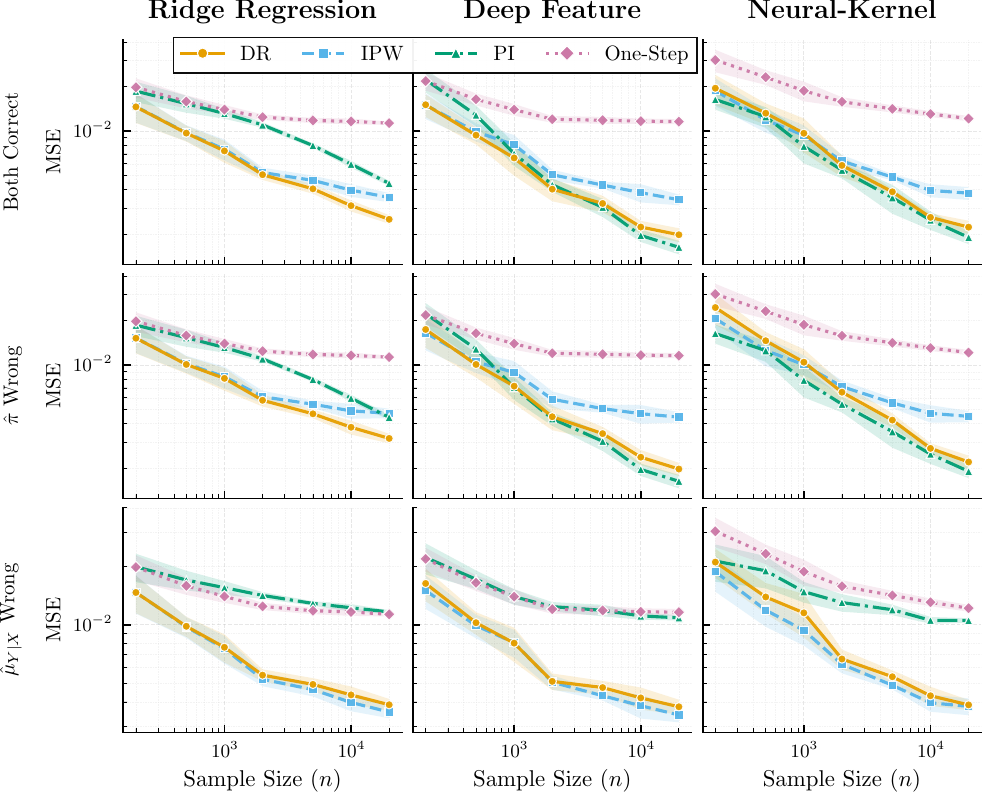}
    \caption{MSE (log scale) vs.\ sample size for three methods (columns) under three misspecification scenarios (rows). Shaded regions indicate standard errors over 10 seeds.}
    \label{fig:mses}
\end{figure}

\paragraph{Results.} Figure~\ref{fig:mses} shows the convergence behavior. When both models are correct (top row), DR consistently achieves lower MSE than One-Step across all sample sizes and all three methods. This gap persists even at large sample sizes, indicating that One-Step suffers from selection bias. The DR, IPW, and PI variants of the estimators achieve comparable MSE in this setting; however, for the Ridge Regression estimator, PI exhibits slightly higher MSE than DR and IPW due to selection bias. For the Deep Feature and Neural-Kernel estimators, IPW performs worse than DR and PI due to the lack of flexibility of the neural networks.

When $\pi$ is misspecified (middle row), IPW variants of the Deep Feature and Neural-Kernel estimators converge more slowly than DR and PI, as the latter two leverage the correct outcome model. Conversely, when $\mu_{0}$ is misspecified (bottom row), PI variants of all estimators fail to converge while DR and IPW remain consistent via the correct propensity score. In both cases, DR matches its performance under full correct specification, confirming the double robustness property established in Theorem~\ref{thm:ECME}.

\subsection{Qualitative Assessment}\label{sec:exp2}

\paragraph{Setup.} We visualize estimated counterfactual densities at $n = 20000$ across 30 runs, using the same data-generating process as in Section~\ref{sec:exp1}. We evaluate two covariate profiles: $v_1 = (2.2, -0.2, 2.2, -0.2, 2.2)^\top$ and $v_2 = (-0.2, 2.2, -0.2, 2.2, -0.2)^\top$. These induce different mixture probabilities $\mathsf{logistic}(0.5 X_1)$, which is 0.75 for $v_1$ and 0.48 for $v_2$, yielding distinct bimodal shapes.

\begin{figure}[!t]
    \centering
    \includegraphics[width=\textwidth]{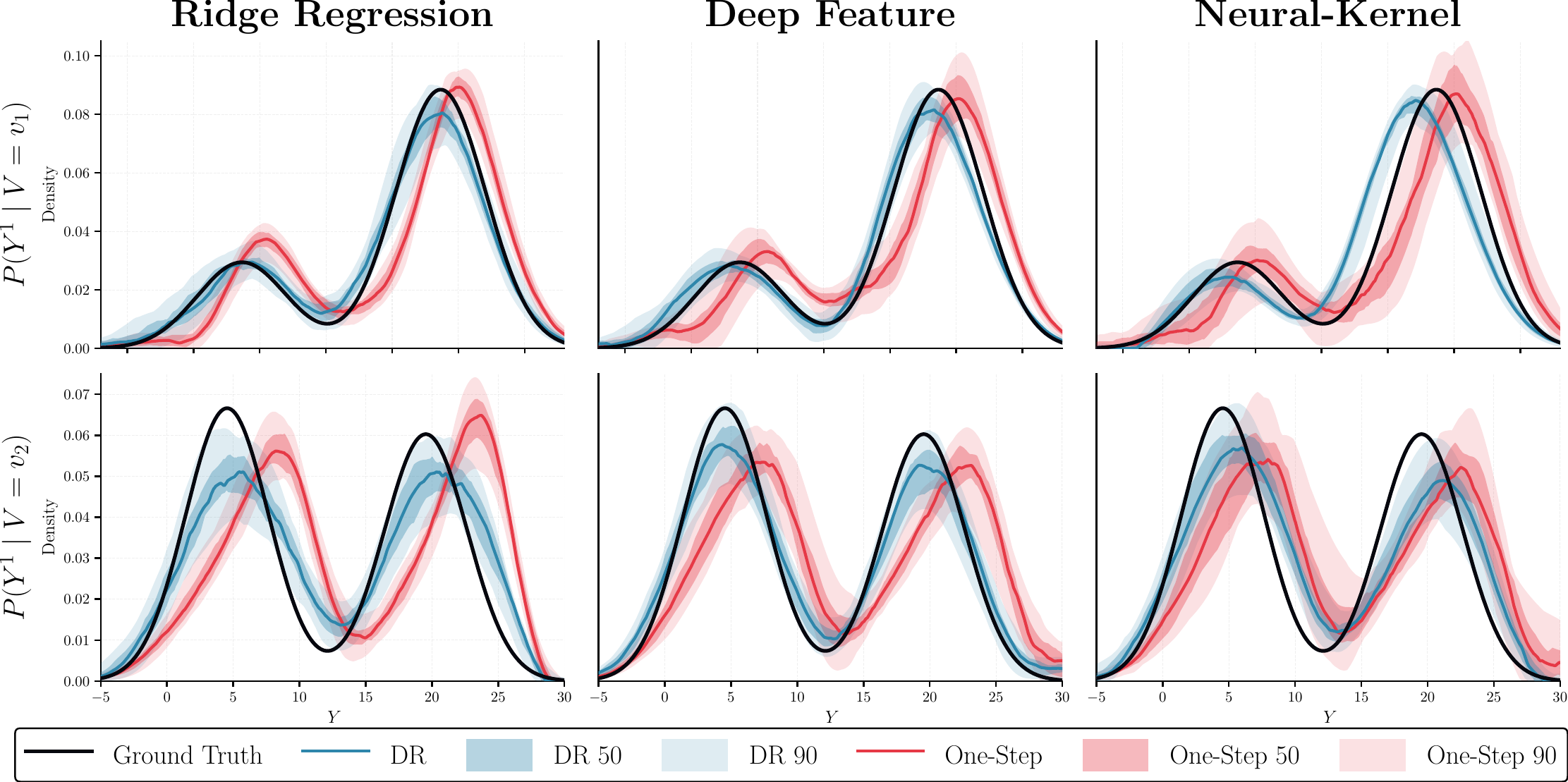}
    \caption{Estimated counterfactual densities for profiles $v_1$ (top) and $v_2$ (bottom) over 30 runs at $n=20000$. Blue: DR; Red: One-Step. Solid lines are medians; dark and light regions show pointwise 50th and 90th percentile intervals.}
    \label{fig:densities}
\end{figure}

\paragraph{Results.} Figure~\ref{fig:densities} shows that all three estimators recover the bimodal structure for both covariate profiles. In particular, the estimates for $V=v_1$ (top row) are close to the ground truth. The estimates for $V=v_2$ (bottom row), while mostly capturing the shape, still underestimate the modes of the distribution; this indicates that estimating the conditional density uniformly well over $V$ remains difficult even with a large sample.

The figure also indicates the difference in variances. In particular,Ridge Regression estimator has the lowest variance, while Neural-Kernel estimator has the highest variance. This is because incorporating a neural network introduces additional randomness from weight initialization and stochastic optimization, while adding a regularization term reduces the variance.

The figure also reveals differences in biases between the DR (blue) and One-Step (red) estimators. Specifically, the One-Step estimates show noticeable bias, especially in the Ridge Regression and Deep Feature estimators. The DR estimates, on the other hand, closely match the ground truth, suggesting that the doubly robust construction effectively mitigates selection bias.

\subsection{Semi-Synthetic MNIST Experiment}\label{sec:exp3}

\paragraph{Setup.} To evaluate scalability to high-dimensional outcomes and feasibility to discrete conditioning variables, we consider a task of estimation of class-conditional densities from noisy MNIST images $\gD_{\mathrm{MNIST}} = \{Y_i\}_{i=1}^{60000}$, where $Y_i \in [0,1]^{784}$. Due to budget constraints, the denoiser prefers to denoise ``easy'' images that have high intensities $I_i = \frac{1}{784}\sum_{j=1}^{784} Y_{ij}$. Thus, we can assign a treatment variable $A_i$ to each image, with $A_i=1$ if the image is clean and $A_i=0$ if it is noisy. Letting $V_i$ be the digit label, our goal is to recover the clean digit distribution $\fp^1(y|v)$ for each digit class $v \in \{0, \ldots, 9\}$.

To simulate this task, we assume that the probability of an image being denoised follows a logistic model $\pi(I_i) = \mathsf{logistic}\left(3 (I_i - q_{0.9}(I))/\text{std}(I)\right)$, where $q_{0.9}(I)$ is the 90th quantile of $I$, so that high-intensity images are more likely to be observed clean. For each treated image, we add a random noise $\widetilde{Y}_i = Y_i + \epsilon_i$ with $\epsilon_i \sim \mathcal{N}(0, 0.16 \mathbf{I}_{784})$. We will recover the conditional densities with $X_i = (I_i, V_i)$ as the covariate.

To evaluate the effectiveness of doubly robust estimation in recovering the true conditional densities, we train three variant of Deep Feature estimators: an oracle One-Step estimator trained on all-clean dataset, the proposed DR estimator trained on the partially cleaned dataset, and the One-Step estimator trained on the partially cleaned dataset. For evaluation, we visually compare the empirical modes $\widehat{y}_{\mathrm{mode}} =\argmax_{y \in \gD_{\mathrm{MNIST}}} \widehat{\fp}^1(y|v)$ for each digit class $v$.

\begin{figure}[!t]
    \centering
    \includegraphics[width=0.9\textwidth]{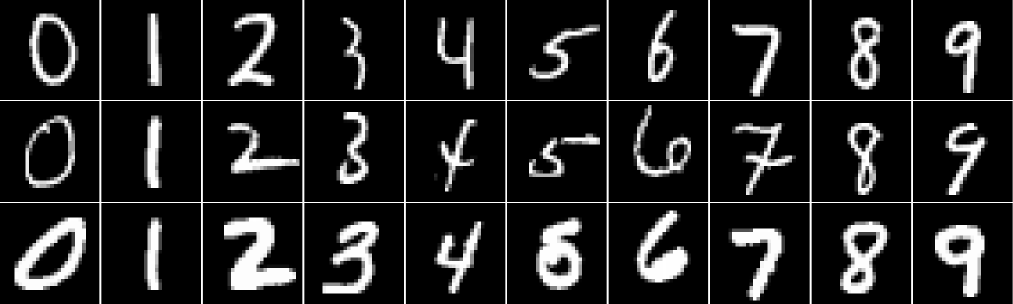}
    \caption{Empirical mode for each class under three density estimates. Row~1: Oracle (clean data). Row~2: DR (proposed). Row~3: One-Step (prior work). DR recovers canonical digits matching the Oracle, while One-Step shows a subtle bias toward higher-intensity variants.}
    \label{fig:mnist}
\end{figure}

\paragraph{Results.} Figure~\ref{fig:mnist} shows the retrieved empirical modes across all 10 classes. The DR estimator produces similar images to the oracle estimator, recovering digits with relatively thin strokes despite the strong intensity-based confounding. The One-Step estimator, on the other hand, retrieves the correct digits but with bolder strokes, due to the selection bias toward high-intensity images: Since higher-intensity images are overrepresented among clean observations, an estimator that does not fully correct for this confounding assigns excessive density to high-intensity variants. The DR estimator addresses this through the propensity score weighting.

\section{Conclusion}

We proposed the Conditional Counterfactual Mean Embeddings (CCME), a framework for representing conditional counterfactual distributions in RKHS. We developed a two-stage meta-estimator based on doubly robust pseudo-outcomes and instantiated it with three estimators---Ridge Regression, Deep Feature, and Neural-Kernel---each offering different computational-statistical trade-offs. Our finite-sample analysis establishes convergence rates that explicitly track the dependence on the dimensions $d_x$, $d_v$, $d_y$ and the smoothness of the conditional density, and shows that all three estimators achieve double robustness in rates (Theorem~\ref{thm:ECME}). Experiments on synthetic and semi-synthetic data confirm that the DR estimators recover multimodal conditional counterfactual densities and correct for selection bias where one-step plug-in estimators fail.

\bibliographystyle{alpha}
{\small{
\bibliography{ccme}
}}

\appendix

\section{Algorithms}\label{sec:algorithms}

The algorithms for the Ridge Regression estimator, Deep Feature estimator, and Neural-Kernel estimator are detailed in Algorithm~\ref{alg:empirical_ccme}, Algorithm~\ref{alg:deep_feature}, and Algorithm~\ref{alg:neural_kernel}, respectively.

\begin{algorithm}[!htpb]\small
\caption{Kernel Ridge Regression Estimator}
\label{alg:empirical_ccme}
\begin{algorithmic}[1] 
\REQUIRE Dataset $\mathcal{D} = \{Z_i\}_{i=1}^{2n}$ where $Z_i = (X_i, A_i, Y_i)$
\REQUIRE Kernels $k_{\mathcal{X}}, k_{\mathcal{V}}$ on $\mathcal{X}, \mathcal{V}$ 
\REQUIRE Propensity score estimation algorithm $\mathsf{Alg}_{\pi}$
\REQUIRE Regularization parameters $\lambda_0, \lambda_1 > 0$

\vspace{0.1cm}

\STATE Randomly split $\mathcal{D}$ into $\gD_0=\{Z_{0i}\}_{i=1}^n$ and $\gD_1=\{Z_{1i}\}_{i=1}^n$
\STATE $\{Z_{0(i)}\}_{i=1}^m \leftarrow \{ Z_{0i} \in \gD_0 \mid A_{0i} = 1\}$
\vspace{0.1cm}
\STATE \textbf{Stage 1: Nuisance Estimation on $\mathcal{D}_0$}
\STATE \quad $\widehat{\pi} \leftarrow \mathsf{Alg}_{\pi}(\gD_0)$
\STATE \quad Compute Gram matrix $\rmK_{X_0} \in \mathbb{R}^{m \times m}$ with $(\rmK_{X_0})_{ij} = k_{\mathcal{X}}(X_{0(i)}, X_{0(j)})$
\STATE \quad $\rmk_{X_0}(x) \leftarrow (k_{\mathcal{X}}(x, X_{0(1)}), \ldots, k_{\mathcal{X}}(x, X_{0(m)}))^\top$
\STATE \quad $\widehat\mu_{0}(x) \leftarrow \sum_{j=1}^{m} \alpha_j(x) \phi(Y_{0(j)})$ with $\bm{\alpha}(x) = (\rmK_{X_0} + m\lambda_0 \mathbf{I}_m)^{-1}\rmk_{X_0}(x)$

\vspace{0.1cm}

\STATE \textbf{Stage 2: Pseudo-Outcome Regression on $\mathcal{D}_1$}
\STATE \quad \textbf{for} $i = 1$ to $n$ \textbf{do}
\STATE \quad \quad $\widehat{\xi}(Z_{1i}) \leftarrow \frac{A_{1i}}{\widehat{\pi}(X_{1i})}\left(\phi(Y_{1i}) - \widehat\mu_{0}(X_{1i})\right) + \widehat\mu_{0}(X_{1i})$
\STATE \quad \textbf{end for}
\STATE \quad Compute Gram matrix $\rmK_V \in \mathbb{R}^{n \times n}$ with $(\rmK_V)_{ij} = k_{\mathcal{V}}(V_{1i}, V_{1j})$
\STATE \quad $\bm{\beta}(v) \leftarrow (\rmK_V + n\lambda_1 \mathbf{I}_n)^{-1}\rmk_V(v)$
\STATE  \quad $\widehat{\mu}_{Y^1|V}(v) \leftarrow \sum_{i=1}^{n} \beta_i(v) \widehat{\xi}(Z_{1i})$

\vspace{0.1cm}

\RETURN $\widehat{\mu}_{Y^1|V}$
\end{algorithmic}
\end{algorithm}

\begin{algorithm}[!htpb] \small
\caption{Deep Feature Estimator}
\label{alg:deep_feature}
\begin{algorithmic}[1]
\REQUIRE Dataset $\mathcal{D} = \{Z_i\}_{i=1}^{2n}$ where $Z_i = (X_i, A_i, Y_i)$
\REQUIRE Kernels $k_{\mathcal{Y}}$ on $\mathcal{Y}$
\REQUIRE Propensity score estimation algorithm $\mathsf{Alg}_{\pi}$
\REQUIRE Neural network parameter spaces $\Theta_0, \Theta_1$
\REQUIRE Feature dimension $M$
\REQUIRE Regularization parameters $\lambda_0, \lambda_1 > 0$

\vspace{0.1cm}

\STATE Randomly split $\mathcal{D}$ into $\gD_0=\{Z_{0i}\}_{i=1}^n$ and $\gD_1=\{Z_{1i}\}_{i=1}^n$
\STATE $\{Z_{0(i)}\}_{i=1}^m \leftarrow \{ Z_{0i} \in \gD_0 \mid A_{0i} = 1\}$
\vspace{0.1cm}
\STATE \textbf{Stage 1: Nuisance Estimation on $\mathcal{D}_0$}
\STATE \quad $\widehat{\pi} \leftarrow \mathsf{Alg}_{\pi}(\gD_0)$
\STATE \quad Initialize neural network $\psi_{0,\theta}: \mathcal{X} \to \mathbb{R}^M$ with $\theta \in \Theta_0$
\STATE \quad $\rmPsi_{0,\theta} \leftarrow (\psi_{0,\theta}(X_{0(1)}), \ldots, \psi_{0,\theta}(X_{0(m)}))^\top $
\STATE \quad Compute Gram matrix $\rmK_{Y_0} \in \mathbb{R}^{m \times m}$ with  $(\rmK_{Y_0})_{ij} = k_{\mathcal{Y}}(Y_{0(i)}, Y_{0(j)})$
\STATE \quad $\gL_0(\theta) \leftarrow \text{Tr}\left( \rmK_{Y_0} \left(\mathbf{I}_m - \rmPsi_{0,\theta}(\rmPsi_{0,\theta}^\top\rmPsi_{0,\theta} + m\lambda_0\mathbf{I}_{M})^{-1}\rmPsi_{0,\theta}^\top \right) \right)$
\STATE \quad $\widehat{\theta}_0 \leftarrow \argmin_{\theta \in \Theta_0} \gL_0(\theta)$
\STATE \quad $\widehat{\rmPsi}_0 \leftarrow (\psi_{0,\widehat{\theta}_0}(X_{0(1)}), \ldots, \psi_{0,\widehat{\theta}_0}(X_{0(m)}))^\top$

\vspace{0.1cm}

\STATE \textbf{Stage 2: Pseudo-Outcome Regression on $\mathcal{D}_1$}
\STATE \quad Initialize neural network $\psi_{\theta}: \mathcal{V} \to \mathbb{R}^M$ with $\theta \in \Theta_1$
\STATE \quad $\rmPsi_{\theta} \leftarrow (\psi_{\theta}(V_{11}), \ldots, \psi_{\theta}(V_{1n}))^\top$
\STATE \quad $\widehat{\rmPsi}_{01} \leftarrow (\psi_{0,\widehat{\theta}_0}(X_{11}), \ldots, \psi_{0,\widehat{\theta}_0}(X_{1n}))$
\STATE \quad Compute Gram matrices $\rmK_{Y_1} \in \mathbb{R}^{n \times n}$ with $(\rmK_{Y_1})_{ij} = k_{\mathcal{Y}}(Y_{1i}, Y_{1j})$
\STATE \quad Compute Gram matrices $\rmK_{Y_0Y_1} \in \mathbb{R}^{m \times n}$ with $(\rmK_{Y_0Y_1})_{ij} = k_{\mathcal{Y}}(Y_{0(i)}, Y_{1j})$
\STATE \quad $\mathbf{C} \leftarrow \widehat{\rmPsi}_0 (\widehat{\rmPsi}_0^\top \widehat{\rmPsi}_0 + m\lambda_0 \mathbf{I}_{M})^{-1} \widehat{\rmPsi}_{01}$
\STATE \quad $\mathbf{D}_\omega \leftarrow \text{diag}(A_{11}/\widehat{\pi}(X_{11}), \ldots, A_{1n}/\widehat{\pi}(X_{1n}))$
\STATE \quad $\mathbf{D}_{1-\omega} \leftarrow \mathbf{I}_n - \mathbf{D}_\omega$
\STATE \quad $\rmK_{\widehat{\xi}} \leftarrow \mathbf{D}_\omega \rmK_{Y_1} \mathbf{D}_\omega + \mathbf{D}_\omega \rmK_{Y_0 Y_1}^\top \mathbf{C} \mathbf{D}_{1-\omega} + \mathbf{D}_{1-\omega} \mathbf{C}^\top \rmK_{Y_0 Y_1} \mathbf{D}_\omega + \mathbf{D}_{1-\omega} \mathbf{C}^\top \rmK_{Y_0} \mathbf{C} \mathbf{D}_{1-\omega}$
\STATE \quad $\gL_1(\theta) \leftarrow \text{Tr}\left( \rmK_{\widehat{\xi}} \left(\mathbf{I}_n - \rmPsi_{\theta}(\rmPsi_{\theta}^\top\rmPsi_{\theta} + n\lambda_1\mathbf{I}_{M})^{-1}\rmPsi_{\theta}^\top \right) \right)$
\STATE \quad $\widehat{\theta}_1 \leftarrow \argmin_{\theta \in \Theta_1} \gL_1(\theta)$ 
\STATE \quad $\widehat{\rmPsi}_1 \leftarrow (\psi_{\widehat{\theta}_1}(V_{11}), \ldots, \psi_{\widehat{\theta}_1}(V_{1n}))^\top$
\STATE \quad $\widehat{\mu}_{Y^1|V}(v) \leftarrow \sum_{i=1}^n c_i(v) \widehat{\xi}(Z_{1i})$ with $\mathbf{c}(v) = \widehat{\rmPsi}_1 (\widehat{\rmPsi}_1^\top\widehat{\rmPsi}_1 + n\lambda_1\mathbf{I}_{M})^{-1} \psi_{\widehat{\theta}_1}(v)$

\vspace{0.1cm}

\RETURN $\widehat{\mu}_{Y^1|V}$
\end{algorithmic}
\end{algorithm}

\begin{algorithm}[!ht]\small
\caption{Neural-Kernel Estimator}
\label{alg:neural_kernel}
\begin{algorithmic}[1]
\REQUIRE Dataset $\mathcal{D} = \{Z_i\}_{i=1}^{2n}$ where $Z_i = (X_i, A_i, Y_i)$
\REQUIRE Kernel $k_{\mathcal{Y}}$ on $\mathcal{Y}$
\REQUIRE Propensity score estimation algorithm $\mathsf{Alg}_{\pi}$
\REQUIRE Neural network parameter spaces $\Theta_0, \Theta_1$
\REQUIRE Grid points $\widetilde{\mathcal{Y}}_M = \{\widetilde{y}_1, \ldots, \widetilde{y}_M\} \subset \mathcal{Y}$

\vspace{0.1cm}

\STATE Randomly split $\mathcal{D}$ into two disjoint sets: $\mathcal{D}_0$ and $\mathcal{D}_1$, each of size $n$
\STATE $\{Z_{0(i)}\}_{i=1}^m \leftarrow \{ Z_{0i} \in \gD_0 \mid A_{0i} = 1\}$

\vspace{0.1cm}
\STATE \textbf{Stage 1: Nuisance Estimation on $\mathcal{D}_0$}
\STATE \quad $\widehat{\pi} \leftarrow \mathsf{Alg}_{\pi}(\gD_0)$
\STATE \quad Compute grid Gram matrix $\rmK_M \in \mathbb{R}^{M \times M}$ with $(\rmK_M)_{jl} = k_{\mathcal{Y}}(\widetilde{y}_j, \widetilde{y}_l)$
\STATE \quad Initialize neural network $g_{\theta}: \mathcal{X} \to \mathbb{R}^M$ with $\theta \in \Theta_0$
\STATE \quad $\gL_0(\theta) \leftarrow \frac{1}{m}\sum_{i=1}^m \left[ g_{\theta}(X_{0(i)})^\top \rmK_M g_{\theta}(X_{0(i)}) - 2 \sum_{j=1}^M g_{\theta}(X_{0(i)})_j k_{\mathcal{Y}}(\widetilde{y}_j, Y_{0(i)}) \right]$
\STATE \quad $\widehat{\theta}_0 \leftarrow \argmin_{\theta \in \Theta_0} \gL_0(\theta)$

\vspace{0.1cm}

\STATE \textbf{Stage 2: Pseudo-Outcome Regression on $\mathcal{D}_1$}
\STATE \quad Initialize neural network $f_{\theta}: \mathcal{V} \to \mathbb{R}^M$ with $\theta \in \Theta_1$
\STATE \quad \textbf{for} $i = 1$ to $n$ \textbf{do}
\STATE \quad \quad $\rmk_i \leftarrow (k_\gY(\widetilde{y}_1, Y_{1i}), \ldots, k_\gY(\widetilde{y}_M, Y_{1i}))^\top$
\STATE \quad \quad $\mathbf{b}_i \leftarrow \frac{A_{1i}}{\widehat{\pi}(X_{1i})} \rmk_i + \left(1 - \frac{A_{1i}}{\widehat{\pi}(X_{1i})}\right) \rmK_{M} g_{\widehat\theta_0}(X_{1i})$
\STATE \quad \quad $\gL_{1,i}(\theta) \leftarrow f_{\theta}(V_{1i})^\top \rmK_M f_{\theta}(V_{1i}) - 2 f_{\theta}(V_{1i})^\top \mathbf{b}_i$
\STATE \quad \textbf{end for}
\STATE \quad $\gL_1(\theta) \leftarrow \frac{1}{n} \sum_{i=1}^n \gL_{1,i}(\theta)$
\STATE \quad $\widehat{\theta}_1 \leftarrow \argmin_{\theta \in \Theta_1} \gL_1(\theta)$ \STATE \quad $\widehat{\mu}_{Y^1|V}(v) \leftarrow \sum_{j=1}^M f_{\widehat{\theta}_1}(v)_j \phi(\widetilde{y}_j)$

\vspace{0.1cm}

\RETURN $\widehat{\mu}_{Y^1|V}$
\end{algorithmic}
\end{algorithm}

\section{Baseline Estimators for Conditional Counterfactual Densities}\label{sec:baseline}

In this section, we derive the conditional counterfactual density (CCD) estimators for the two ablation baselines used in the experiments: the Inverse Propensity Weighting (IPW) estimator and the Plug-in (PI) estimator. These estimators use only one component of the doubly robust pseudo-outcome $\widehat{\xi}(Z)$ from \eqref{eq:pseudo_outcome}. Recall that the doubly robust pseudo-outcome can be written as:
\begin{equation*}
    \widehat{\xi}(Z_{1i}) = \omega_i \phi(Y_{1i}) - \omega_i\widehat\mu_{0}(X_{1i}) + \widehat\mu_{0}(X_{1i}),
\end{equation*}
where $\omega_i = A_{1i}/\widehat{\pi}(X_{1i})$. The IPW and PI estimators correspond to retaining only the first or last term, respectively. For each baseline, we derive the CCD estimators for all three methods by specializing the doubly robust density formulae from Section~\ref{sec:ccd_estimation}.

\subsection{Inverse Propensity Weighting Estimator}

The Inverse Propensity Weighting (IPW) estimator relies solely on the propensity score to correct for the covariate shift between the observational and counterfactual distributions. The pseudo-outcome is:
\begin{equation*}\label{eq:pseudo_ipw}
    \widehat{\xi}_{\mathrm{IPW}}(Z_{1i}) = \frac{A_{1i}}{\widehat{\pi}(X_{1i})} \phi(Y_{1i}) = \omega_i \phi(Y_{1i}).
\end{equation*}
Since $\widehat{\xi}_{\mathrm{IPW}}$ does not involve the outcome model $\widehat\mu_{0}$, the inner product with $\phi(y)$ reduces to:
\begin{equation}\label{eq:ipw_inner}
    \langle \widehat{\xi}_{\mathrm{IPW}}(Z_{1i}), \phi(y) \rangle_{\gH_\gY} = \omega_i k_\gY(Y_{1i}, y),
\end{equation}
yielding the vector $\rmD_\omega \rmk_{Y_1}(y)$ when collected over $i = 1, \ldots, n$.

The first-stage training remains the same for all methods. We describe the second stage for each method below.

\textbf{Ridge Regression.} Replacing the bracket term in \eqref{eq:emp_density} by $\rmD_\omega \rmk_{Y_1}(y)$, we obtain:
\begin{equation*}\label{eq:ipw_rr_density}
    \widehat{\fp}^1_{\mathrm{IPW,RR}}(y|v) = \rmk_{Y_1}(y)^\top \rmD_\omega (\rmK_{V} + n\lambda_1 \mathbf{I}_n)^{-1} \rmk_{V}(v).
\end{equation*}

\textbf{Deep Feature.} To train the second-stage network $\psi_\theta$, we substitute the IPW pseudo-outcome Gram matrix into the loss \eqref{eq:first_stage_DF_psi}. The entries of this matrix are:
\begin{equation*}\label{eq:ipw_df_gram}
    (\rmK_{\widehat{\xi}}^{\mathrm{IPW}})_{ij} = \langle \widehat{\xi}_{\mathrm{IPW}}(Z_{1i}), \widehat{\xi}_{\mathrm{IPW}}(Z_{1j}) \rangle_{\gH_\gY} = \omega_i \omega_j \, k_\gY(Y_{1i}, Y_{1j}),
\end{equation*}
which in matrix form reads $\rmK_{\widehat{\xi}}^{\mathrm{IPW}} = \rmD_\omega \rmK_{Y_1} \rmD_\omega$. Let $\psi_{\widehat\theta}$ be the learned feature map. Replacing the bracket term in \eqref{eq:df_density} by $\rmD_\omega \rmk_{Y_1}(y)$ gives:
\begin{equation*}\label{eq:ipw_df_density}
    \widehat{\fp}^1_{\mathrm{IPW,DF}}(y|v) = \rmk_{Y_1}(y)^\top \rmD_\omega \rmPsi_{\widehat\theta} (\rmPsi_{\widehat\theta}^{\top}\rmPsi_{\widehat\theta} + n\lambda_1\mathbf{I}_{M})^{-1} \psi_{\widehat\theta}(v).
\end{equation*}

\textbf{Neural-Kernel.} The vector $\mathbf{b}_i \in \R^M$ for the Neural-Kernel loss \eqref{eq:NK_loss} simplifies to:
\begin{equation*}\label{eq:ipw_nk_target}
    (\mathbf{b}_i)_j = \left\langle \phi(\widetilde{y}_j), \widehat{\xi}_{\mathrm{IPW}}(Z_{1i}) \right\rangle_{\gH_\gY} = \frac{A_{1i}}{\widehat{\pi}(X_{1i})} k_\gY(\widetilde{y}_j, Y_{1i}),
\end{equation*}
that is, $\mathbf{b}_i = \omega_i \rmk_i$. Substituting into \eqref{eq:NK_loss} yields the training loss:
\begin{equation*}\label{eq:ipw_nk_loss}
    \widehat\gL_{\mathrm{IPW,NK}}(\theta) = \frac{1}{n}\sum_{i=1}^n \left[ f_\theta(V_{1i})^\top \rmK_{M} f_\theta(V_{1i}) - 2\omega_i \, f_\theta(V_{1i})^\top \rmk_i \right].
\end{equation*}
Let $\widehat\theta$ be the parameter obtained from training the model with this loss. The density estimate follows from \eqref{eq:nk_density}: $\widehat{\fp}^1_{\mathrm{IPW,NK}}(y|v) = \sum_{j=1}^M f_{\widehat\theta}(v)_j k_\gY(\widetilde{y}_j, y)$.

\subsection{Plug-in Density Estimator}

The Plug-in (PI) estimator relies solely on the conditional mean embedding of the outcome, ignoring the propensity scores. The pseudo-outcome is the prediction of the first-stage model:
\begin{equation*}\label{eq:pseudo_pi}
    \widehat{\xi}_{\mathrm{PI}}(Z_{1i}) = \widehat\mu_{0}(X_{1i}).
\end{equation*}
The inner product with $\phi(y)$ is:
\begin{equation}\label{eq:pi_inner}
    \langle \widehat{\xi}_{\mathrm{PI}}(Z_{1i}), \phi(y) \rangle_{\gH_\gY} = \langle \widehat\mu_{0}(X_{1i}), \phi(y) \rangle_{\gH_\gY} = \widehat\mu_{0}(X_{1i})(y),
\end{equation}

The first-stage remains the same for all methods. We describe the second stage for each method below.

\textbf{Ridge Regression.} Assume the first-stage estimator is the Ridge Regression estimator fitted on the treatment group, $\widehat{\mu}_{0,\mathrm{RR}}(x) = \boldsymbol{\Phi}_0 (\rmK_{X_0} + m\lambda_0 \rmI_m)^{-1} \rmk_{X_0}(x)$. We then calculate \eqref{eq:pi_inner}:
\begin{equation*}
    \langle \widehat{\mu}_{0,\mathrm{RR}}(X_{1i}), \phi(y) \rangle_{\gH_\gY} = \rmk_{Y_0}(y)^\top (\rmK_{X_0} + m\lambda_0 \rmI_m)^{-1} \rmk_{X_0}(X_{1i}).
\end{equation*}
Collecting over $i = 1, \ldots, n$ yields the vector $\rmK_{X_0X_1}^\top (\rmK_{X_0} + m\lambda_0 \rmI_m)^{-1} \rmk_{Y_0}(y)$. Replacing the bracket term in \eqref{eq:emp_density} by this vector, we obtain:
\begin{equation*}\label{eq:pi_rr_density}
    \widehat{\fp}^1_{\mathrm{PI,RR}}(y|v) = \rmk_{Y_0}(y)^\top (\rmK_{X_0} + m\lambda_0 \rmI_m)^{-1} \rmK_{X_0X_1} (\rmK_{V} + n\lambda_1 \mathbf{I}_n)^{-1} \rmk_{V}(v).
\end{equation*}

\textbf{Deep Feature.} Assume the first-stage estimator is the Deep Feature estimator with learned feature map $\psi_{\widehat\theta_0}$. To train the second-stage network, we substitute the Gram matrix $\rmK_{\widehat{\xi}}^{\mathrm{PI}}=(\langle \widehat\mu_{0}(X_{1i}), \widehat\mu_{0}(X_{1j}) \rangle)_{i,j=1}^n$ into the loss \eqref{eq:first_stage_DF_psi}. Since $\widehat\mu_{0}(X_{1i}) = \boldsymbol{\Phi}_0\boldsymbol{\widehat\Psi}_{0} (\boldsymbol{\widehat\Psi}_{0}^\top \boldsymbol{\widehat\Psi}_{0} + m\lambda_0 \mathbf{I}_{M})^{-1} \boldsymbol{\widehat\Psi}_{01}$, we have:
\begin{equation*}\label{eq:pi_df_gram}
    \rmK_{\widehat{\xi}}^{\mathrm{PI}} = \boldsymbol{\widehat\Psi}_{01}^\top (\boldsymbol{\widehat\Psi}_{0}^\top \boldsymbol{\widehat\Psi}_{0} + m\lambda_0 \mathbf{I}_{M})^{-1} \boldsymbol{\widehat\Psi}_{0}^\top \rmK_{Y_0} \, \boldsymbol{\widehat\Psi}_{0} (\boldsymbol{\widehat\Psi}_{0}^\top \boldsymbol{\widehat\Psi}_{0} + m\lambda_0 \mathbf{I}_{M})^{-1} \boldsymbol{\widehat\Psi}_{01},
\end{equation*}
After the second-stage optimization, we calculate \eqref{eq:pi_inner}:
\begin{equation*}
    \langle \widehat{\mu}_{0,\mathrm{DF}}(X_{1i}), \phi(y) \rangle_{\gH_\gY} = \rmk_{Y_0}(y)^\top \boldsymbol{\widehat\Psi}_{0} (\boldsymbol{\widehat\Psi}_{0}^\top \boldsymbol{\widehat\Psi}_{0} + m\lambda_0 \mathbf{I}_{M})^{-1} \psi_{\widehat\theta_0}(X_{1i}).
\end{equation*}
Collecting over $i = 1, \ldots, n$ and replacing the bracket term in \eqref{eq:df_density} by this vector, we obtain:
\begin{equation*}\label{eq:pi_df_density}
    \widehat{\fp}^1_{\mathrm{PI,DF}}(y|v) = \rmk_{Y_0}(y)^\top \boldsymbol{\widehat\Psi}_{0} (\boldsymbol{\widehat\Psi}_{0}^\top \boldsymbol{\widehat\Psi}_{0} + m\lambda_0 \mathbf{I}_{M})^{-1} \boldsymbol{\widehat\Psi}_{01} \rmPsi_{\widehat\theta} (\rmPsi_{\widehat\theta}^{\top}\rmPsi_{\widehat\theta} + n\lambda_1\mathbf{I}_{M})^{-1} \psi_{\widehat\theta}(v).
\end{equation*}

\textbf{Neural-Kernel.} Assume the first-stage estimator is a Neural-Kernel estimator $\widehat{\mu}_{0,\mathrm{NK}}(x) = \sum_{j=1}^M g_{\widehat\theta_0}(x)_j \phi(\widetilde{y}_j)$ with the same grid points. The target vector $\mathbf{b}_i \in \R^M$ for the loss \eqref{eq:NK_loss} simplifies to:
\begin{equation*}\label{eq:pi_nk_target}
    (\mathbf{b}_i)_j = \left\langle \phi(\widetilde{y}_j), \widehat{\mu}_{0,\mathrm{NK}}(X_{1i}) \right\rangle_{\gH_\gY} = \sum_{l=1}^M g_{\widehat\theta_0}(X_{1i})_l \, k_\gY(\widetilde{y}_l, \widetilde{y}_j),
\end{equation*}
that is, $\mathbf{b}_i = \rmK_M g_{\widehat\theta_0}(X_{1i})$. Substituting into \eqref{eq:NK_loss} yields the training loss:
\begin{equation*}\label{eq:pi_nk_loss}
    \widehat\gL_{\mathrm{PI,NK}}(\theta) = \frac{1}{n}\sum_{i=1}^n \left[ f_\theta(V_{1i})^\top \rmK_{M} f_\theta(V_{1i}) - 2 f_\theta(V_{1i})^\top \rmK_{M} g_{\widehat\theta_0}(X_{1i}) \right].
\end{equation*}
Let $\widehat\theta$ be the parameter obtained from training the model with this loss. The density estimate follows from \eqref{eq:nk_density}: $\widehat{\fp}^1_{\mathrm{PI,NK}}(y|v) = \sum_{j=1}^M f_{\widehat\theta}(v)_j k_\gY(\widetilde{y}_j, y)$.

\section{Experimental Details}\label{sec:exp_details}

We implement all matrix operations and automatic differentiations in \texttt{JAX} \cite{Bradbury2018}, all neural networks in \texttt{Equinox} \cite{Kidger2021}, and all propensity score models in \texttt{scikit-learn} \cite{Pedregosa2011}. All code is available at \url{https://github.com/donlap/Conditional-Counterfactual-Mean-Embeddings}.

\subsection{Synthetic Data (Sections~\ref{sec:exp1}--\ref{sec:exp2})}

The data-generating process is described in Section~\ref{sec:exp1}. The observed outcome is $Y = A Y^1 + (1-A) Y^0$, where $Y^0$ is generated with the treatment effect set to zero. The ground truth density $\fp^1(y|v)$ is computed analytically: since the covariates are independent, conditioning on $V = X_{1:5}$ leaves $X_{6:10} \sim \mathcal{N}(\mathbf{1}_5, \mathbf{I}_5)$ unchanged, so the conditional moments of $Y^1$ given $V = v$ are available in closed form.


\paragraph{Hyperparameters.} For the Deep Feature and Neural-Kernel estimators, the learning rates scale linearly with sample size: $\kappa \times n/200$, where $\kappa = 2 \times 10^{-4}$ (DF) and $\kappa = 4 \times 10^{-4}$ (NK). The full configurations are given in Table~\ref{tab:exp1_config}. Section~\ref{sec:exp2} uses the same hyperparameters with $n = 20000$.

\begin{table}[h!]
    \centering
    \caption{Hyperparameter configurations for Sections~\ref{sec:exp1}--\ref{sec:exp2}.}
    \label{tab:exp1_config}
    \vspace{0.2cm}
    \begin{tabular}{lccc}
        \toprule
        \textbf{Hyperparameter} & \textbf{Ridge Regression} & \textbf{Deep Feature} & \textbf{Neural-Kernel} \\
        \midrule
        \multicolumn{4}{l}{\textit{Model Parameters}} \\
        Kernel Bandwidth ($\sigma$) & 2.0 & 2.0 & 2.0 \\
        Regularization ($\lambda_0=\lambda_1$) & 20.0 & 20.0 & N/A \\
        Grid/Feature Dimension ($M$) & N/A & 20 & 20 \\
        Network Architecture & N/A & \multicolumn{2}{c}{2 layers $\times$ 20 units, ReLU} \\
        \midrule
        \multicolumn{4}{l}{\textit{Training (SGD Optimizer)}} \\
        Learning Rate (Stage 1) & N/A & $2 \times 10^{-4} \times n/200$ & $4 \times 10^{-4}\times n/200$ \\
        Learning Rate (Stage 2) & N/A & $2 \times 10^{-4}\times n/200$ & $4 \times 10^{-4}\times n/200$ \\
		Momentum & N/A & 0.9 & 0.9 \\
        Epochs (Stage 1) & N/A & 6000 & 16000 \\
        Epochs (Stage 2) & N/A & 1000 & 500 \\
        Batch Size & Full Batch & Full Batch & Full Batch \\
        \bottomrule
    \end{tabular}
\end{table}

\paragraph{Evaluation.} We generate 10000 test points $X_{\text{test}} \sim \mathcal{N}(\mathbf{1}_{10}, \mathbf{I}_{10})$ and evaluate $\widehat{\fp}^1(y|v_{\text{test}})$ on 1000 grid points spanning $[\min(Y) - 2, \max(Y) + 2]$.

\subsection{MNIST Experiment (Section~\ref{sec:exp3})}

Images are normalized to $[0, 1]$. Propensity scores are clipped to $[0.01, 0.99]$. The propensity score is estimated by logistic regression. We also use early stopping by splitting the data into 80\% train and 20\% validation sets, with patience set to 10 epochs. The hyperparameters are given in Table~\ref{tab:exp3_config}.

\begin{table}[h!]
    \centering
    \caption{Hyperparameter configuration for the MNIST experiment (Section~\ref{sec:exp3}).}
    \label{tab:exp3_config}
    \vspace{0.2cm}
    \begin{tabular}{lc}
        \toprule
        \textbf{Hyperparameter} & \textbf{Deep Feature Estimator} \\
        \midrule
        Grid Points ($M$) & 1000 (randomly sampled) \\
        Network Architecture & 2 layers $\times$ 100 units, ReLU \\
        Kernel Bandwidth ($\sigma$) & 1.0 \\
		Regularization ($\lambda_0=\lambda_1$) & 1.0 \\
        \midrule
        \multicolumn{2}{l}{\textit{Training (SGD Optimizer)}} \\
        Learning Rate (Stage 1) & $0.1$ \\
        Learning Rate (Stage 2) & $0.01$ \\
		Momentum & 0.9 \\
        Epochs (Stage 1) & 100 \\
        Epochs (Stage 2) & 100 \\
        Batch Size & 32 \\
		Patience for Early Stopping & 10 \\
        \bottomrule
    \end{tabular}
\end{table}

\section{Background on Function Spaces and RKHS}
	\label{app:sobolev}

	We provide a brief review of the Bochner
	integral, (fractional) Sobolev spaces, and reproducing kernel Hilbert spaces (RKHS). A comprehensive treatment of this topic can be found in \cite[Chapter 1 \& 2]{Hytonen2016}.

\subsection{Functions Spaces and Fourier Analysis}

    In this subsection, let $\Omega \subseteq \R^d$ denote a generic domain (which will correspond to either $\gY$ or $\gV$ in the main paper).
    
	\begin{definition}[Lebesgue Space $L^{p}(\Omega)$]
		A function $f: \Omega \to \R$ is said to be in the \emph{Lebesgue space}
		$L^{p}(\Omega)$ for $1 \le p < \infty$ if it is measurable and its $L^{p}$ norm is
		finite. That is,
		\begin{align*}
			\norm{f}_{L^p(\Omega)}\coloneqq \left( \int_{\Omega}|f(x)|^{p}dx \right)^{1/p} < \infty, \intertext{and for $p=\infty$,}\norm{f}_{L^{\infty}(\Omega)}\coloneqq \inf \left\{ r \ge 0 : \mathfrak{m}(\{ |f(x)| > r \}) = 0 \right\} < \infty,
		\end{align*}
		where $\mathfrak{m}$ denotes the Lebesgue measure on $\R^{d}$.
	\end{definition}

	\begin{definition}[Weak Derivatives]
		Let $f$ be a function in $L^{1}(\Omega)$. A function
		$g \in L^{1}(\Omega)$ is the \emph{weak derivative} of $f$ with respect
		to $x_{i}$ if for all smooth test functions with compact
		support $\varphi \in C^{\infty}_{c}(\Omega)$, the following identity holds:
		\[
			\int_{\Omega}f(x) \frac{\partial \varphi(x)}{\partial x_{i}}dx = - \int_{\Omega}g
			(x) \varphi(x) dx.
		\]
		For a multi-index
		$\alpha = (\alpha_{1}, \dots, \alpha_{d}) \in \sN^{d}$, the $\alpha^{\text{th}}$\emph{-weak
		derivative}, denoted by $D^{\alpha}f$, must satisfy the following identity for
		all $\varphi \in C^{\infty}_{c}(\Omega)$:
		\[
			\int_{\Omega}f(x) (D^{\alpha}\varphi(x)) dx = (-1)^{|\alpha|}\int_{\Omega}(D^{\alpha}
			f(x)) \varphi(x) dx.
		\]
	\end{definition}

	\begin{definition}[Sobolev Space $W^{k,p}(\Omega)$]
		For $k \in \sN$ and $p \in [1,\infty]$, the \emph{Sobolev space} $W^{k,p}
		(\Omega)$ is the set of all functions $f \in L^{p}(\Omega)$ such
		that for every multi-index $\alpha \in \sN^{d}$ with total order $\abs{\alpha}
		\le k$, the weak derivative $D^{\alpha}f$ exists and is also in
		$L^{p}(\Omega)$.

		The norm for this space is defined as:
		\[
			\norm{f}_{W^{k,p}(\Omega)}\coloneqq \sum_{\abs{\alpha} \le k}\norm{D^\alpha f}
			_{L^p(\Omega)}.
		\]
	\end{definition}

	\begin{definition}[Fourier Transform]
		Let $f \in L^{1}(\R^{d})$ be an integrable function. The Fourier transform
		of $f$, denoted $\widehat{f}$ or $\mathcal{F}[f]$, is defined by the integral:
		\begin{equation*}
			\widehat{f}(\omega) = \frac{1}{(2\pi)^{d/2}}\int_{\R^{d}}e^{-i \omega^\top x} f(x)
			dx,
		\end{equation*}
		where $\omega \in \R^{d}$ is the frequency variable.
	\end{definition}
	
	One important property of the Fourier transform is that it is a linear isometry on $L^2(\R^{d})$ as the following theorem shows:

	\begin{theorem}[Plancherel's Theorem]
		The Fourier transform, initially defined on $L^{1}(\R^{d}) \cap L^{2}(\R^{d})$, extends uniquely to a bounded linear operator on $L^{2}(\R^{d})$. Moreover, the operator is unitary, meaning that for any $f \in L^2(\R^{d})$:
		\[
			\norm{f}_{L^2(\R^{d})}^{2} = \int_{\R^{d}}|f(x)|^{2} dx = \int_{\R^{d}}|\widehat{f}(\omega)|^{2} d\omega = \norm{\widehat{f}}_{L^2(\R^{d})}^{2}.
		\]
	\end{theorem}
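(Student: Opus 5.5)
The plan is to establish the isometry identity on a dense subspace where everything is classical, extend by continuity, and then obtain unitarity from surjectivity. I will work with the normalization fixed in the Definition of the Fourier transform above, $\widehat f(\omega)=(2\pi)^{-d/2}\int e^{-i\omega^\top x}f(x)\,dx$, for which no extra constants appear.

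\textbf{Step 1 (Gaussian computation and inversion on the Schwartz class).} First compute directly that the Gaussian $g(x)=e^{-\norm{x}^2/2}$ satisfies $\widehat g=g$. The computation factorizes over coordinates and reduces to a one-dimensional contour shift or ODE argument. By dilation, $\widehat{g_\varepsilon}$ is then an approximate identity for $g_\varepsilon(x)=e^{-\varepsilon\norm{x}^2/2}$.

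Next, for $f$ in the Schwartz class $\mathcal S(\R^d)$, insert the factor $g_\varepsilon(\omega)$ into $\int \widehat f(\omega)e^{i\omega^\top x}d\omega$ and apply Fubini. This identifies the result with $f$ convolved with a Gaussian approximate identity. Letting $\varepsilon\to 0$ by dominated convergence gives the inversion formula $f(x)=(2\pi)^{-d/2}\int \widehat f(\omega)e^{i\omega^\top x}d\omega$.

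Differentiation and multiplication by polynomials are interchanged by the transform. Hence $\mathcal F$ maps $\mathcal S$ into $\mathcal S$, and by inversion it maps $\mathcal S$ onto $\mathcal S$.

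\textbf{Step 2 (Parseval on $\mathcal S$, then on $L^1\cap L^2$).} For $f,h\in\mathcal S$, Fubini gives the multiplication formula $\int \widehat f\, h=\int f\,\widehat h$. Apply it with $h=\overline{\widehat{u}}$. By the inversion formula, $\widehat{h}=\overline{u}$. This yields $\int \widehat f\,\overline{\widehat u}=\int f\,\overline u$, and in particular $\norm{\widehat f}_{L^2}=\norm f_{L^2}$ on $\mathcal S$.

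To reach general $f\in L^1\cap L^2$, approximate $f$ by $f_k\in\mathcal S$ (for example, mollify and truncate) with $f_k\to f$ simultaneously in $L^1$ and $L^2$. The $L^1$ convergence gives $\widehat{f_k}\to\widehat f$ uniformly. The isometry on $\mathcal S$ makes $\widehat{f_k}$ Cauchy in $L^2$, so its $L^2$ limit coincides almost everywhere with $\widehat f$. Therefore $\norm{\widehat f}_{L^2}=\norm f_{L^2}$.

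\textbf{Step 3 (extension and unitarity).} Since $L^1\cap L^2$ (indeed $\mathcal S$) is dense in $L^2$, the isometry extends uniquely to a bounded linear isometry on $L^2(\R^d)$ by the standard BLT/continuity argument. Uniqueness holds because any two bounded extensions agree on a dense set.

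An isometry has closed range. Its range contains $\mathcal F(\mathcal S)=\mathcal S$, which is dense, so the range is all of $L^2$. A surjective isometry is unitary, and the identity $\norm f_{L^2}^2=\norm{\widehat f}_{L^2}^2$ holds for all $f\in L^2$.

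\textbf{Main obstacle.} The expected main obstacle is Step 1: the inversion formula and the self-duality of the Gaussian are where the actual analysis (Fubini plus the approximate-identity limit) happens. Everything afterwards is density and functional-analytic bookkeeping. The only other delicate point is checking, in Step 2, that the abstract $L^2$ limit agrees with the integral definition on $L^1\cap L^2$, which the uniform convergence argument handles.
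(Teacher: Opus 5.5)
The paper gives no proof of this statement. Plancherel's theorem is only recalled as classical background in Appendix~\ref{app:sobolev}, with a pointer to a standard reference. It is then used as a black box, for example behind the Fourier-domain form of the RKHS norm in Proposition~\ref{prop:kernelinv} and in the projection-error arguments. So there is no argument in the paper to compare yours against.

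Your proposal is the standard textbook proof, and it is correct. It runs through Gaussian self-duality, inversion on $\mathcal{S}(\R^d)$, and Parseval via the multiplication formula with $h=\overline{\widehat u}$. It then uses simultaneous $L^1$ and $L^2$ approximation to reach $L^1\cap L^2$, continuous extension with uniqueness from density, and surjectivity from closed range together with $\mathcal{F}(\mathcal{S})=\mathcal{S}$. You also correctly identify the abstract $L^2$ limit with the integral transform on $L^1\cap L^2$: uniform convergence from the $L^1$ bound, plus an a.e.\ convergent subsequence of the $L^2$-Cauchy sequence.

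One ordering detail needs fixing in Step 1. The inclusion $\mathcal{F}(\mathcal{S})\subseteq\mathcal{S}$ must come before the inversion formula, not after it. Letting $\varepsilon\to 0$ on the frequency side by dominated convergence requires $\widehat f\in L^1$. That inclusion follows from the differentiation/multiplication interchange alone and does not use inversion, so reordering removes any appearance of circularity.
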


	While $W^{k,p}$ spaces are defined for integer $k$, it is often necessary to consider spaces of functions with fractional smoothness $s \in \R$. The Fourier transform provides a natural characterization of these spaces when $p=2$.

	\begin{definition}[Fractional Sobolev Space $H^{s}(\R^{d})$]
		For any $s \in \R$, the fractional Sobolev space $H^{s}(\R^{d})$ is defined as the space of tempered distributions $f$ such that:
		\[
			\norm{f}_{H^s(\R^{d})}^{2} \coloneqq \int_{\R^{d}} (1 + \norm{\omega}^{2})^{s} |\widehat{f}(\omega)|^{2} d\omega < \infty.
		\]
		When $s=k$ is a non-negative integer, this space is isomorphic to the Sobolev space $W^{k,2}(\R^{d})$ equipped with the standard norm.
	\end{definition}

	A crucial property of Sobolev spaces is the embedding theorem, which relates regularity to integrability; it allows us to embed a space with higher derivatives into a space with higher integrability. To state the theorem, we introduce two notions of smooth domains.

	\begin{definition}[Lipschitz Boundary] \label{def:lipschitz_boundary}
    The domain $\Omega \subset \R^{d}$ is said to have a \emph{Lipschitz boundary} if it is a non-empty, bounded set and its boundary $\partial \Omega$ satisfies the following property: For every point $x \in \partial \Omega$, there exists a radius $r > 0$ and an orthogonal coordinate system $(y_1, \dots, y_d)$ (which may depend on $x$) such that if we define the cylinder $C = \{y : |(y_1, \dots, y_{d-1})| < r, |y_d| < r\}$, then:
    \begin{equation}
        \Omega \cap C = \{y \in C : y_d < \gamma(y_1, \dots, y_{d-1}) \},
    \end{equation}
    where $\gamma: \R^{d-1} \to \R$ is a Lipschitz continuous function.
\end{definition}

\begin{definition}[Uniform Interior Cone Condition] \label{def:cone_condition}
    The domain $\Omega$ is said to satisfy a \emph{uniform interior cone condition} if there exist an angle $\theta \in (0, \pi/2)$ and a radius $\delta > 0$ such that for every $x \in \bar{\Omega}$, there exists a unit vector $\xi \in \R^{d}$ such that the cone defined by:
    \begin{equation}
        \mathfrak{C}(x, \xi, \theta, \delta) = \left\{ x + z : z \in \R^{d}, \|z\| < \delta, z^\top \xi \ge \|z\| \cos \theta \right\},
    \end{equation}
    is contained in $\Omega$.
\end{definition}

	\begin{theorem}[Sobolev Embedding]
		Suppose that either $\Omega = \R^d$ or $\Omega$ is a bounded set with a Lipschitz boundary in $\R^d$. Let $(s,p), (t,q) \in \mathbb{N}_0 \times [1,\infty) \cup  [0,\infty) \times \{ 2 \}$. If $s - \frac{d}{p} = t - \frac{d}{q}$, then we have the continuous embedding:
				\[
					W^{s,p}(\R^{d}) \hookrightarrow W^{t,q}(\R^{d}).
				\]
	\end{theorem}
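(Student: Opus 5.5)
The plan is to reduce everything to the whole space $\R^d$ and then prove the embedding there by iterating a single first-order inequality, with a Fourier/potential argument for the fractional $L^2$ endpoints.

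\textbf{Reduction to $\R^d$.} When $\Omega$ is bounded with a Lipschitz boundary (Definition~\ref{def:lipschitz_boundary}), I would invoke Stein's universal extension operator $E$. It maps $W^{k,p}(\Omega)\to W^{k,p}(\R^d)$ boundedly for every $k\in\sN_0$ and $p\in[1,\infty]$, and it satisfies $(Ef)|_\Omega=f$. By interpolation, $E$ is also bounded $H^{s}(\Omega)\to H^{s}(\R^d)$ for fractional $s$. Given this operator, $\norm{f}_{W^{t,q}(\Omega)}\le \norm{Ef}_{W^{t,q}(\R^d)}\lesssim \norm{Ef}_{W^{s,p}(\R^d)}\lesssim\norm{f}_{W^{s,p}(\Omega)}$. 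So it suffices to prove the embedding on $\R^d$. I also note that the relation $s-d/p=t-d/q$ forces $t\le s$ exactly when $p\le q$. I would read the theorem in that regime, with $q<\infty$. The case $p=q$ gives $s=t$, and the claim is then trivial.

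\textbf{Integer case ($s,t\in\sN_0$, $p<q$).} The basic step is the Gagliardo--Nirenberg--Sobolev inequality
\[
\norm{u}_{L^{p^*}(\R^d)}\lesssim\norm{\nabla u}_{L^p(\R^d)},\qquad \frac1{p^*}=\frac1p-\frac1d,\quad 1\le p<d.
\]
I would prove it first for $p=1$, using the standard product-of-one-dimensional-integrals argument and the generalized H\"older inequality. For general $p$, I would apply the $p=1$ case to $|u|^{\gamma}$ with a suitable power $\gamma$, and then extend from $C_c^\infty$ by density. Applying the inequality to each $D^\alpha f$ with $|\alpha|\le s-1$ gives $W^{s,p}\hookrightarrow W^{s-1,p^*}$. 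Iterating $s-t$ times, the exponents satisfy $1/p_j=1/p-j/d$, and this lands exactly on $W^{t,q}$. The hypothesis $q<\infty$ guarantees that each intermediate exponent stays below $d$.

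\textbf{Fractional $L^2$ cases.} If both pairs use exponent $2$, then $s=t$ and the claim is trivial. The genuinely mixed case is $(s,p)$ integer with $p<2$ and target $(t,2)$ with $t$ fractional. For $1<p<2$, I would use Calder\'on's identification $W^{s,p}=H^{s,p}$, meaning $f=J^{-s}g$ with $g\in L^p$ and $J=(1-\Delta)^{1/2}$. Then $J^{t}f=J^{-(s-t)}g$, and the Bessel potential $J^{-(s-t)}$ maps $L^p\to L^2$ when $1/2=1/p-(s-t)/d$. This follows from the Hardy--Littlewood--Sobolev inequality, since the Bessel kernel is dominated by the Riesz kernel $|x|^{-(d-(s-t))}$ near the origin and decays exponentially at infinity. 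The remaining case has integer-$p$ source and $(t,2)$ target with $t$ an integer; it is covered by the integer case above.

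\textbf{Main obstacle.} I expect the hardest step to be the endpoint $p=1$ with a fractional target $(t,2)$. There Calder\'on's theorem and the $L^1$ HLS bound both fail. My first attempt would use the embeddings already proved: integer steps give $W^{s,1}\hookrightarrow W^{s-1,d/(d-1)}$, which moves the source exponent strictly above $1$. From there the Bessel-potential argument applies with the correct scaling, since each step preserves $s-d/p$. If $s=1$ this route is unavailable. In that case I would instead control $\norm{f}_{H^{t}}^2$ directly, by estimating $\int(1+\norm{\omega}^2)^t|\widehat f|^2$ via a Littlewood--Paley decomposition. I would bound the dyadic pieces using $\norm{\widehat{\nabla f}}_\infty\le\norm{\nabla f}_1$ together with Bernstein-type inequalities.
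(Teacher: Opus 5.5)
The paper quotes this theorem as background and gives no proof, so the comparison is with the standard result.

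Several parts of your proposal are sound:
\begin{itemize}
\item the reduction to $\R^d$ via Stein's extension operator;
\item the restriction to $p\le q$, which is genuinely needed, since otherwise e.g.\ $L^2(\R^2)\not\hookrightarrow W^{1,1}(\R^2)$;
\item the iterated Gagliardo--Nirenberg--Sobolev argument for integer pairs;
\item the Bessel-potential/HLS argument for $1<p<2$.
\end{itemize}

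\textbf{The genuine gap is your ``main obstacle'': the claimed embedding is false there, so no argument can succeed.} With $p=1$, $q=2$ you get $t=s-\tfrac d2$, which is fractional only for odd $d$. For odd $d\ge3$, the constraint $t\ge0$ forces $s\ge\tfrac{d+1}{2}\ge2$. Your route (one GNS step to $W^{s-1,d/(d-1)}$, then the Bessel potential, using $1<d/(d-1)<2$) then covers everything.

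What remains is $d=1$, for every $s\ge1$, not only $s=1$. GNS at $p=1$ requires $p<d$, so your first step is unavailable in one dimension for any $s$. In that case the statement asserts, e.g., $W^{1,1}(\R)\hookrightarrow H^{1/2}(\R)$, which fails:
\begin{itemize}
\item Take $f_\epsilon=\rho_\epsilon*\mathbb{I}_{[0,1]}$ with $\rho_\epsilon$ a mollifier. Then $\norm{f_\epsilon}_{W^{1,1}}\le 3$.
\item But $|\widehat{f_\epsilon}(\omega)|\gtrsim|\sin(\omega/2)|/|\omega|$ for $|\omega|\lesssim1/\epsilon$.
\item Hence $\int(1+\omega^2)^{1/2}|\widehat{f_\epsilon}(\omega)|^2\,d\omega\gtrsim\log(1/\epsilon)\to\infty$.
\end{itemize}
The same construction, using a function whose $(s-1)$-st derivative jumps, gives $W^{s,1}(\R)\not\hookrightarrow H^{s-1/2}(\R)$ for all $s\ge1$.

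Your Littlewood--Paley plan runs into exactly this obstruction. The bound $|\widehat{f'}|\le\norm{f'}_{L^1}$ controls each dyadic block of $\int|\omega|\,|\widehat f|^2$ by a constant times $\norm{f'}_{L^1}^2$, uniformly over blocks. That yields only $f\in B^{1/2}_{2,\infty}$, and summing over infinitely many blocks diverges. The correct fix is to exclude this endpoint, for instance by requiring $p>1$ (or $d\ge2$) whenever the target is a fractional $H^t$. Note that the paper's later use $W^{s,q}(\gY)\hookrightarrow W^{s-d_y(1/q-1/2)_+,2}$ lands exactly in this case when $q=1$ and $d_y=1$.

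\textbf{Separately, your case analysis omits one mixed case.} You state that the only genuinely mixed case is an integer source with $p<2$ and a fractional $L^2$ target. The reverse also occurs: a fractional source $(s,2)$ with an integer target $(t,q)$, $q>2$, e.g.\ $H^{1/2}(\R^3)\hookrightarrow L^3(\R^3)$. This case is true, and your tools handle it when run in the other direction:
\begin{itemize}
\item Write $f=J^{-s}g$ with $g\in L^2$, so that $J^tf=J^{-(s-t)}g$.
\item HLS gives $J^{-(s-t)}:L^2\to L^q$ when $1/q=1/2-(s-t)/d$.
\item Calder\'on's identification $H^{t,q}=W^{t,q}$ applies since $1<q<\infty$.
\end{itemize}
On bounded $\Omega$ this also needs the extension operator on fractional $H^s(\Omega)$, which you already obtain by interpolation.
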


	\begin{remark}[Boundary Regularity]
		The embeddings stated above are for the entire space $\R^{d}$, where no boundary conditions are required. However, if the domain is restricted to a proper open subset $\Omega \subset \R^{d}$, these embeddings require $\Omega$ to satisfy specific regularity conditions. Typically, $\Omega$ must have a \emph{Lipschitz boundary} (or satisfy the strong cone property) to ensure the existence of a bounded extension operator from $W^{s,p}(\Omega)$ to $W^{s,p}(\R^{d})$.
	\end{remark}

	\subsection{RKHS and Bochner's Theorem}\label{sec:rkhs}

	\begin{definition}[Reproducing Kernel Hilbert Space]
		A \emph{Reproducing Kernel Hilbert Space} (RKHS) $\gH_{\gY}$ is a Hilbert space of functions $f: \gY \to \R$ in which the point evaluation functional is a bounded linear operator. That is, for every $y \in \gY$, there exists a constant $C_{y} > 0$ such that $|f(y)| \le C_{y}\norm{f}_{\gH_\gY}$ for all $f \in \gH_{\gY}$.
	\end{definition}

	By the Riesz representation theorem, the boundedness of the evaluation functional implies the existence of a unique function $k: \gY \times \gY \to \R$, called the \emph{reproducing kernel}. For every $y \in \gY$, the function $k(\cdot, y)$ belongs to $\gH_{\gY}$. We define the \emph{canonical feature map} $\phi: \gY \to \gH_{\gY}$ as $\phi(y) \coloneqq k(\cdot, y)$.
	
	For all $f \in \gH_{\gY}$ and $y \in \gY$, the function value is given by the inner product:
		\[
			f(y) = \langle f, \phi(y) \rangle_{\gH_\gY} = \langle f, k(\cdot, y) \rangle_{\gH_\gY}.
		\]

	For any finite set of distinct points $\{y_{1}, \dots, y_{M}\} \subset \gY$, the \emph{Gram matrix} (or kernel matrix) $\rmK _M\in \R^{M \times M}$ with entries $(\rmK_M)_{ij} = k(y_{i}, y_{j}) = \langle \phi(y_{i}), \phi(y_{j}) \rangle_{\gH_\gY}$ is invertible as stated in the following proposition:

	\begin{proposition}[Independence of Feature Maps]
		If the kernel $k$ is strictly positive definite, then for any set of distinct points $\{y_{1}, \dots, y_{M}\} \subset \gY$, the Gram matrix $\rmK_M$ is invertible and the set of feature vectors $\{\phi(y_{1}), \dots, \phi(y_{M})\}$ is linearly independent in $\gH_{\gY}$.
	\end{proposition}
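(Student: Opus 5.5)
The plan is to reduce both claims to the single identity expressing the quadratic form of the Gram matrix as a squared RKHS norm, and then use the definition of strict positive definiteness: for any distinct points $y_1,\dots,y_M$ and any nonzero $c\in\R^M$ we have $\sum_{i,j} c_i c_j k(y_i,y_j) > 0$.

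\textbf{Step 1 (the key identity).} For $c=(c_1,\dots,c_M)^\top\in\R^M$, expand the squared norm using bilinearity of $\inprod{\cdot}{\cdot}_{\gH_\gY}$ and $k(y_i,y_j)=\inprod{\phi(y_i)}{\phi(y_j)}_{\gH_\gY}$, which follows from the reproducing property applied to $k(\cdot,y_j)\in\gH_\gY$. This gives
\[
c^\top \rmK_M c = \sum_{i,j=1}^M c_i c_j \inprod{\phi(y_i)}{\phi(y_j)}_{\gH_\gY} = \Bigl\lVert \sum_{i=1}^M c_i\phi(y_i)\Bigr\rVert_{\gH_\gY}^2 .
\]

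\textbf{Step 2 (linear independence).} Suppose $\sum_i c_i\phi(y_i)=0$ in $\gH_\gY$. Then by Step 1, $c^\top\rmK_M c=0$. Strict positive definiteness at the distinct points $y_1,\dots,y_M$ then forces $c=0$. Hence $\{\phi(y_1),\dots,\phi(y_M)\}$ is linearly independent.

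\textbf{Step 3 (invertibility).} The matrix $\rmK_M$ is square and symmetric. If $\rmK_M c=0$, then $c^\top\rmK_M c=0$, so $c=0$ by strict positive definiteness. Thus $\rmK_M$ has trivial kernel and is invertible; indeed it is positive definite, so all its eigenvalues are positive.

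There is no substantial obstacle. The only point needing care is the justification of $k(y_i,y_j)=\inprod{\phi(y_i)}{\phi(y_j)}_{\gH_\gY}$, which is exactly the reproducing property with $f=\phi(y_j)$. One should also note that distinctness of the points is where the hypothesis enters: with a repeated point, two rows of $\rmK_M$ coincide and the conclusion fails.
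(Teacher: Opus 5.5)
Your proof is correct and complete. The identity $c^\top \rmK_M c = \norm{\sum_{i=1}^M c_i \phi(y_i)}_{\gH_\gY}^2$, combined with strict positive definiteness at distinct points, gives both the linear independence of the feature vectors and the invertibility (indeed positive definiteness) of $\rmK_M$. The paper states this proposition as standard background without proof, and your argument is the standard justification.
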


	A function $f: \gV \to \gH_{\gY}$ is called a \emph{simple function} if it can
	be written as $f(v) = \sum_{i=1}^{N}x_{i}\mathbb{I}_{E_i}(v)$, where $\mathbb{I}_{E_i}(\cdot)$ is the indicator function and $\{E_{i}\}
	_{i=1}^{N}$ are disjoint, Lebesgue measurable subsets of $\gV$, and $\{x_{i}\}_{i=1}
	^{N}$ are vectors in $\gH_{\gY}$. The integral of $f(v)$ is given by
	$\int_{\gV}f(v) dv \coloneqq \sum_{i=1}^{N}x_{i}\mathfrak{m}(E_{i})$. Note that the
	result of this integral is a single vector in the Hilbert space $\gH_{\gY}$.

	\begin{definition}[Bochner Integral]\label{def:bochner}
		A function $h: \gV \to \gH_{\gY}$ is said to be \emph{Bochner integrable} if
		it satisfies two conditions:
		\begin{enumerate}
			\item There exists a sequence of simple functions $\{f_{n}\}_{n=1}^{\infty}$
				such that $\lim_{n\to\infty}f_{n}(v) = h(v)$ for almost every $v \in \gV$. In this case, we say that $h$ is \emph{Bochner measurable}.

			\item The real-valued function $v \mapsto \norm{h(v)}_{\gH_\gY}$ is Lebesgue
				integrable, i.e.,
				\[
					\int_{\gV}\norm{h(v)}_{\gH_\gY}dv < \infty.
				\]
		\end{enumerate}
		If a function $h$ is Bochner integrable, its \emph{Bochner integral} is defined
		as the limit of the integrals of its approximating sequence of simple functions:
		\[
			\int_{\gV}h(v) dv \coloneqq \lim_{n\to\infty}\int_{\gV}f_{n}(v) dv.
		\]
		The completeness of the Hilbert space $\gH_{\gY}$ guarantees that this limit
		exists and is unique (up to sets of Lebesgue measure zero), regardless of the
		choice of the approximating sequence $\{f_{n}\}$.
	\end{definition}

	\begin{definition}[Positive Definite Kernel]
		A function $k: \R^{d} \times \R^{d} \to \mathbb{C}$ is called a \emph{positive
		definite kernel} if for any integer $N \ge 1$, any set of points $\{y_{1}, \dots,
		y_{N}\} \subset \R^{d}$, and any coefficients $\{c_{1}, \dots, c_{N}\} \subset
		\mathbb{C}$, the following inequality holds:
		\[
			\sum_{i=1}^{N} \sum_{j=1}^{N} c_{i} \overline{c_{j}} k(y_{i}, y_{j}) \ge 0.
		\]
	\end{definition}

	Of particular interest in this work are kernels that are translation-invariant. Such
	kernels can be written as $k(y,y') = \varphi(y-y')$ for some continuous function
	$\varphi: \R^{d} \to \mathbb{C}$. In this setting, the following Bochner's theorem provides a connection between positive definite functions and Fourier analysis.

	\begin{theorem}[Bochner's Theorem]
		A continuous function $\varphi: \R^{d}\to \mathbb{C}$ is positive definite (i.e.,
		it defines a positive definite kernel $k(y,y')=\varphi(y-y')$) if and only if
		it is the Fourier transform of a non-negative, finite Borel measure on $\R^{d}$.
	\end{theorem}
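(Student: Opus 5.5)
The direction ``Fourier transform of a finite measure $\Rightarrow$ positive definite'' is a direct computation. The converse is where the work lies: I would regularize $\varphi$ by a Gaussian, show that each regularized function has a non-negative integrable Fourier transform, and pass to the limit with L\'evy's continuity theorem. Throughout I use the paper's normalization $(2\pi)^{-d/2}$. For a measure I write $\varphi(x)=\int_{\R^d} e^{-i\omega^\top x}\,d\mu(\omega)$, with any constant factor absorbed into $\mu$; this does not affect positivity or finiteness.

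\emph{Easy direction.} Suppose $\varphi(x)=\int e^{-i\omega^\top x}d\mu(\omega)$ with $\mu\ge 0$ finite. Continuity of $\varphi$ follows from dominated convergence, since $|e^{-i\omega^\top x}|=1$ and $\mu(\R^d)<\infty$. For points $y_1,\dots,y_N$ and coefficients $c_1,\dots,c_N$, expanding gives
\[
\sum_{i,j}c_i\overline{c_j}\,\varphi(y_i-y_j)=\int_{\R^d}\Bigl|\sum_i c_i e^{-i\omega^\top y_i}\Bigr|^2 d\mu(\omega)\ge 0 .
\]

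\emph{Hard direction.} Let $\varphi$ be continuous and positive definite.

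First, record the standard consequences of positive definiteness. Taking $N=1$ and $N=2$ gives $\varphi(0)\ge0$, $\varphi(-x)=\overline{\varphi(x)}$ and $|\varphi(x)|\le\varphi(0)$, so $\varphi$ is bounded. By continuity, the discrete inequality upgrades to its integral form: for every $g\in L^1(\R^d)$,
\[
\iint \varphi(x-y)\,g(x)\,\overline{g(y)}\,dx\,dy\ge 0 .
\]
This is obtained by approximating the double integral with Riemann sums, first for $g$ continuous with compact support and then by density.

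Second, fix $\sigma>0$ and define
\[
f_\sigma(\omega)=(2\pi)^{-d}\int \varphi(x)\,e^{-\sigma\|x\|^2/2}\,e^{i\omega^\top x}\,dx ,
\]
which is well defined because $\varphi$ is bounded. I would show $f_\sigma\ge0$ by applying the integral inequality to $g(x)=e^{-\sigma\|x\|^2}e^{-i\omega^\top x}$. Changing variables to $u=x-y$ and $w=x+y$, the Gaussian factor splits as $e^{-\sigma\|u\|^2/2}e^{-\sigma\|w\|^2/2}$. Integrating out $w$ leaves a positive constant times $f_\sigma(\omega)$, so $f_\sigma\ge 0$.

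Third, show that $f_\sigma$ is integrable with total mass $\varphi(0)$. Multiply $f_\sigma$ by a Gaussian $e^{-\epsilon\|\omega\|^2/2}$ and integrate. By Fubini this equals a convolution of $\varphi e^{-\sigma\|\cdot\|^2/2}$ with an approximate identity, evaluated at $0$. As $\epsilon\to0$ this tends to $\varphi(0)$, using continuity at $0$. Since $f_\sigma\ge0$, monotone convergence then gives $\int f_\sigma=\varphi(0)<\infty$. Fourier inversion now holds: $\varphi(x)e^{-\sigma\|x\|^2/2}=\int e^{-i\omega^\top x}f_\sigma(\omega)\,d\omega$. So the measures $\mu_\sigma=f_\sigma\,d\omega$ are non-negative, all have mass $\varphi(0)$, and have Fourier transforms $\varphi e^{-\sigma\|\cdot\|^2/2}$.

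Fourth, let $\sigma\to0$. If $\varphi(0)=0$ then $\varphi\equiv0$ and we take $\mu=0$. Otherwise, the normalized measures $\mu_\sigma/\varphi(0)$ are probability measures whose characteristic functions converge pointwise to $\varphi/\varphi(0)$, and this limit is continuous at $0$. L\'evy's continuity theorem gives tightness and weak convergence to a probability measure. Rescaling by $\varphi(0)$ yields a finite measure $\mu\ge0$ whose Fourier transform is $\varphi$.

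I expect the main obstacle to be the second and third steps: getting the algebra of the change of variables right so that positivity of $f_\sigma$ comes out cleanly, and justifying integrability of $f_\sigma$ before Fourier inversion may be applied. The approximate-identity argument with non-negativity and monotone convergence is the point I would check most carefully.
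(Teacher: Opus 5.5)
Your proof is correct. The paper itself does not prove this statement. Bochner's theorem is quoted without proof as classical background in the appendix. There it only serves to introduce the spectral density $\widehat\varphi$, whose existence the paper simply assumes, behind the Fourier-domain formula for the RKHS norm in Proposition~\ref{prop:kernelinv}.

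So the comparison is between citing the result and your self-contained argument. Yours is the standard Gaussian-regularization proof, and its steps check out:
\begin{itemize}
\item Riemann sums together with $|\varphi|\le\varphi(0)$ give the integral form of positive definiteness on $L^1$.
\item The substitution $u=x-y$, $w=x+y$ has Jacobian $2^{-d}$ and uses $\|x\|^2+\|y\|^2=(\|u\|^2+\|w\|^2)/2$.
\item Fubini identifies $\int f_\sigma(\omega)e^{-\epsilon\|\omega\|^2/2}\,d\omega$ with $\bigl((\varphi e^{-\sigma\|\cdot\|^2/2})*\gamma_\epsilon\bigr)(0)$, where $\gamma_\epsilon(x)=(2\pi\epsilon)^{-d/2}e^{-\|x\|^2/(2\epsilon)}$ is the heat kernel. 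Monotone convergence then gives $\int f_\sigma=\varphi(0)$.
\item L\'evy's continuity theorem finishes the argument.
\end{itemize}

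There is one small slip. With $g(x)=e^{-\sigma\|x\|^2}e^{-i\omega^\top x}$, the surviving $u$-integral is $\int\varphi(u)e^{-\sigma\|u\|^2/2}e^{-i\omega^\top u}\,du=(2\pi)^d f_\sigma(-\omega)$, not $(2\pi)^d f_\sigma(\omega)$. This is harmless because $\omega$ is arbitrary; alternatively, flip the sign of the phase in $g$.

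As for what your route buys: the functional-analytic proofs (the Riesz--Markov representation of the positive functional induced by $\varphi$, or the spectral theorem for the unitary representation it generates) extend to locally compact abelian groups. Yours is tied to $\R^d$, but needs only Fubini, $L^1$ Fourier inversion and L\'evy's theorem.

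It also fits how the paper uses the result. When $\varphi\in L^1$, as for the Gaussian, Mat\'ern and Laplace kernels the paper has in mind, dominated convergence lets you send $\sigma\to0$ inside $f_\sigma$ directly. Your second and third steps then already show that the spectral density is non-negative, integrable with total mass $\varphi(0)$ (up to the normalizing constant), and inverts to $\varphi$. L\'evy's theorem is needed only for general, possibly singular, spectral measures.
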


	In our case, we assume this measure has a density, which we call $\widehat{\varphi}(\omega
	)$. This means $\varphi(z) = \mathcal{F}^{-1}[\widehat{\varphi}](z)$, and we require $\widehat
	{\varphi}(\omega) \ge 0$ for all $\omega$. This spectral perspective allows us to
	express the RKHS norm via an integral in the frequency domain.

	\begin{proposition}[RKHS Norm in the Fourier Domain]\label{prop:kernelinv}
		For a translation-invariant kernel $k(y,y')=\varphi(y-y')$ on $\gY \subseteq \R^{d_y}$, the RKHS norm of a
		function $f \in \gH_{\gY}$ is given by:
		\[
			\norm{f}_{\gH_\gY}^{2}= \int_{\R^{d_y}}\frac{|\widehat{f}(\omega)|^{2}}{\widehat{\varphi}(\omega)}
			d\omega.
		\]
	\end{proposition}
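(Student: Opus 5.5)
The plan is to identify $\gH_\gY$ with an explicitly constructed Hilbert space and then invoke uniqueness of the RKHS for a given kernel (Moore--Aronszajn). I first treat the case $\gY=\R^{d_y}$ and then pass to a general domain by restriction.

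\textbf{Step 1 (the candidate space).} I would define
\[
\mathcal{G}=\Bigl\{f\in L^2(\R^{d_y})\cap C(\R^{d_y}) : \int_{\R^{d_y}}\frac{|\widehat f(\omega)|^2}{\widehat\varphi(\omega)}\,d\omega<\infty\Bigr\},
\qquad
\inprod{f}{g}_{\mathcal{G}}=\int\frac{\widehat f(\omega)\overline{\widehat g(\omega)}}{\widehat\varphi(\omega)}\,d\omega .
\]
Here $\widehat f$ is required to vanish wherever $\widehat\varphi$ does. Since $\varphi$ is bounded and continuous, Bochner's theorem gives $\widehat\varphi\ge 0$ and $\widehat\varphi\in L^1$. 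Cauchy--Schwarz then yields
\[
\int|\widehat f|\le\Bigl(\int\widehat\varphi\Bigr)^{1/2}\norm{f}_{\mathcal{G}},
\]
so $\widehat f\in L^1$. Fourier inversion therefore represents $f$ as a continuous function, and
\[
|f(y)|\lesssim\norm{f}_{\mathcal G}
\]
uniformly in $y$. Hence point evaluation is bounded on $\mathcal{G}$.

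\textbf{Step 2 (completeness).} The map $f\mapsto\widehat f/\sqrt{\widehat\varphi}$ is an isometry onto $L^2(\{\widehat\varphi>0\})$. By Plancherel's Theorem together with the $L^1$ bound from Step 1, Cauchy sequences in $\mathcal G$ converge both in $L^2$ and uniformly, to a limit that still lies in $\mathcal G$. So $\mathcal G$ is a Hilbert space.

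\textbf{Step 3 (reproducing property).} With the convention $\varphi=\mathcal{F}^{-1}[\widehat\varphi]$, translation gives
\[
\mathcal{F}[\varphi(\cdot-y)](\omega)=e^{-i\omega^\top y}\,\widehat\varphi(\omega).
\]
It follows that
\[
\norm{k(\cdot,y)}_{\mathcal G}^2=\int\widehat\varphi<\infty,
\qquad
\inprod{f}{k(\cdot,y)}_{\mathcal G}=\int\widehat f(\omega)\,e^{i\omega^\top y}\,d\omega=(2\pi)^{d_y/2}f(y).
\]
The factor $(2\pi)^{d_y/2}$ comes only from the Fourier normalization. I would absorb it by rescaling $\widehat\varphi$ (equivalently, by reading the proposition with the spectral measure normalized so that $\varphi(z)=\int e^{i\omega^\top z}\widehat\varphi(\omega)\,d\omega$), and state this convention explicitly. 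With that normalization, $\mathcal G$ is a Hilbert space of functions with bounded evaluations whose reproducing kernel is $k$. By uniqueness of the RKHS, $\mathcal G=\gH_\gY$ isometrically.

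\textbf{Step 4 (general domains).} For $\gY\subsetneq\R^{d_y}$, I would use the standard fact that the RKHS of the restricted kernel $k|_{\gY\times\gY}$ consists of the restrictions $f|_\gY$ of functions $f$ in the full-space RKHS, with norm $\inf\{\norm{f}_{\mathcal G}: f|_\gY=g\}$. The infimum is attained at the unique extension orthogonal to the closed subspace $\{f\in\mathcal G: f|_\gY=0\}$. The proposition is then read with $\widehat f$ denoting the Fourier transform of this minimal-norm extension.

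I expect the main obstacles to be bookkeeping rather than substance. The first is fixing the Fourier constant so that the reproducing identity holds exactly. The second is handling the zero set of $\widehat\varphi$ and the restriction to a bounded $\gY$. Under Assumption~\ref{ass:kernel1}, $\widehat\varphi\asymp(1+\norm{\omega}^2)^{-\tau}$ is strictly positive, so the zero set is empty and the formula shows $\gH_\gY$ is norm-equivalent to $H^\tau$. On Lipschitz domains (Assumption~\ref{ass:domain}) the bounded extension operator then makes the restriction step harmless.
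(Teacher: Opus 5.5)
The paper gives no proof of this proposition. It is quoted as background, and it is the classical native-space characterization (Wendland, \emph{Scattered Data Approximation}, Thm.~10.12). Your argument is that standard proof: the same candidate space $\mathcal{G}$, the isometry $f\mapsto \widehat f/\sqrt{\widehat\varphi}$, the reproducing computation, and uniqueness of the RKHS.

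Your two ``bookkeeping'' remarks are genuine corrections to the statement as printed. With the paper's unitary convention, the exact identity is $\norm{f}_{\gH_\gY}^2=(2\pi)^{-d_y/2}\int|\widehat f(\omega)|^2/\widehat\varphi(\omega)\,d\omega$. On a bounded $\gY$, the transform must be that of the minimal-norm extension; extension by zero would generally give $+\infty$. Your reading is also the one under which the paper's later uses are exact. Indeed, $\mu_v=\varphi*\fp^1(\cdot\vert v)$ lies in the closed span of $\{\phi(y): y\in\gY\}$ in the full-space RKHS, and that closed span is exactly the orthogonal complement of $\{f: f|_\gY=0\}$.

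Two points need repair.

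First, Steps 2 and 3 silently use $\widehat\varphi\in L^\infty$, while Bochner's theorem only gives $\widehat\varphi\in L^1$. Your $L^1$ bound yields uniform convergence. However, $L^2$ convergence via Plancherel needs $\norm{f_n-f_m}_{L^2}=\norm{\widehat f_n-\widehat f_m}_{L^2}\le\norm{\widehat\varphi}_{L^\infty}^{1/2}\norm{f_n-f_m}_{\mathcal{G}}$. Likewise, $k(\cdot,y)\in\mathcal{G}$ requires $\varphi\in L^2$, i.e.\ $\widehat\varphi\in L^2$. Both can fail: $\widehat\varphi(\omega)=|\omega|^{-1/2}\,\mathbb{I}(|\omega|\le 1)$ on $\R$ gives a bounded, continuous, positive definite $\varphi\notin L^2(\R)$. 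There are two ways to fix this:
\begin{itemize}
\item Add Wendland's hypothesis $\varphi\in L^1$. This makes $\widehat\varphi$ bounded and covers every kernel in the paper, since there $\widehat\varphi\lesssim(1+\norm{\omega}^2)^{-\tau}$ or $\widehat\varphi$ is Gaussian.
\item Drop the $L^2$ requirement and set $\mathcal{G}=\{\mathcal{F}^{-1}h : h\in L^1,\ h/\sqrt{\widehat\varphi}\in L^2\}$. By your Cauchy--Schwarz bound, this space is isometric to (the Hermitian-symmetric part of) $L^2(\{\widehat\varphi>0\})$, so completeness is immediate in full generality.
\end{itemize}

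Second, your parenthetical normalization goes the wrong way. If $\varphi(z)=\int e^{i\omega^\top z}\Lambda(\omega)\,d\omega$, then $\Lambda=(2\pi)^{-d_y/2}\widehat\varphi$ and $\norm{f}_{\gH_\gY}^2=(2\pi)^{-d_y}\int|\widehat f|^2/\Lambda\,d\omega$. So the constant discrepancy grows rather than disappears. To make the displayed identity exact, replace $\widehat\varphi$ by $(2\pi)^{d_y/2}\widehat\varphi$ (the non-unitary transform of $\varphi$), or use the $e^{-2\pi i\xi^\top y}$ convention for both transforms. None of this affects the paper, which uses the formula only up to constants.
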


	Our proofs for the empirical CCME and Deep Feature estimators are based on a specific choice of basis for the RKHS that partitions
	the frequency space into equal volumes. For $j \in \{1, 2, \dots, M\}$, we define the $j$-th frequency annulus:
		\[
			R_{j}= \{\omega \in \R^{d_y}\mid (j-1)^{1/d_y}\le \norm{\omega}_{2}< j^{1/d_y}
			\} .
		\]
	Here, the inner and outer radius are chosen so that $R_j$'s have the same volume. For each $R_{j}$, we define a corresponding basis function $e_{j}(y)$ via its Fourier transform:
	\begin{equation}\label{eq:ej_def}
		\widehat{e}_{j}(\omega) = \mathbb{I}(\omega \in R_{j}),
	\end{equation}
	where $\mathbb{I}(\cdot)$ is the indicator function. We can calculate the norm of each basis function via Bochner's theorem (for positive definite kernels):
	\[ \norm{e_j}_{\gH_\gY}^{2}= \int_{R_j}\frac{1}{\widehat{\varphi}(\omega)}
			\, \text{d}\omega \asymp j^{2\tau/d_y}. \]
		We define a new basis $\{e
	'_{j}\}_{j=1}^{\infty}$ that is \emph{orthonormal} in $\gH_{\gY}$:
	\begin{equation}\label{eq:basis_def}
		e'_{j}(y) = \frac{e_{j}(y)}{\norm{e_j}_{\gH_\gY}}\asymp j^{-\tau/d_y}e_{j}(y).
	\end{equation}

\section{Proof of the Identification of CCME}\label{sec:iden_proof}

In this section, we provide a proof of the identification of CCME presented in Section~\ref{sec:id}. We begin by formalizing the definition of $V$.

Let $(\Omega, \mathcal{F}_Z, \mathbb{P})$ be the probability space on which the random variables $Z = (X, A, Y)$ are defined. Let $(\gX, \Sigma_\gX)$ and $(\gV, \Sigma_\gV)$ denote the measurable spaces associated with $X$ and $V$, respectively. We assume that the mapping $\eta: \gX \to \gV$ defining $V = \eta(X)$ is $(\Sigma_\gX, \Sigma_\gV)$-measurable. Consequently, the $\sigma$-algebra generated by $V$, denoted $\sigma(V) \coloneqq \{ \eta^{-1}(E) : E \in \Sigma_\gV \}$, satisfies $\sigma(V) \subseteq \sigma(X) \subseteq \mathcal{F}_Z$.

\begin{proof}[Proof of Proposition~\ref{prop:identification}]

We first verify that the random variable $\xi(Z)$ is Bochner integrable by showing it satisfies the two conditions of Definition~\ref{def:bochner}.

We begin with the feature map $\phi(Y) = k_\gY(\cdot, Y)$. Since $\gY$ is a subset of a Euclidean space and the kernel $k_\gY$ is continuous, $\phi(Y)$ is continuous as a mapping from $\gY$ to $\gH_\gY$. Continuous functions on separable domains can be approximated by simple functions (e.g., piecewise constant functions on a grid); thus, $\phi(Y)$ is Bochner measurable.

Next, consider the term $\mu_{0}(X)$. By Assumption~\ref{ass:boundk}, the kernel is bounded: $\sup_{y\in\gY} k_\gY(y,y) < B_k$ for some constant $B_k > 0$. Since $\phi(Y)$ is Bochner measurable and bounded, it is Bochner integrable. Since the conditional expectation of a Bochner integrable random variable is defined to be a Bochner measurable function \cite[Definition 2.5]{Park2020}, $\mu_{0}(X)$ is Bochner measurable. Since $\xi(Z)$ is a linear combination of $\phi$ and $\mu_{0}(X)$, it is also Bochner measurable.

We verify that $\E[\norm{\xi(Z)}_{\gH_\gY}] < \infty$. By the triangle inequality:
\[
    \norm{\xi(Z)}_{\gH_\gY} \le \frac{|A|}{\pi(X)} \norm{\phi(Y)}_{\gH_\gY} + \left|1 - \frac{A}{\pi(X)}\right| \norm{\mu_{0}(X)}_{\gH_\gY}.
\]
By Jensen's inequality for Bochner integrals, the norm of the conditional mean is bounded by the conditional expectation of the norm:
\[
    \norm{\mu_{0}(X)}_{\gH_\gY} = \norm{\E[\phi(Y)|X, A=1]}_{\gH_\gY} \le \E[\norm{\phi(Y)}_{\gH_\gY}|X, A=1] \le B_k^{1/2}.
\]
Invoking the positivity assumption ($\pi(X) \ge \varepsilon > 0$), we verify the bound almost surely:
\[
    \norm{\xi(Z)}_{\gH_\gY} \le \frac{1}{\varepsilon}B_k^{1/2} + \left(1 + \frac{1}{\varepsilon}\right)B_k^{1/2} = B_k^{1/2}(2\varepsilon^{-1} + 1) < \infty.
\]
Since the norm is uniformly bounded, its expectation is finite. Thus, $\xi(Z)$ is Bochner integrable.

    We now prove \eqref{eq:identification}. By the law of iterated expectations, since $\sigma(V) \subseteq \sigma(X)$, we have:
    \begin{equation}
        \label{eq:tower_prop}
        \E[\xi(Z) \vert V] = \E\big[ \E[\xi(Z) \vert X] \mid V \big].
    \end{equation}
    We first compute the inner conditional expectation $\E[\xi(Z) \vert X]$. Since $\pi(X)$ and $\mu_{0}(X)$ are $\sigma(X)$-measurable, they can be factored out:
    \begin{align}
        \E[\xi(Z) \vert X] &= \E\left[ \frac{A}{\pi(X)}\phi(Y) - \frac{A}{\pi(X)}\mu_{0}(X) + \mu_{0}(X) \;\middle|\; X \right] \nonumber \\
        &= \frac{1}{\pi(X)} \E[A \phi(Y) \vert X] - \frac{\mu_{0}(X)}{\pi(X)} \E[A \vert X] + \mu_{0}(X). \label{eq:inner_exp}
    \end{align}
    Recall that $\E[A \vert X] = P(A=1 \vert X) = \pi(X)$. For the term $\E[A \phi(Y) \vert X]$, we apply the conditional ignorability assumption ($Y^1 \perp A \vert X$):
    \begin{align*}
        \E[A \phi(Y) \vert X] &= \E[A \phi(Y^1) \vert X] \\
        &= \E[\phi(Y^1) \vert X, A=1] P(A=1 \vert X) \\
        &= \E[\phi(Y^1) \vert X] \pi(X).
    \end{align*}
    We also note that $\mu_{0}(X) = \E[\phi(Y) \vert X, A=1] = \E[\phi(Y^1) \vert X, A=1] = \E[\phi(Y^1) \vert X]$. Substituting these results back into \eqref{eq:inner_exp} yields:
    \begin{align*}
        \E[\xi(Z) \vert X] &= \frac{1}{\pi(X)} \left( \E[\phi(Y^1) \vert X] \pi(X) \right) - \frac{\E[\phi(Y^1) \vert X]}{\pi(X)} \pi(X) + \E[\phi(Y^1) \vert X] \\
        &= \E[\phi(Y^1) \vert X] - \E[\phi(Y^1) \vert X] + \E[\phi(Y^1) \vert X] \\
        &= \E[\phi(Y^1) \vert X].
    \end{align*}
    Finally, substituting this result into \eqref{eq:tower_prop} and using the law of iterated expectations again:
    \begin{align*}
        \E[\xi(Z) \vert V] &= \E\big[ \E[\phi(Y^1) \vert X] \mid V \big] \\
        &= \E[\phi(Y^1) \vert V] \\
        &= \mu_{Y^1|V}(V).
    \end{align*}
    This concludes the proof.
\end{proof}

\section{Tools for Generalization Bounds}\label{sec:gen_tools}
We start with some standard tools that are useful for obtaining excess risk bounds for classes of neural networks. The standard technique based on the Rademacher complexity \cite[Theorem~26.5]{shalev2014} combined with the vector-contraction inequality~\cite{Maurer2016} yields a rate of order $O(M/\sqrt{n})$. Our analysis will instead rely on~\cite{Foster2023}'s empirical entropy-based bound which yields a better rate of order $O(M/n)$.

We first start with a couple of definitions of complexity of function spaces:

\begin{definition}[Empirical Rademacher Complexity]
Let $\gF$ be a class of real-valued functions and let $v_{1:n}= \{v_{1}, \dots, v_{n}\}$ be a given sample. The \emph{empirical Rademacher complexity} of $\gF$ is given by:
\[\fR_n(\gF, v_{1:n}) \coloneqq \E_{\epsilon}\left[ \sup_{f \in \gF} \left\lvert \frac1{n}\sum_{i=1}^n \epsilon_i f(v_i) \right\rvert \right], \]
where $\epsilon_1, \ldots, \epsilon_n$ are independent Rademacher random variables (i.e., random variables taking values in $\{-1, +1\}$ with equal probability).
\end{definition}

\begin{definition}[Empirical Metric Entropy]
	Let $\gF$ be a class of real-valued functions and let $v_{1:n}= \{v_{1}, \dots, v_{n}\}$ be a given sample. For any $\varepsilon > 0$, and $p \in [1, \infty)$, an \emph{empirical} $(\varepsilon,p)$-\emph{cover} of $\gF$ is a subset $\gF' \subseteq \gF$ such that for any $f \in \gF$, there exists $f' \in \gF'$ with
	\[
		\left( \frac{1}{n}\sum_{i=1}^{n}\lvert f(v_{i}) - f'(v_{i}) \rvert^{p}\right)^{1/p}\leq \varepsilon.
	\]

	The \emph{empirical} $(\varepsilon, p)$-\emph{covering number}, denoted by $\gN_{p}(\gF, \varepsilon, v_{1:n})$, is the cardinality of the smallest such $(\varepsilon,p)$-cover. The \emph{empirical} $(\varepsilon,p)$-\emph{metric entropy} is the logarithm of the covering number:
	\[
		\gH_{p}(\gF, \varepsilon, v_{1:n}) = \log \gN_{p}(\gF, \varepsilon, v_{1:n}).
	\]
\end{definition}
We consider the class of $\R^{M}$-valued functions $\gF^{M}$. Let $\ell: \mathbb{R}^{M}\times \mathcal{Z}\to \mathbb{R}$ be a loss function that is $\mathfrak{L}$-Lipschitz in its first argument. Denote $\gL_{f}(v,y) \coloneqq \ell(f(v), y)$. Let $\widehat{f}$ be an empirical risk minimizer and $f^{\star}$ be a minimizer of the population risk, i.e.,
\[
	\widehat{f}\in \argmin_{f \in \gF^M}\sP_{n}\gL_{f}, \qquad f^{\star}= \argmin_{f \in \gF^M}\sP \gL_{f},
\]
where $\sP_n$ denotes the empirical measure over the sample and $\sP$ denotes the population measure.

In our proofs, we will consider the local Rademacher complexity of the function class centered by $f^\star$:
\begin{equation*}
 \gF^M_\star(\delta, v_{1:n}) \coloneqq \left\{ f  - f^\star \Bigm| f \in \gF^M, \frac{1}{n}\sum_{i=1}^n \norm{f(v_i) - f^\star(v_i)}^2 \le \delta^2 \right\}. 
\end{equation*}
Associated with this class is the class of centered losses:
\[ \ell \circ  \gF^M_{\star}(\delta, v_{1:n})  \coloneqq \left\{ (v,y) \mapsto \ell(f(v), y) - \ell(f^\star(v), y)  \mid f - f^\star \in   \gF^M_\star(\delta, v_{1:n})  \right\}. \]

We also introduce an intermediate function class $\gG \subseteq ( \gV \to \R)$ and some function $g^\star \in L^\infty(\gV)$ such that $\sup_{v \in \gV} \lvert g^\star(v) \rvert < \infty$, with which we define the localized function class:
 \begin{equation}\label{eq:G_local} 
	\gG_\star(\delta, v_{1:n}) \coloneqq \left\{ g - g^\star \mid g \in \gG, \frac1{n}\sum_{i=1}^n \lvert g(v_i) - g^\star(v_i) \rvert^2 \le \delta^2 \right\}. 
 \end{equation}

Our excess risk bound will be derived in terms of the \emph{fixed point} of the entropy integral, defined as any minimal solution $\delta_{n}$ of the following inequality in $\delta$:
\begin{equation}
	\label{eq:fp}\int_{\delta^2/8}^{\delta}\sqrt{\frac{\gH_{2}(\gG, \varepsilon/2, v_{1:n}) + \log(4/\varepsilon)}{n}}\,d\varepsilon \le \frac{\delta^{2}}{20}.
\end{equation}

We will use the following excess risk bound for vector-valued functions established by~\cite[Proof of Theorem 3]{Foster2023}. See also~\cite[Theorem 14.20]{Wainwright2019}.

\begin{lemma}[{Excess Risk Bound~\cite{Foster2023}}]\label{lem:erb}
  Assume that $\sup_{v\in \gV}\lVert f(v) \rVert\leq B$ for all $f \in \gF^{M}$ and $\ell$ is $\mathfrak{L}$-Lipschitz and $\rho$-strongly convex in its first argument. Consider a function class $\gG$ whose localized class $\gG_\star(\delta, v_{1:n})$ defined in \eqref{eq:G_local} satisfies the following bound for all $\delta > 0$: 
  \begin{equation}\label{eq:rads_bdd}
    \fR_n(\ell \circ  \gF^M_{\star}(\delta, v_{1:n}), v_{1:n}) \lesssim \mathfrak{L}M\fR_n(\gG_\star(\delta, v_{1:n}), v_{1:n}).
  \end{equation}
  Then, there are universal constants $c_{1},c_{2},c_{3} > 0$ such that, for any solution $\delta^{2}_{n}\geq \frac{4M \log (41 \log (2c_{2}n))}{c_{2}n}$ of inequality \eqref{eq:fp}, we have the following bound with probability at least $1 - \delta$:
	\begin{equation}
		\label{eq:erm}\E \left[\norm{\widehat{f}(V) - f^\star(V)}^{2}\right] \leq \frac{c_{3}\mathfrak{L}^{2}M^{2}}{\rho^{2}}
		\left( \frac{\delta^{2}_{n}}{B^{2}}+ \frac{\log (\delta^{-1})}{n}\right) .
	\end{equation}
\end{lemma}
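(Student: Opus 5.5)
We would follow the localization argument of \cite[Theorem~14.20]{Wainwright2019} as adapted in \cite[Proof of Theorem~3]{Foster2023}, in four steps.

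\textbf{Step 1: curvature.} Since $\ell$ is $\rho$-strongly convex in its first argument and $f^\star$ minimizes $\sP\gL_f$, we would establish
\[
\sP(\gL_{\widehat f}-\gL_{f^\star}) \ge \tfrac{\rho}{2}\,\E\norm{\widehat f(V)-f^\star(V)}^2 .
\]
The linear term $\E\langle\nabla_1\ell(f^\star(V),Y),\widehat f(V)-f^\star(V)\rangle$ must be nonnegative. This holds if $f^\star$ is the pointwise (regression) minimizer, so that the conditional gradient vanishes, or if $\gF^M$ is star-shaped around $f^\star$ so that first-order optimality applies. Combined with $\sP_n\gL_{\widehat f}\le \sP_n\gL_{f^\star}$, this gives the basic inequality
\[
\tfrac{\rho}{2}\,\E\norm{\widehat f-f^\star}^2 \le (\sP-\sP_n)(\gL_{\widehat f}-\gL_{f^\star}).
\]

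\textbf{Step 2: control the empirical process on a localized shell.} For $f-f^\star\in\gF^M_\star(\delta,v_{1:n})$, the centered losses are bounded by $2\mathfrak{L}B$. Their variance is at most $\mathfrak{L}^2\E\norm{f-f^\star}^2$ by Lipschitzness. Symmetrization plus Talagrand/Bousquet concentration then bounds the supremum of $(\sP-\sP_n)$ over the shell by a constant times
\[
\fR_n(\ell\circ\gF^M_\star(\delta,v_{1:n}))+\mathfrak{L}\delta\sqrt{\log(1/\delta')/n}+\mathfrak{L}B\log(1/\delta')/n .
\]
The hypothesis \eqref{eq:rads_bdd} replaces the first term by $\mathfrak{L}M\,\fR_n(\gG_\star(\delta,v_{1:n}))$. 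This is the only place where the vector-valued structure enters, and it is why the bound has the factor $M$ rather than the $\sqrt{M}$-type loss of plain Rademacher contraction.

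\textbf{Step 3: chaining and the fixed point.} By Dudley's chaining on the localized scalar class, with the integral truncated below at $\delta^2/8$,
\[
\fR_n(\gG_\star(\delta))\lesssim \delta^2/8+\int_{\delta^2/8}^{\delta}\sqrt{\gH_2(\gG,\varepsilon/2,v_{1:n})/n}\,d\varepsilon .
\]
The $\log(4/\varepsilon)$ term in \eqref{eq:fp} absorbs the sup-norm offset coming from the bounded $g^\star$ and the centering. At any solution $\delta_n$ of \eqref{eq:fp}, this gives $\fR_n(\gG_\star(\delta_n))\lesssim\delta_n^2$. By the star-shape/sub-root property, $\fR_n(\gG_\star(\delta))/\delta$ is nonincreasing, so $\fR_n(\gG_\star(\delta))\lesssim\delta\,\delta_n$ for $\delta\ge\delta_n$.

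\textbf{Step 4: peeling and transfer of norms.} We would peel over dyadic shells $\delta\in[2^k\delta_n,2^{k+1}\delta_n]$ and take a union bound over the $O(\log n)$ shells. This costs a $\log\log n$ term, and it is precisely what the lower bound $\delta_n^2\ge 4M\log(41\log(2c_2n))/(c_2n)$ pays for. The same uniform argument, applied to $\norm{f-f^\star}^2$ with the envelope $B$, transfers the empirical localization radius to the population one (norms agree up to constants above $\delta_n$). Inserting everything into Step 1 gives
\[
\rho\,r^2\lesssim \mathfrak{L}M\bigl(r\,\delta_n/B+\delta_n^2/B^2\bigr)+\mathfrak{L}\,r\sqrt{\log(1/\delta')/n}+\mathfrak{L}B\log(1/\delta')/n,
\]
with $r^2=\E\norm{\widehat f-f^\star}^2$. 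Solving this quadratic inequality in $r$ yields \eqref{eq:erm}.

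I expect the main obstacle to be this last bookkeeping: the $B$-normalization in $\delta_n^2/B^2$, and making the fixed-point condition and the norm-transfer step hold simultaneously while keeping the constant's dependence on $M$ linear rather than picking up extra factors. A second point to check is the nonnegativity of the linear term in Step 1, which requires $f^\star$ to be a genuine regression function or $\gF^M$ to be star-shaped.
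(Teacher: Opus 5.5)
Your outline follows the same route as the paper. The paper gives no argument of its own: it defers to the proof of Theorem~3 in \cite{Foster2023} and to \cite[Theorem~14.20]{Wainwright2019}, which is the curvature, localization, fixed-point and peeling scheme you describe. Measured against the hypotheses actually written in the statement, however, the sketch does not close.

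\textbf{Step~1 needs a hypothesis the statement lacks.} The issue you flagged is not a side check. Pointwise $\rho$-strong convexity of $\ell$ does not yield $\sP(\gL_f-\gL_{f^\star})\ge\frac{\rho}{2}\E\norm{f(V)-f^\star(V)}^2$, and without it the lemma is false. Take $M=1$, the square loss, $\gG=\gF^M=\{f_1,f_2\}$ with $\E[(f_1-f_2)^2(V)]\asymp 1$, and $\E[Y\mid V]$ chosen so the two risks differ by $n^{-2}$.
\begin{itemize}
\item Scalar contraction gives \eqref{eq:rads_bdd}.
\item \eqref{eq:fp} is solved by $\delta_n^2\asymp\log n/n$.
\item Yet by the CLT the empirical minimizer is the wrong function with probability close to $1/2$.
\end{itemize}
So a population curvature condition at $f^\star$ has to be added as a hypothesis; it cannot be derived. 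Examples are star-shapedness of $\gF^M$ around $f^\star$, or $f^\star$ being the regression function. Neither holds for the fixed-architecture ReLU classes with best-in-class $f^\star$ to which the paper later applies the lemma.

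\textbf{Three further steps use properties the hypotheses do not supply.}

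(i) In Step~3 you use that $\delta\mapsto\fR_n(\gG_\star(\delta,v_{1:n}),v_{1:n})/\delta$ is nonincreasing. That requires $\gG-g^\star$ to be star-shaped, which fails for network classes. The fix is to run the argument on the star hull of $\gG-g^\star$. It contains $\gG_\star(\delta,v_{1:n})$ and, for a normalized class, has empirical entropy at most $\gH_2(\gG,\varepsilon/2)+\log(4/\varepsilon)$. This is where the $\log(4/\varepsilon)$ in \eqref{eq:fp} comes from, not from the offset $g^\star$.

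(ii) The norm transfer in Step~4 is a uniform law for $v\mapsto\norm{f(v)-f^\star(v)}^2$ over the star hull of $\gF^M-f^\star$. But \eqref{eq:rads_bdd} controls only the $\ell$-composed class, so ``the same uniform argument'' is not available from the hypotheses. You need an extra assumption, for instance that the coordinates of $f-f^\star$ in some orthonormal frame lie in $\gG_\star(\delta,v_{1:n})$. This is what Lemma~\ref{lem:DFRad} asserts for its maps $T(f)_k$.

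(iii) The factors $1/B$ in your final display are inserted, not derived. Steps~2--3 give a bound of order $(\mathfrak{L}M/\rho)^2(\delta_n^2+\log(1/\delta)/n)$. This implies \eqref{eq:erm} only for $B\le1$, which is the normalization the $\log(4/\varepsilon)$ term already presumes. For $B>1$ the printed $\delta_n^2/B^2$ cannot be obtained at all:
\begin{itemize}
\item Rescaling the class and $Y$ by $B$ multiplies the error by $B^2$ and $\mathfrak{L}/\rho$ by $B$.
\item A solution of \eqref{eq:fp} for a $d$-parameter class grows only like $\sqrt{\log B}$.
\item So for large $B$ and $d$ the right side of \eqref{eq:erm} falls below the true error.
\end{itemize}
The bookkeeping obstacle you anticipated is real, so aim for the $B=1$ version.
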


Fixed points of classes of neural networks are well-known. We will use the following fixed point, which is expressed in terms of the total number of weights and the number of layers:
\begin{lemma}[{Fixed Point for A Class of Neural Networks}]\label{lem:nn_rad}
Consider the class of scalar-valued neural networks 
\[\gF = \left\{f: \gV \to \R \mid f(v) = \rmW_{L}\sigma_{\mathrm{relu}}(\rmW_{L-1} \ldots \sigma_{\mathrm{relu}}(\rmW_{1}v)\ldots)\;\Big|\; \rmW_i \in \R^{d_i \times d_{i-1}}, \sup_{v \in \gV}\lvert f(v) \rvert \leq B_\gF \right\},\]
where $W = \sum_{i=1}^L d_i d_{i-1}$ denotes the total number of weights and $L$ is the number of layers. Then, a solution to the fixed point inequality \eqref{eq:fp} is given by
\[
	\delta_{n}^{2}\asymp \frac{WL \log(W) \log n}{n}.
\]
\end{lemma}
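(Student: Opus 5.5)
The plan is to control the empirical metric entropy $\gH_{2}(\gF,\varepsilon/2,v_{1:n})$ through the pseudo-dimension of the ReLU class, and then check directly that the proposed $\delta_n$ satisfies the fixed-point inequality \eqref{eq:fp}. Here $\gF$ plays the role of the class $\gG$ appearing in \eqref{eq:fp}.

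\emph{Step 1 (pseudo-dimension).} I would invoke the bound of Bartlett, Harvey, Liaw and Mehrabian (2019) for piecewise-linear networks: the class of ReLU networks with $W$ weights and $L$ layers has pseudo-dimension $P \lesssim WL\log W$. Restricting to the subclass with $\sup_v|f(v)|\le B_\gF$ can only decrease $P$. This is the only place where the network architecture enters.

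\emph{Step 2 (covering numbers).} For a class uniformly bounded by $B_\gF$ with pseudo-dimension $P$, the standard sample-based covering bound (Anthony--Bartlett, Theorem 12.2) gives, for $n\ge P$,
\[
\gN_\infty(\gF,\varepsilon,v_{1:n}) \le \left(\frac{e n B_\gF}{\varepsilon P}\right)^{P}.
\]
Since the empirical $L^2$ distance is dominated by the empirical $L^\infty$ distance, $\gN_2\le\gN_\infty$, and therefore
\[
\gH_2(\gF,\varepsilon/2,v_{1:n}) \lesssim P\log\!\left(\frac{nB_\gF}{\varepsilon}\right).
\]
One could alternatively use Haussler's $L^2$ bound, which avoids the factor $n$ inside the logarithm. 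However, the $\log n$ is harmless here and already appears in the target rate.

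\emph{Step 3 (solving the fixed point).} The integrand in \eqref{eq:fp} is decreasing in $\varepsilon$, so it is largest at the lower endpoint $\varepsilon=\delta^2/8$. Bounding it there and multiplying by the interval length $\delta$ gives
\[
\int_{\delta^2/8}^{\delta}\sqrt{\frac{\gH_2+\log(4/\varepsilon)}{n}}\,d\varepsilon \;\lesssim\; \delta\sqrt{\frac{(P+1)\log\!\left(nB_\gF/\delta^2\right)}{n}}.
\]
It therefore suffices that $\delta \ge C\sqrt{P\log(nB_\gF/\delta^2)/n}$ for a suitable constant $C$. Take $\delta^2 = C' P\log n/n$. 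Then $\delta^2\ge n^{-1}$ whenever $P\ge1$, so $\log(nB_\gF/\delta^2)\lesssim \log n$. The requirement becomes $C'P\log n/n \gtrsim P\log n/n$, which holds once $C'$ is large enough. Substituting $P\lesssim WL\log W$ yields $\delta_n^2\asymp WL\log(W)\log n/n$.

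The main obstacle I expect is Step 3's self-referential logarithm together with the bookkeeping of constants. Specifically, I need to confirm that $\log(1/\delta^2)$ stays of order $\log n$ and does not introduce an extra $\log\log$ or $\log W$ factor, which requires $n\gtrsim WL$. I would also check that the $\log(4/\varepsilon)$ term is absorbed into the entropy term. Both points follow because $P\ge 1$ and $\delta_n^2$ is bounded below by $1/n$.
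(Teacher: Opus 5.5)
Your proposal is correct and follows essentially the same route as the paper: bound the empirical entropy of the ReLU class by $WL\log W$ times a logarithmic factor, substitute this into the entropy integral \eqref{eq:fp}, and solve for $\delta$. The differences are minor. The paper cites an $n$-free bound $WL\log(W)\log(B_\gF/\varepsilon)$ and uses $\int_0^\delta\sqrt{\log(1/\varepsilon)}\,d\varepsilon\lesssim\delta\sqrt{\log(1/\delta)}$, whereas you derive an $L^\infty$ covering bound from the pseudo-dimension (picking up a harmless $\log n$), bound the integrand at the lower endpoint, and check more explicitly that the candidate $\delta$ satisfies the inequality.
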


\begin{proof}
We apply Lemma \ref{lem:erb}. First, we must find the solution $\delta_{n}^{2}$ to the fixed point inequality \eqref{eq:fp}. The metric entropy of the class of scalar-valued neural networks $\gF$ with $W$ weights and $L$ layers is bounded by $\gH_{2}(\gF, \varepsilon, v_{1:n}) \lesssim WL \log(W) \log(B_\gF/\varepsilon)$. See \cite[Proof of Example 3]{Foster2023}. Plugging this into the entropy integral in \eqref{eq:fp} yields:
\begin{align*}
	\int_{\delta^2/8}^{\delta}\sqrt{\frac{WL \log(W) \log(B_\gF/\varepsilon) + \log(4/\varepsilon)}{n}}\,d\varepsilon & \lesssim \sqrt{\frac{WL \log(W)}{n}}\int_{0}^{\delta}\sqrt{\log(B_\gF/\varepsilon) + \log(4/\varepsilon)}\,d\varepsilon \\
	 & \asymp \sqrt{\frac{WL \log W}{n}}\int_{0}^{\delta}\sqrt{\log(1/\varepsilon)}d\varepsilon.
\end{align*}
The integral is bounded by $\delta \sqrt{\log(1/\delta)}$. Solving the inequality $\sqrt{\frac{WL \log W}{n}}\delta \sqrt{\log(1/\delta)}\lesssim \frac{\delta^{2}}{20}$ gives:
\[
	\delta_{n}\asymp \sqrt{\frac{WL \log W \log n}{n}}\implies \delta_{n}^{2}\asymp \frac{WL \log(W) \log n}{n}.
\]
\end{proof}

%

\section{{Proof of the Upper Bound for the Meta-Estimator (Theorem~\ref{thm:ECME})}}\label{sec:meta_proof}

For brevity, we denote by $\widetilde\mu_{Y^1\vert V}(v)$ the conditional mean embedding of the pseudo-outcome, i.e.,
$\widetilde\mu_{Y^1\vert V}(v) \coloneqq \E[\widehat{\xi}(Z)\vert V=v, \gD_{0}]$. We split the error into two terms:
\begin{equation}
	\label{eq:total}
	\begin{split}
		\E\left[ \norm{\widehat\mu_{Y^1\vert V}(V) - \mu_{Y^1\vert V}(V)}^{2}_{\gH_\gY}\right]&\le 2 \E\left[ \norm{\widehat\mu_{Y^1\vert V}(V) - \widetilde\mu_{Y^1\vert V}(V)}^{2}_{\gH_\gY}\right] \\
		&\quad + 2 \E\left[ \norm{\widetilde\mu_{Y^1\vert V}(V) - \mu_{Y^1\vert V}(V)}^{2}_{\gH_\gY}\right] \\
		&= 2 \gR^{2}_{\widehat\xi}(\widehat\mu_{Y^1\vert V}) + 2\E\left[ \norm{\widetilde\mu_{Y^1\vert V}(V) - \mu_{Y^1\vert V}(V)}^{2}_{\gH_\gY}\right].
	\end{split}
\end{equation}

To bound the second term, we use the following lemma, the proof of which is provided in Appendix~\ref{sec:misc_proofs}:

\begin{lemma}[Nuisance Error Bound]\label{lem:nuisance_bdd}
Assume that $\widehat\pi(x), \pi(x) \in (\epsilon, 1)$ for some $\epsilon \in (0,1)$ and $\E_X \lVert \widehat\mu_{0}(X) \rVert^{2}_{\gH_{\gY}} < \infty$. Then we have the following bound:
\begin{equation}\label{eq:nuisance_bdd}
 \E\left[\norm{\E[\widehat{\xi}(Z)\vert V, \mathcal{D}_0] - \mu_{Y^1\vert V}(V)}^{2}_{\gH_{\gY}}\right] \lesssim \min\left\{\gR^{2}_{\pi}(\widehat\pi), \gR^{2}_{\mu_{0}}(\widehat\mu_{0})\right\}.
\end{equation}
\end{lemma}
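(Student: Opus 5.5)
We compute the conditional bias of the estimated pseudo-outcome given $X$ (with $\mathcal{D}_0$ fixed, so $\widehat\pi,\widehat\mu_0$ are deterministic functions), and then pass to conditioning on $V$ via Jensen.

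\textbf{Step 1: conditional bias given $X$.} Conditioning on $X$ and $\mathcal{D}_0$, and using $\E[A\mid X]=\pi(X)$ and $\E[A\phi(Y)\mid X]=\pi(X)\mu_0(X)$ (exactly as in the proof of Proposition~\ref{prop:identification}), gives
\begin{align*}
\E[\widehat\xi(Z)\mid X,\mathcal{D}_0]
&= \frac{\pi(X)}{\widehat\pi(X)}\bigl(\mu_0(X)-\widehat\mu_0(X)\bigr)+\widehat\mu_0(X).
\end{align*}
Since $\E[\phi(Y^1)\mid X]=\mu_0(X)$, subtracting yields
\begin{align*}
\E[\widehat\xi(Z)\mid X,\mathcal{D}_0]-\E[\phi(Y^1)\mid X]
&= \frac{\pi(X)-\widehat\pi(X)}{\widehat\pi(X)}\bigl(\mu_0(X)-\widehat\mu_0(X)\bigr).
\end{align*}
This is the familiar product-of-errors form.

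\textbf{Step 2: reduce from $V$ to $X$.} Because $\sigma(V)\subseteq\sigma(X)$, the tower property gives $\E[\widehat\xi(Z)\mid V,\mathcal{D}_0]-\mu_{Y^1|V}(V)=\E[\Delta(X)\mid V,\mathcal{D}_0]$, where $\Delta(X)$ denotes the difference in Step 1. Conditional Jensen for Bochner integrals, $\norm{\E[\cdot\mid V]}^2\le\E[\norm{\cdot}^2\mid V]$, then bounds the left side of \eqref{eq:nuisance_bdd} by
\[
\E\Bigl[\widehat\pi(X)^{-2}(\pi(X)-\widehat\pi(X))^2\norm{\mu_0(X)-\widehat\mu_0(X)}^2_{\gH_\gY}\Bigr].
\]
Integrability holds because $\mu_0$ is bounded and $\E\norm{\widehat\mu_0(X)}^2<\infty$.

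\textbf{Step 3: the two bounds, whose minimum gives the claim.} We have $\widehat\pi^{-2}\le\epsilon^{-2}$.
\begin{itemize}
\item Using $|\pi-\widehat\pi|\le 1$, the expression in Step 2 is at most $\epsilon^{-2}\gR^2_{\mu_0}(\widehat\mu_0)$.
\item Alternatively, if $\norm{\mu_0-\widehat\mu_0}_{\gH_\gY}$ is bounded uniformly, the expression is at most $C\epsilon^{-2}\gR^2_\pi(\widehat\pi)$.
\end{itemize}
Taking the minimum of the two bounds gives \eqref{eq:nuisance_bdd}.

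\textbf{Main obstacle.} The second bound needs a uniform bound on $\norm{\widehat\mu_0(x)}_{\gH_\gY}$, not just a second moment. We have $\norm{\mu_0(x)}\le B_k^{1/2}$ from Assumption~\ref{ass:boundk}, but only $\E\norm{\widehat\mu_0(X)}^2<\infty$ is assumed for the estimator.

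The first fix to try is projecting $\widehat\mu_0$ onto the ball of radius $B_k^{1/2}$. This projection can only reduce $\norm{\widehat\mu_0-\mu_0}$, since $\mu_0$ lies in this convex ball.

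Failing that, we would absorb the constant: the required uniform bound holds for the concrete estimators via their boundedness conditions, for example $\sup_x\norm{\widehat g(x)}<\infty$ for the Neural-Kernel estimator. We would then treat $\sup_x\norm{\widehat\mu_0(x)-\mu_0(x)}$ as an $n$-independent constant hidden in $\lesssim$.
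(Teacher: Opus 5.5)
Your argument follows the paper's route. The bias given $X$ is $\frac{\pi-\widehat\pi}{\widehat\pi}(\mu_0-\widehat\mu_0)$; the paper gets this by splitting $\widehat\xi-\xi$ and noting that the IPW-error term has conditional mean zero. Jensen then moves from $X$ to $V$, and your $\gR^2_{\mu_0}$ branch is complete as written.

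The obstacle you flag in the $\gR^2_\pi$ branch is real, and neither remedy proves the lemma as stated. Clipping $\widehat\mu_0$ to the ball of radius $B_k^{1/2}$ changes $\widehat\xi$, and hence the left-hand side, so it proves the bound for a different estimator. Treating $\sup_x\norm{\widehat\mu_0(x)-\mu_0(x)}_{\gH_\gY}$ as a constant is an additional hypothesis. No argument can do without such a hypothesis. Take $\eta$ the identity, $\pi\equiv 1/2$, $\widehat\pi=\pi+\delta\,\mathbb{I}_S$ with $\delta\le 1/4$, and $\widehat\mu_0=\mu_0+\sqrt{K}\,g\,\mathbb{I}_S$ with $\norm{g}_{\gH_\gY}=1$ and $P(X\in S)=1/K$. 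Then $\E\norm{\widehat\mu_0(X)}^2_{\gH_\gY}\le 2B_k+2$ for every $K$, $\gR^2_\pi=\delta^2/K$ and $\gR^2_{\mu_0}=1$. The left-hand side, however, equals $\delta^2/(1/2+\delta)^2$, so its ratio to the minimum is at least $16K/9$. Hence no constant depending only on $\epsilon$, $B_k$ and a second-moment bound can work.

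The paper's proof has the same hole. After Jensen on the norm and conditional Cauchy--Schwarz it reaches $\E\bigl[\E[|\widehat\pi(X)-\pi(X)|^2\mid V,\mathcal{D}_0]\,\E[\norm{\widehat\mu_0(X)-\mu_0(X)}^2_{\gH_\gY}\mid V,\mathcal{D}_0]\bigr]$. It then cites the unconditional condition $\E_X\norm{\widehat\mu_0(X)}^2_{\gH_\gY}<\infty$, but what it needs is an almost-sure bound on the second conditional factor. Because the paper takes Jensen only on the norm rather than your full Jensen, its requirement is slightly weaker than your pointwise one. It does follow from the stated moment condition when $V$ is constant, i.e.\ in the marginal case. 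For general $V$, the example above shows that neither version is implied by the hypothesis.

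The correct fix is your second option, written into the statement: assume $\sup_x\norm{\widehat\mu_0(x)}_{\gH_\gY}<\infty$ with an $n$-independent bound. This is what the paper actually verifies wherever it applies the lemma. For example, it shows $\norm{\widehat\mu_0(x)}_{\gH_\gY}\le B_\gC B_\gF$ for the Deep Feature first stage, and gets a bound via $\sup_x\norm{\widehat g(x)}$ for Neural-Kernel. Under that assumption your proof is complete.
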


With this lemma, we insert \eqref{eq:nuisance_bdd} back in \eqref{eq:total} to finish the proof.
\hfill $\blacksquare$

\section{{Proof of the Upper Bound for the Ridge Regression Estimator (Theorem~\ref{thm:plugin_rate})}}

To invoke Theorem~\ref{thm:ECME}, we first need to show that $\norm{\widehat\mu_{0}(X)}_{\gH_\gY} < \infty$ almost surely. By the definition of the KRR estimator, we can write $\widehat\mu_{0}(x) = \sum_{i=1}^{m} \alpha_i(x) \phi(Y_{0(i)})$ where $\bm{\alpha}(x) = (\rmK_X + m\lambda_0 \mathbf{I}_m)^{-1}\rmk_X(x)$ and $\lambda_0 > 0$ is the regularization parameter for the first-stage estimation. By Assumption~\ref{ass:unconfound}, we have $\lvert \pi(x) \rvert > \varepsilon$ and $\lvert \widehat\pi(x) \rvert > \varepsilon$ for some $\varepsilon > 0$. Consequently,
	\begin{align*}
		\lVert \widehat\mu_{0}(X) \rVert^{2}_{\gH_{\gY}} & = \sum_{i,j=1}^{m}\alpha_{i}(X)\alpha_{j}(X)\left\lan\phi(Y_{0(i)}),\phi(Y_{0(j)}) \right\ran_{\gH_\gY} \\
		&= \sum_{i,j=1}^{m}\alpha_{i}(X)\alpha_{j}(X) k_\gY(Y_{0(i)},Y_{0(j)})       \\
		                                               & \le \sum_{i,j=1}^{m}\lvert\alpha_{i}(X)\rvert\lvert\alpha_{j}(X)\rvert \\
		                                               &= \biggl(\sum_{i=1}^{m}\lvert\alpha_{i}(X)\rvert\biggr)^{2}\le m \lVert \bm{\alpha}(X) \rVert^{2}\le \frac{1}{\lambda_0^{2}},
	\end{align*}
where the last inequality follows from the definition $\bm{\alpha}(x) = (\rmK_{X} + m\lambda_0 \rmI_m)^{-1} \rmk_{X}(x)$, which satisfies $\norm{\bm{\alpha}(x)} \le \norm{ (\rmK_{X} + m\lambda_0 \rmI_m)^{-1}}_{\mathrm{op}} \norm{ \rmk_{X}(x)} \le m^{-1/2} \lambda_0^{-1}$. It then follows from Theorem~\ref{thm:ECME} that:
	\begin{equation}\label{eq:RR_upper_bound} 
        \E\left[ \norm{\widehat\mu_{\mathrm{RR}}(V) - \mu_{Y^1\vert V}(V)}^{2}_{\gH_\gY}\right] \lesssim \gR^{2}_{\widehat\xi}(\widehat\mu_{Y^1\vert V}) + \min\left\{\gR^{2}_{\pi}(\widehat\pi), \gR^{2}_{\mu_{0}}(\widehat\mu_{0})\right\}. 
    \end{equation} 

To bound both $\gR^{2}_{\widehat\xi}(\widehat\mu_{Y^1\vert V})$ and $\gR^{2}_{\mu_{0}}(\widehat\mu_{0})$, we utilize the following theorem from \cite{Li2024} to derive the specific rate for translation-invariant kernels. Here, we have simplified the theorem's statement to fit with this paper's notations.

\begin{theorem}[{\cite[Corollary 2]{Li2024}}] \label{thm:upper_sobolev}
Let Assumptions \ref{ass:boundk}, \ref{ass:domain}, \ref{ass:kernel1} and \ref{ass:smooth_density} hold with $\tau > \max\{r/2, d_v/2\}$. By choosing $\lambda_n \asymp n^{-\frac{\tau}{r + d_v/2}}$, for sufficiently large $n \geq 1$, the following inequality is satisfied:
\begin{equation}\label{eq:upper_sobolev_1}
\E  \left[ \left\| \widehat\mu(V) - \mu(V) \right\|^2_{\gH_\gY} \right] \lesssim n^{-\frac{2r}{2r + d_v}}.
\end{equation}
\end{theorem}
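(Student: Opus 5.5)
The plan is to obtain the bound as an instance of the general spectral-algorithm rate for vector-valued kernel ridge regression in \cite{Li2024}. That rate is stated in terms of three ingredients: an eigenvalue decay exponent for the input kernel, a source (interpolation-space) condition on the regression function, and a moment or boundedness condition on the output noise. So I would verify these three ingredients for $k_\gV$ and the target $\mu(v)=\int_\gY \phi(y)\,\fp^1(y|v)\,dy$, and then read off the exponent. Here $\mu$ is either $\mu_{Y^1|V}$ or, when the theorem is applied to the first stage, $\mu_0$ with $\gX$ in place of $\gV$.

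\textbf{Kernel and noise.} Under Assumption~\ref{ass:kernel1} and the domain regularity in Assumption~\ref{ass:domain}, $\gH_\gV$ is norm-equivalent to $W^{\tau,2}(\gV)$. Since the density of $V$ is bounded away from zero on the bounded Lipschitz domain, the integral operator of $k_\gV$ on $L^2(P_V)$ has eigenvalues $\lambda_i \asymp i^{-2\tau/d_v}$, giving eigen-exponent $p = d_v/(2\tau)$. The condition $\tau>d_v/2$ gives $p<1$ and the embedding $\gH_\gV\hookrightarrow L^\infty$ that the theorem needs. The noise condition is immediate from the bounded kernel (Assumption~\ref{ass:boundk}). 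The outputs $\phi(Y)$ satisfy $\norm{\phi(Y)}\le B_k^{1/2}$, and in the pseudo-outcome case $\norm{\widehat\xi(Z)}\lesssim \varepsilon^{-1}$ by positivity, as in the proof of Proposition~\ref{prop:identification}. Hence the residuals are bounded in $\gH_\gY$.

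\textbf{Source condition.} This is the step I expect to be the main obstacle: showing $\mu \in W^{r,2}(\gV;\gH_\gY)$ in the Bochner sense. By the interpolation identification of Sobolev spaces, this places $\mu$ in $[\gH_\gV]^{\beta}\otimes\gH_\gY$ with $\beta=r/\tau$, and $\tau>r/2$ keeps $\beta<2$, inside the saturation range of ridge regression.
\begin{itemize}
\item First I would differentiate under the integral, $\partial_v^\alpha\mu(v)=\int_\gY\phi(y)\,\partial_v^\alpha \fp^1(y|v)\,dy$, justified weakly by testing against $g\in\gH_\gY$ and using the reproducing property.
\item Then Jensen for Bochner integrals and Cauchy--Schwarz over the bounded set $\gY$ give
\[
\norm{\partial_v^\alpha\mu(v)}_{\gH_\gY}\le B_k^{1/2}\int_\gY|\partial_v^\alpha\fp^1(y|v)|\,dy\lesssim \Big(\int_\gY|\partial_v^\alpha\fp^1(y|v)|^2dy\Big)^{1/2}.
\]
\item Squaring, integrating over $\gV$ and applying Fubini bounds $\norm{\mu}_{W^{r,2}(\gV;\gH_\gY)}^2$ by $\int_\gY\norm{\fp^1(y|\cdot)}^2_{W^{r,2}}dy$, which is finite by Assumption~\ref{ass:smooth_density}.
\end{itemize}
The delicate point is that the interpolation-space characterization must hold for $\gH_\gY$-valued functions. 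I would handle this by expanding in an orthonormal basis of $\gH_\gY$ and applying the scalar characterization coordinatewise; the sum converges because the bound above is uniform across coordinates.

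\textbf{Reading off the rate.} With eigen-exponent $p=d_v/(2\tau)$ and source exponent $\beta=r/\tau$, the corollary gives the rate $n^{-\beta/(\beta+p)}$ with $\lambda_n\asymp n^{-1/(\beta+p)}$. Substituting,
\[
\frac{\beta}{\beta+p}=\frac{2r}{2r+d_v},\qquad \lambda_n\asymp n^{-\tau/(r+d_v/2)},
\]
which is exactly \eqref{eq:upper_sobolev_1} with the stated choice of $\lambda_n$, valid for $n$ large.
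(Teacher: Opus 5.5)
The paper gives no argument of its own for this statement; it imports Corollary~2 of \cite{Li2024} as a black box. Re-deriving it from the general vector-valued ridge theorem is therefore the natural plan. Your eigen-decay exponent $p=d_v/(2\tau)$, the bounded-noise condition, the Bochner--Sobolev estimate $\norm{\mu}^2_{W^{r,2}(\gV;\gH_\gY)}\lesssim\int_\gY\norm{\fp^1(y|\cdot)}^2_{W^{r,2}}\,dy$, and the exponent arithmetic are all fine.

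\textbf{The gap.} The step that fails is placing $\mu$ in $[\gH_\gV]^{\beta}\otimes\gH_\gY$ with $\beta=r/\tau$ when $\beta>1$. The identification $[\gH_\gV]^\beta\cong W^{\beta\tau,2}(\gV)$, obtained by real interpolation between $L^2(P_V)$ and $\gH_\gV$, holds only for $0<\beta\le 1$.

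For $\beta>1$, the space $[\gH_\gV]^\beta=\operatorname{ran}(T^{\beta/2})$, with $T$ the integral operator of $k_\gV$ on $L^2(P_V)$, interpolates between $\gH_\gV$ and $\operatorname{ran}(T)$. On a bounded domain, $\operatorname{ran}(T)$ is a proper subspace of $W^{2\tau,2}(\gV)$, cut out by boundary conditions that the kernel induces.

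\textbf{Two examples.} Take $k_\gV(v,v')=e^{-\abs{v-v'}}$ on $\gV=(0,1)$, so $\tau=d_v=1$. Every $Tg$ extends to $\R$ as $\varphi*h$ with $h$ supported in $[0,1]$. This forces $f'(0)=f(0)$ and $f'(1)=-f(1)$. The constant function violates these conditions, and an explicit eigen-expansion shows $1\in[\gH_\gV]^\beta$ only for $\beta<3/2$.

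The same happens with integer smoothness inside the statement's range. For the Mat\'ern kernel $(1+\abs{v-v'})e^{-\abs{v-v'}}$, with $\tau=2$, the range of $T$ imposes $f''(1)=-f(1)-2f'(1)$. This condition survives in $[\gH_\gV]^{3/2}$, which has smoothness $3>5/2$, so the constant is not in $[\gH_\gV]^{3/2}$. Yet $r=3$ is allowed by $\tau>\max\{r/2,d_v/2\}$.

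\textbf{Consequence.} A constant target ($Y^1$ independent of $V$) satisfies Assumption~\ref{ass:smooth_density} for every $r$. So that assumption does not yield the source condition at $\beta=r/\tau$ in this range. The ridge bias $\norm{\lambda(T+\lambda)^{-1}\mu}^2_{L^2(P_V)}$ is governed by the largest admissible source exponent, so the gap cannot be closed from Assumption~\ref{ass:smooth_density} alone. As it stands, your argument proves the bound for $r\le\tau$. For $\tau<r<2\tau$ you must assume $\mu\in[\gH_\gV]^{r/\tau}\otimes\gH_\gY$ directly, a boundary-compatibility condition the paper's assumptions do not supply.

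\textbf{Smaller points.}
\begin{itemize}
\item The embedding $\gH_\gV\hookrightarrow L^\infty$ is only the case $\alpha=1$ of the embedding condition. The rate $n^{-\beta/(\beta+p)}$ needs $[\gH_\gV]^\alpha\hookrightarrow L^\infty$ for some $\alpha<\beta+p$. To cover $r\le\tau-d_v/2$, invoke $W^{\alpha\tau,2}(\gV)\hookrightarrow L^\infty$ for $\alpha$ just above $p$.
\item The coordinatewise sum converges by Parseval, $\sum_j\norm{\inprod{\mu(\cdot)}{d_j}_{\gH_\gY}}^2_{W^{r,2}(\gV)}=\norm{\mu}^2_{W^{r,2}(\gV;\gH_\gY)}$ for an orthonormal basis $(d_j)$ of $\gH_\gY$. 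A uniform per-coordinate bound would not give convergence.
\item Your remark on pseudo-outcomes conflates two targets. If the outputs are $\widehat\xi(Z)$, the regression function in the theorem is $\E[\widehat\xi(Z)\mid V,\gD_0]$, not $\mu_{Y^1|V}$. Your source-condition computation involves only $\fp^1$ and says nothing about its smoothness. Moreover, $\norm{\widehat\xi(Z)}_{\gH_\gY}\lesssim\varepsilon^{-1}$ requires $\sup_x\norm{\widehat\mu_0(x)}_{\gH_\gY}<\infty$, which positivity alone does not give.
\end{itemize}
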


We finish the proof of Theorem~\ref{thm:plugin_rate} by upper bounding $\gR^{2}_{\widehat\xi}(\widehat\mu_{Y^1\vert V})$ in \eqref{eq:RR_upper_bound} using \eqref{eq:upper_sobolev_1}.
\hfill $\blacksquare$

\section{{Proof of the Upper Bound for the Deep Feature Estimator (Theorem~\ref{thm:DF_upper_bdd})}}

\subsection{Notations and Setup}

We begin by clarifying notation. Recall from Section~\ref{sec:three_estimators} that the Deep Feature estimator uses a neural network $\psi_\theta: \gV \to \R^M$ parameterized by $\theta \in \Theta$. For the theoretical analysis, we consider a class of ReLU neural networks $\gF^M$ mapping $\gV$ to $\R^M$:
\begin{equation}
\label{eq:nn_class}
\gF^M = \left\{
f(v)= \rmW_{L}\cdot\sigma_{\mathrm{relu}}(\rmW_{L-1}\cdot \ldots \cdot\sigma_{\mathrm{relu}}(\rmW_{1}v)\ldots)
\;\Bigg|\; \begin{array}{l}
\rmW_{i}\in\R^{d_{i}\times{}d_{i-1}}, \\
d_L= M, \\
\sup_{v \in \gV}\norm{f(v)} \leq B_\gF
\end{array}
\right\}, 
\end{equation}	
where $\sigma_{\mathrm{relu}}$ denotes the ReLU activation function, $W = \sum_{i=1}^L d_i d_{i-1}$ is the total number of weights, $L$ is the number of layers, and $B_\gF > 0$ is a specified bound on the network output to prevent overfitting.

We also consider a class $\gC \subset \gB\gL(\R^M; \gH_\gY)$ of bounded linear operators:
\begin{equation}\label{eq:C_class}
\gC = \left\{ C \in \gB\gL(\R^M; \gH_\gY) \mid \norm{C}_{\text{HS}} \leq B_\gC \right\},
\end{equation}
where $B_\gC > 0$ is a bound on the Hilbert-Schmidt norm. The Deep Feature estimator class is then:
\[ \gC\gF^M = \{C\psi \mid C \in \gC, \psi \in \gF^M\}. \]
We denote the best estimator in $\gC\gF^M$ that minimizes the population risk: 
\begin{equation}\label{eq:mustar} \mu^\star_{\widehat{\xi}} = \argmin_{\mu \in \gC\gF^M}\E[\norm{\mu(V) - \widehat{\xi}(Z)}^{2}_{\gH_{\gY}}\vert \mathcal{D}_{0}]
\end{equation} 
Thus, $\mu^\star_{\widehat{\xi}} = C^\star\psi^\star$ for some $C^\star \in \gC$ and $\psi^\star \in \gF^M$. Given a sample $v_{1:n}$, we define a localized centered function class:
\begin{equation}\label{eq:CFM_defn}
 \gC\gF^M_\star(\delta, v_{1:n}) \coloneqq \left\{ \mu = C\psi - C^\star\psi^\star \Bigm| C \in \gC, \psi \in \gF^M, \frac{1}{n}\sum_{i=1}^n \norm{\mu(v_i)}_{\gH_\gY}^2 \le \delta^2 \right\}. 
\end{equation}
For any $f = C\psi - C^\star\psi^\star \in \gC\gF^M_\star(\delta, v_{1:n})$, we have the uniform bound:
\begin{equation}\label{eq:centered_bound}
\norm{f(v)}_{\gH_\gY} \le \norm{C\psi(v)}_{\gH_\gY} + \norm{C^\star\psi^\star(v)}_{\gH_\gY} \le \norm{C}_{\text{op}}\norm{\psi(v)} + \norm{C^\star}_{\text{op}}\norm{\psi^\star(v)} \le 2B_\gC B_\gF.
\end{equation}
We also define the class composed with the squared loss:
\begin{equation*}\label{eq:loss_class}
\ell \circ  \gC\gF^M_{\star}(\delta, v_{1:n})  \coloneqq \left\{ (v,z) \mapsto \ell(C\psi(v), z) - \ell(C^\star\psi^\star(v), z)  \mid C\psi - C^\star\psi^\star \in   \gC\gF^M_{\star}(\delta, v_{1:n})  \right\},
\end{equation*}
where $\ell(h, z) = \norm{h - \widehat{\xi}(z)}^2_{\gH_\gY}$ is the squared loss in the RKHS $\gH_\gY$.

We assume throughout that the kernel satisfies $\sup_{y \in \gY} k_\gY(y,y) \leq B_k$ for some constant $B_k > 0$.

\subsection{Preliminary Lemmas}

The following lemma allows us to analyze the regularized estimator as a part of the class of estimators with bounded operator norm:

\begin{lemma}\label{lem:HS_bound}
Suppose $\sup_{y \in \gY} k_\gY(y,y) \le B_{k}$, $\sup_{v \in \gV} \norm{\psi(v)} \le B_\gF$, and $\widehat{\pi}(x) \ge \epsilon > 0$ for all $x \in \gX$. For the Deep Feature estimator where the first-stage estimator $\widehat\mu_{0}$ is also of the form $\widehat\mu_{0}(x) = \widehat{C}_0\psi_0(x)$ with $\norm{\widehat{C}_0}_{\text{op}} \le B_\gC$ and $\norm{\psi_0(x)} \le B_\gF$, we have the following bound for the operator norm of $\widehat{C}_{\psi}$ defined in \eqref{eq:C_hat}:
\begin{equation}\label{eq:HS_bdd}
	\norm{\widehat{C}_{\psi}}_{\emph{op}}\le \frac{\sqrt{B_{\widehat\xi}}B_\gF}{\lambda_1},
\end{equation}
where $\lambda_1 > 0$ is the regularization parameter from \eqref{eq:first_stage_DF_loss} and $B_{\widehat\xi} = \frac{B_k}{\epsilon^2} + \frac{2\sqrt{B_k} B_\gC B_\gF}{\epsilon} + (B_\gC B_\gF)^2$.
\end{lemma}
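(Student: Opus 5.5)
Because $\widehat{C}_\psi$ in \eqref{eq:C_hat} is an explicit product of factors, the plan is to bound the operator norm of each factor and multiply. Write $\widehat{C}_\psi = \boldsymbol{\widehat\Xi}\,\rmPsi(\rmPsi^\top\rmPsi + n\lambda_1\mathbf{I}_M)^{-1}$, where $\boldsymbol{\widehat\Xi}:\R^n\to\gH_\gY$ is the synthesis map $\mathbf{c}\mapsto\sum_i c_i\widehat\xi(Z_{1i})$. Submultiplicativity then gives
\begin{equation*}
\norm{\widehat{C}_\psi}_{\mathrm{op}} \le \norm{\boldsymbol{\widehat\Xi}}_{\mathrm{op}}\,\norm{\rmPsi}_{\mathrm{op}}\,\norm{(\rmPsi^\top\rmPsi + n\lambda_1\mathbf{I}_M)^{-1}}_{\mathrm{op}}.
\end{equation*}

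The first step is a uniform bound on the pseudo-outcomes. Rewrite the pseudo-outcome as $\widehat\xi(Z) = \frac{A}{\widehat\pi(X)}\phi(Y) + \bigl(1-\frac{A}{\widehat\pi(X)}\bigr)\widehat{C}_0\psi_0(X)$ and expand $\norm{\widehat\xi(Z)}^2_{\gH_\gY}$. We have $\norm{\phi(Y)}^2 = k_\gY(Y,Y)\le B_k$ and $\norm{\widehat{C}_0\psi_0(X)}\le B_\gC B_\gF$. For the weights, $A/\widehat\pi\le 1/\epsilon$; the coefficient $1-A/\widehat\pi$ lies in $[1-1/\epsilon,1]$, so its absolute value is at most $1$ when $A=0$. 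Split into the cases $A=0$ and $A=1$ and bound the cross term by Cauchy--Schwarz. This gives $\norm{\widehat\xi(Z)}^2\le B_{\widehat\xi}$, with $B_{\widehat\xi}$ as in the statement.

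The case $A=1$ needs care. There the coefficient $1-1/\widehat\pi$ has absolute value $\le 1/\epsilon$, which would give $(B_\gC B_\gF)^2/\epsilon^2$ in the last term rather than $(B_\gC B_\gF)^2$. I will check whether the stated $B_{\widehat\xi}$ can absorb this. If it cannot, I will either use $\epsilon\le1$ and only prove the bound up to constants, or treat the stated constant as holding for $A=0$ and adjust it by an $\epsilon$-dependent factor. I expect this bookkeeping of the constant to be the only genuine snag.

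The second step turns this into operator-norm bounds.

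For $\boldsymbol{\widehat\Xi}$: by Cauchy--Schwarz, $\norm{\sum_i c_i\widehat\xi(Z_{1i})}\le\sum_i|c_i|\sqrt{B_{\widehat\xi}}\le\sqrt{n}\,\norm{\mathbf{c}}\sqrt{B_{\widehat\xi}}$, so $\norm{\boldsymbol{\widehat\Xi}}_{\mathrm{op}}\le\sqrt{nB_{\widehat\xi}}$.

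For $\rmPsi$: its rows are $\psi(V_{1i})^\top$ with norms at most $B_\gF$, so $\norm{\rmPsi}_{\mathrm{op}}\le\norm{\rmPsi}_F\le\sqrt{n}B_\gF$.

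For the inverse: $\rmPsi^\top\rmPsi\succeq0$, hence $\norm{(\rmPsi^\top\rmPsi+n\lambda_1\mathbf{I}_M)^{-1}}_{\mathrm{op}}\le 1/(n\lambda_1)$.

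Multiplying the three bounds gives $\sqrt{nB_{\widehat\xi}}\cdot\sqrt{n}B_\gF\cdot(n\lambda_1)^{-1} = \sqrt{B_{\widehat\xi}}B_\gF/\lambda_1$, which is \eqref{eq:HS_bdd}. This final assembly is routine.
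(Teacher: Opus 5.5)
Your three-factor assembly is correct and is the paper's argument. The paper bounds $\norm{\boldsymbol{\widehat\Xi}\rmPsi_\theta}_{\mathrm{op}}\le\norm{\rmPsi_\theta}_{\mathrm{op}}\norm{\rmK_{\widehat\xi}}_{\mathrm{op}}^{1/2}$ with $\norm{\rmK_{\widehat\xi}}_{\mathrm{op}}\le\Tr(\rmK_{\widehat\xi})\le nB_{\widehat\xi}$. Since $\boldsymbol{\widehat\Xi}^{*}\boldsymbol{\widehat\Xi}=\rmK_{\widehat\xi}$, this is your bound $\norm{\boldsymbol{\widehat\Xi}}_{\mathrm{op}}\le\sqrt{nB_{\widehat\xi}}$ in another form, and the $\rmPsi$ and resolvent factors are handled identically. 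What remains is the snag you flagged. It cannot be resolved in favour of the stated constant; the stated constant is simply too small.

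For $A=1$ the pseudo-outcome is $\widehat\xi(Z)=\widehat\pi(X)^{-1}\phi(Y)-\bigl(\widehat\pi(X)^{-1}-1\bigr)\widehat\mu_{0}(X)$. The sharp pointwise bound is therefore $\norm{\widehat\xi(Z)}_{\gH_\gY}\le\sqrt{B_k}/\epsilon+(1/\epsilon-1)B_\gC B_\gF$, whereas $\sqrt{B_{\widehat\xi}}=\sqrt{B_k}/\epsilon+B_\gC B_\gF$.

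\begin{itemize}
\item \emph{The cross term also breaks.} Its coefficient $2\widehat\pi^{-1}(\widehat\pi^{-1}-1)$ can reach $2\epsilon^{-1}(\epsilon^{-1}-1)$. So your first-paragraph claim that Cauchy--Schwarz yields the stated $B_{\widehat\xi}$ is wrong for the cross term, not only the last one.
\item \emph{The sharp bound is attained under the lemma's hypotheses.} Take $\widehat\pi(X)=\epsilon$, $k_\gY(Y,Y)=B_k$, and choose $\widehat C_0,\psi_0$ so that $\widehat\mu_0(X)=-B_\gC B_\gF\,\phi(Y)/\norm{\phi(Y)}_{\gH_\gY}$. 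This exceeds $\sqrt{B_{\widehat\xi}}$ exactly when $\epsilon<1/2$. That is the relevant regime, since the positivity constant of Assumption~\ref{ass:unconfound} is necessarily at most $1/2$.
\item \emph{The lemma's conclusion then fails.} With $n=M=1$ and $\psi(V_{11})=B_\gF$, one has $\norm{\widehat C_\psi}_{\mathrm{op}}=\norm{\widehat\xi(Z_{11})}_{\gH_\gY}B_\gF/(B_\gF^2+\lambda_1)$. This exceeds $\sqrt{B_{\widehat\xi}}B_\gF/\lambda_1$ once $\lambda_1$ is large relative to $B_\gF^2$.
\end{itemize}

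The paper's proof has the same gap. It asserts the bound on $\langle\widehat\xi(Z_{1i}),\widehat\xi(Z_{1i})\rangle_{\gH_\gY}$ without separating $A=0$ from $A=1$.

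So commit to your fallback: run your argument with $B_{\widehat\xi}$ replaced by, e.g., $(\sqrt{B_k}+B_\gC B_\gF)^2/\epsilon^2$. This dominates both cases because $\epsilon\le 1$. Nothing downstream changes, since Lemma~\ref{lem:DFLip} and the Deep Feature rate only use that $B_{\widehat\xi}$ is a constant independent of $n$ and $M$.
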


\begin{proof}
We recall the definition of $\widehat{C}_{\psi}$ from \eqref{eq:C_hat}:
\[
	\widehat{C}_{\psi}= \boldsymbol{\widehat\Xi} \rmPsi_\theta 
    (\rmPsi_\theta^{\top}\rmPsi_\theta + n\lambda_1\mathbf{I}_{M})^{-1}.
\]
Taking the operator norm on $\widehat{C}_{\psi}$, we have
\[
	\norm{\widehat{C}_{\psi}}_{\text{op}}\le \norm{\boldsymbol{\widehat\Xi} \rmPsi_\theta}_{\text{op}}\cdot \norm{(\rmPsi_\theta^{\top}\rmPsi_\theta + n\lambda_1 \mathbf{I}_M)^{-1}}_{\text{op}}.
\]
The matrix $\rmPsi_\theta^{\top}\rmPsi_\theta$ is symmetric positive semi-definite, with eigenvalues $\nu_{j}\ge 0$. Consequently,
\[
	\norm{(\rmPsi_\theta^{\top}\rmPsi_\theta + n\lambda_1 \mathbf{I}_M)^{-1}}_{\text{op}}= \max_{j}\frac{1}{\nu_{j}+ n\lambda_1}\le \frac{1}{n\lambda_1}.
\]
For the operator norm of $\boldsymbol{\widehat\Xi}\rmPsi_\theta$, we have:
\[
	\norm{\boldsymbol{\widehat\Xi}\rmPsi_\theta}_{\text{op}} = \norm{(\boldsymbol{\widehat\Xi}\rmPsi_\theta)^{\top}(\boldsymbol{\widehat\Xi}\rmPsi_\theta)}^{1/2}_{\text{op}} = \norm{\rmPsi_\theta^{\top}\rmK_{\widehat\xi}\rmPsi_\theta}^{1/2}_{\text{op}}  \le \norm{\rmPsi_\theta}_{\text{op}}\norm{\rmK_{\widehat\xi}}^{1/2}_{\text{op}}.
\]
Both operator norms on the right-hand side can be bounded as follows:
\[
	\norm{\rmPsi_\theta}_{\text{op}}^{2}\le \text{Tr}(\rmPsi_\theta^{\top}\rmPsi_\theta) = \sum_{i=1}^{n}\norm{\psi_\theta(V_{1i})}^{2}\le \sum_{i=1}^{n}B_\gF^{2}= n B_\gF^{2}.
\]

For $\norm{\rmK_{\widehat\xi}}_{\text{op}}$, we compute the diagonal entries. Recall that
\[
\widehat{\xi}(Z_{1i}) = \frac{A_{1i}}{\widehat{\pi}(X_{1i})}\phi(Y_{1i}) + \left(1 - \frac{A_{1i}}{\widehat{\pi}(X_{1i})}\right)\widehat\mu_{0}(X_{1i}).
\]
We have:
\begin{align*}
\langle \widehat{\xi}(Z_{1i}), \widehat{\xi}(Z_{1i}) \rangle_{\gH_\gY} &= \frac{A_{1i}^2}{\widehat{\pi}(X_{1i})^2} k_\gY(Y_{1i}, Y_{1i}) \\
&\quad + 2\frac{A_{1i}}{\widehat{\pi}(X_{1i})}\left(1 - \frac{A_{1i}}{\widehat{\pi}(X_{1i})}\right) \langle \phi(Y_{1i}), \widehat\mu_{0}(X_{1i}) \rangle_{\gH_\gY} \\
&\quad + \left(1 - \frac{A_{1i}}{\widehat{\pi}(X_{1i})}\right)^2 \norm{\widehat\mu_{0}(X_{1i})}^2_{\gH_\gY}.
\end{align*}

For the Deep Feature estimator, we have $\norm{\widehat\mu_{0}(X_{1i})}_{\gH_\gY} \le \norm{\widehat{C}_0}_{\text{op}}\norm{\psi_0(X_{1i})} \le B_\gC B_\gF$. Since $A_{1i} \in \{0,1\}$, $\widehat{\pi}(X_{1i}) \ge \epsilon$, $k_\gY(Y_{1i}, Y_{1i}) \le B_k$, and using Cauchy-Schwarz for the cross term:
\begin{align*}
\langle \widehat{\xi}(Z_{1i}), \widehat{\xi}(Z_{1i}) \rangle_{\gH_\gY} &\le \frac{B_k}{\epsilon^2} + \frac{2\sqrt{B_k} B_\gC B_\gF}{\epsilon} + (B_\gC B_\gF)^2 \eqqcolon B_{\widehat\xi}.
\end{align*}

Therefore:
\[
	\norm{\rmK_{\widehat\xi}}_{\text{op}}\le \text{Tr}(\rmK_{\widehat\xi}) = \sum_{i=1}^{n}\langle \widehat\xi(Z_{1i}), \widehat\xi(Z_{1i}) \rangle_{\gH_\gY} \le n B_{\widehat\xi}.
\]
Combining these gives:
\[
	\norm{\widehat{C}_{\psi}}_{\text{op}}\le \left(n \sqrt{B_{\widehat\xi}} B_\gF\right) \cdot \left(\frac{1}{n\lambda_1}\right) = \frac{\sqrt{B_{\widehat\xi}}B_\gF}{\lambda_1},
\]
as claimed.
\end{proof}

Next, we verify that the loss function is Lipschitz continuous. This property, combined with the uniform bound \eqref{eq:centered_bound}, allows us to apply standard learning theory.

\begin{lemma}\label{lem:DFLip}
The loss function $\ell(h, z) = \|h - \widehat{\xi}(z)\|^2_{\gH_\gY}$ is Lipschitz continuous with respect to its first argument with constant $\mathfrak{L} = 4B_\gC B_\gF + 2\sqrt{B_{\widehat{\xi}}}$, which is independent of $M$ and $n$.
\end{lemma}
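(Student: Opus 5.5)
The approach is a direct computation of the difference of two squared norms, followed by bounding the resulting sum by the triangle inequality and the uniform bounds already available. Fix $z$ and two arguments $h, h'$ in the range of the localized class, i.e.\ of the form $C\psi(v)$ with $C \in \gC$, $\psi \in \gF^M$. By the same operator-norm argument as in \eqref{eq:centered_bound}, every such $h$ satisfies $\norm{h}_{\gH_\gY} \le \norm{C}_{\text{op}}\norm{\psi(v)} \le B_\gC B_\gF$, since $\norm{C}_{\text{op}} \le \norm{C}_{\text{HS}} \le B_\gC$. We then expand
\[
\ell(h,z) - \ell(h',z) = \norm{h}^2_{\gH_\gY} - \norm{h'}^2_{\gH_\gY} - 2\inprod{h - h'}{\widehat\xi(z)}_{\gH_\gY} = \inprod{h - h'}{h + h' - 2\widehat\xi(z)}_{\gH_\gY}.
\]
Cauchy--Schwarz then gives
\[
\abs{\ell(h,z) - \ell(h',z)} \le \norm{h-h'}_{\gH_\gY}\bigl(\norm{h}_{\gH_\gY} + \norm{h'}_{\gH_\gY} + 2\norm{\widehat\xi(z)}_{\gH_\gY}\bigr).
\]

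The first two terms in the bracket are each at most $B_\gC B_\gF$, so together they contribute $2B_\gC B_\gF$. For the last term we reuse the diagonal computation in the proof of Lemma~\ref{lem:HS_bound}. Under $\widehat\pi \ge \epsilon$, $k_\gY \le B_k$ and $\norm{\widehat\mu_{0}(x)}_{\gH_\gY} \le B_\gC B_\gF$, that computation gives $\norm{\widehat\xi(z)}^2_{\gH_\gY} \le B_{\widehat\xi}$, hence $2\norm{\widehat\xi(z)}_{\gH_\gY} \le 2\sqrt{B_{\widehat\xi}}$. This yields a Lipschitz constant $2B_\gC B_\gF + 2\sqrt{B_{\widehat\xi}}$, which is at most the stated $\mathfrak{L} = 4B_\gC B_\gF + 2\sqrt{B_{\widehat\xi}}$.

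The larger constant $4B_\gC B_\gF$ is what one obtains if the Lipschitz property is required on the centered class $\gC\gF^M_\star$, whose elements are bounded by $2B_\gC B_\gF$ as in \eqref{eq:centered_bound}. Using that cruder bound for $\norm{h}$ and $\norm{h'}$ gives exactly the stated constant, so the statement holds in either reading. Finally, none of $B_\gC$, $B_\gF$, $B_k$ and $\epsilon$ depends on $M$ or $n$; $B_\gF$ bounds the Euclidean norm of the output vector uniformly in $M$. Hence $\mathfrak{L}$ is independent of $M$ and $n$.

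The main subtlety is ensuring that $\widehat\xi$ is uniformly bounded. This requires the first-stage estimator to be of the Deep Feature form assumed in Lemma~\ref{lem:HS_bound}, together with $\widehat\pi \ge \epsilon$; otherwise the loss is only locally Lipschitz. I would state these standing assumptions explicitly at the start of the proof.
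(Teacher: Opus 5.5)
Your proof is correct and is essentially the paper's argument: the identity $\ell(h,z)-\ell(h',z)=\inprod{h-h'}{h+h'-2\widehat\xi(z)}_{\gH_\gY}$, Cauchy--Schwarz, and the bound $\norm{\widehat\xi(z)}_{\gH_\gY}\le\sqrt{B_{\widehat\xi}}$ taken from the proof of Lemma~\ref{lem:HS_bound}, with one harmless sharpening: you bound the arguments of $\ell$, which are of the form $C\psi(v)$, by $B_\gC B_\gF$, whereas the paper plugs in the cruder centered-class bound $2B_\gC B_\gF$ from \eqref{eq:centered_bound}. You do inherit one caveat from the paper: the diagonal computation defining $B_{\widehat\xi}$ tacitly uses $\abs{1-A/\widehat\pi(X)}\le 1$, but for $A=1$ this factor can be as large as $1/\epsilon-1$, so the correct uniform bound is of the form $\norm{\widehat\xi(z)}_{\gH_\gY}\le(\sqrt{B_k}+B_\gC B_\gF)/\epsilon$, which changes the numerical value of $\mathfrak{L}$ (the stated one can be too small for small $\epsilon$) but not its independence of $M$ and $n$.
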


\begin{proof}
For any $h_1, h_2 \in \gH_\gY$, we have:
\begin{align*}
|\ell(h_1, z) - \ell(h_2, z)| &= |\langle h_1 - h_2, h_1 + h_2 - 2\widehat{\xi}(z)\rangle_{\gH_\gY}| \\
&\le \|h_1 - h_2\|_{\gH_\gY} \cdot \|h_1 + h_2 - 2\widehat{\xi}(z)\|_{\gH_\gY} \\
&\le \|h_1 - h_2\|_{\gH_\gY} \cdot (\|h_1\|_{\gH_\gY} + \|h_2\|_{\gH_\gY} + 2\|\widehat{\xi}(z)\|_{\gH_\gY}).
\end{align*}
For any $f = C\psi - C^\star\psi^\star \in \gC\gF^M_\star(\delta, v_{1:n})$, equation \eqref{eq:centered_bound} gives $\|f(v)\|_{\gH_\gY} \le 2B_\gC B_\gF$. From the proof of Lemma~\ref{lem:HS_bound}, we have $\|\widehat{\xi}(z)\|_{\gH_\gY} \le \sqrt{B_{\widehat{\xi}}}$ where $B_{\widehat{\xi}} = \frac{B_k}{\epsilon^2} + \frac{2\sqrt{B_k} B_\gC B_\gF}{\epsilon} + (B_\gC B_\gF)^2$. Therefore, the loss is Lipschitz continuous with constant $\mathfrak{L} = 4B_\gC B_\gF + 2\sqrt{B_{\widehat{\xi}}}$ as claimed.
\end{proof}

\subsection{Proof of the Bound}

We now begin the proof of the upper bound. Let $\{e'_j\}_{j=1}^\infty$ be the basis in $\gH_\gY$ supported on Fourier annuli as defined in \eqref{eq:basis_def}. Let $\Pi_M: \gH_\gY \to \gH_\gY$ be the orthogonal projection onto $\text{span}\{e'_1,\ldots,e'_M\}$. We decompose the MSE as follows:
\begin{align*}
	\E[\norm{\widehat{\mu}_{\mathrm{DF}}(V) - \mu_{Y^1\vert V}(V)}^{2}_{\gH_{\gY}}] & = \E_{\mathcal{D}_0}\left[ \E_{\mathcal{D}_1, V}\left[ \norm{\widehat{\mu}_{\mathrm{DF}}(V) - \mu_{Y^1\vert V}(V)}^{2}_{\gH_{\gY}}\Big\vert \mathcal{D}_{0}\right] \right]                                                              \\
	                                                       & \lesssim \E\left[\norm{\widehat{\mu}_{\mathrm{DF}}(V) - \mu^\star_{\widehat{\xi}}(V)}^{2}_{\gH_{\gY}}\right] \\
	                                                       &\quad + \E\left[\norm{\mu^\star_{\widehat{\xi}}(V) - \Pi_M\E[\widehat{\xi}(Z)\vert V, \mathcal{D}_0]}^{2}_{\gH_{\gY}}\right] \\
	                                                       &\quad + \E\left[\norm{\Pi_M\E[\widehat{\xi}(Z)\vert V, \mathcal{D}_0] - \Pi_M\mu_{Y^1\vert V}(V)}^{2}_{\gH_{\gY}}\right] \\
	                                                       &\quad + \E\left[\norm{\Pi_M \mu_{Y^1\vert V}(V) - \mu_{Y^1\vert V}(V)}^{2}_{\gH_{\gY}}\right],
\end{align*}
where $\mu^\star_{\widehat{\xi}}$ is the best-in-class estimator defined in \eqref{eq:mustar}. The second term can be bounded by the population risk with respect to the projected estimand:
\begin{align*}
	\E & \left[\norm{\mu^\star_{\widehat{\xi}}(V) - \Pi_M\E[\widehat{\xi}(Z)\vert V, \mathcal{D}_0]}^{2}_{\gH_\gY}\right] \\
	& = \inf_{\mu \in \gC\gF^M}\E\left[\norm{\mu(V) - \Pi_M\E[\widehat{\xi}(Z)\vert V, \mathcal{D}_0]}^{2}_{\gH_{\gY}}\right] \\
    & \lesssim \inf_{\mu \in \gC\gF^M} \E\left[\norm{\mu(V) - \Pi_M\mu_{Y^1\vert V}(V)}^{2}_{\gH_{\gY}}\right] \\
    &\quad + \E\left[\norm{\Pi_M\mu_{Y^1\vert V}(V)- \Pi_M\E[\widehat{\xi}(Z)\vert V, \mathcal{D}_0]}^{2}_{\gH_{\gY}}\right].
\end{align*}
Therefore, we end up with the following decomposition:
\begin{equation}  \label{eq:MSE_decom}
\begin{split}
	\E[&\norm{\widehat{\mu}_{\mathrm{DF}}(V) - \mu_{Y^1\vert V}(V)}^{2}_{\gH_{\gY}}] \\
	& \lesssim \underbrace{\E\left[\norm{\widehat{\mu}_{\mathrm{DF}}(V) - \mu^\star_{\widehat{\xi}}(V)}^2_{\gH_{\gY}}\right]}_{\text{$\mathbf{A}$: Statistical Error}}\\
	&\quad + \underbrace{\inf_{\mu \in \gC\gF^M}\E\left[\norm{\mu(V) - \Pi_M\mu_{Y^1\vert V}(V)}^2_{\gH_{\gY}}\right]}_{\text{$\rmB$: Approximation Error}} \\
	                                                       & \quad + \underbrace{\E\left[\norm{\Pi_M\E[\widehat{\xi}(Z)\vert V, \mathcal{D}_0] - \Pi_M\mu_{Y^1\vert V}(V)}^2_{\gH_{\gY}}\right]}_{\text{$\mathbf{C}$: Nuisance Error}}\\
	                                                       &\quad + \underbrace{\E\left[\norm{\Pi_M \mu_{Y^1\vert V}(V) - \mu_{Y^1\vert V}(V)}^2_{\gH_{\gY}}\right]}_{\text{$\mathbf{D}$: Projection Error}}.
\end{split}
\end{equation}
We now bound each of these terms.

\paragraph{Term A: The Statistical Error.}
The loss function $\ell(f(v), z) = \norm{f(v) - \widehat\xi(z)}^2_{\gH_\gY}$ is $\rho$-strongly convex with $\rho=2$. By Lemma~\ref{lem:HS_bound}, the regularized estimator $\widehat{C}_\psi$ satisfies $\norm{\widehat{C}_\psi}_{\text{op}} \le \frac{\sqrt{B_{\widehat\xi}}B_\gF}{\lambda_1}$, which ensures that $\widehat{\mu}_{\mathrm{DF}} \in \gC\gF^M$ with appropriately chosen $B_\gC$. By Lemma~\ref{lem:DFLip}, the loss is Lipschitz continuous with constant $\mathfrak{L} = O(1)$ independent of $M$ and $n$.

We define a class of real-valued, bounded neural networks:
\[\gG \coloneqq \left\{ g: \gV \to \R \mid g \in \gF, \sup_{v\in\gV}\lvert g(v) \rvert \le 2 B_\gC B_\gF \right\} ,\]
where $\gF$ is the class of scalar-valued neural networks with the same architecture as $\gF^M$ but with scalar outputs, and $\gG_\star(\delta, v_{1:n})$ is the localized version of $\gG$ defined analogously to \eqref{eq:CFM_defn}.

Then, according to Lemma~\ref{lem:DFRad}, proved in Appendix~\ref{sec:misc_proofs}, we have the following bound between the two local Rademacher complexities:
\[  \fR_n( \ell \circ  \gC\gF^M_{\star}(\delta, v_{1:n}), v_{1:n}) \lesssim M \fR_n(\gG_\star(\delta, v_{1:n}), v_{1:n}). \]

Therefore, using the risk bound in Lemma~\ref{lem:erb}, it suffices to find a fixed point $\delta_n$ for $\gG$. A fixed point for a class of neural networks can be obtained from Lemma~\ref{lem:nn_rad}, yielding the following bound for any $\delta \in (0,1)$ with probability at least $1-\delta$:
\begin{align*}
	\E \left[\norm{\widehat\mu_{\mathrm{DF}}(V) - \mu^\star_{\widehat{\xi}}(V)}^{2}_{\gH_{\gY}}\right]  & \lesssim \frac{M^{2}WL \log W \log n}{n}+ \frac{M^{2}\log (1/\delta)}{n}.
\end{align*}
By integrating the tail probability with respect to $\gD_{1}$, we obtain a bound for $\mathbf{A}$:
\[
	\mathbf{A}= \E[\norm{\widehat\mu_{\mathrm{DF}}(V) - \mu^\star_{\widehat{\xi}}(V)}^{2}_{\gH_{\gY}}] \lesssim \frac{M^{2}WL\log W \log n}{n}.
\]
The expectation is then taken over $\mathcal{D}_{0}$.
      
\paragraph{Term B: The Approximation Error.}	
We now attempt to find $\mu \in \gC\gF^M$ that best approximates the projected CCME $\Pi_M\mu_{Y^1\vert V}$. It suffices to consider $C_M:\R^M \to \gH_\gY$ whose image lies in $\text{span}\{e'_1,\ldots,e'_M\}$. In particular, we consider $C_M$ that acts on any $w \in \R^M$ as follows:
	\[
		C_{M}(w) =  \sum_{j=1}^{M}w_{j}e'_{j}.
	\]
By the definition, $\norm{C_M}_{\text{op}}= 1 \le B_\gC$, and so $C_M \in \gC$. Thus, for any $\psi \in \gF^M$, we have $C_M\psi \in \gC \gF^M$. 

Recall Assumption~\ref{ass:smooth_density} that $\int_\gY \norm{\fp^1(\cdot \vert y)}_{W^{r,2}(\gV)} \, \text{d}y < \infty$. It then follows from Lemma~\ref{lem:sobolevcoef} (proved in Appendix~\ref{sec:misc_proofs}) that the coefficient functions $c_j$ in the expansion $\Pi_M\mu_{Y^1\vert V}(v) = \sum_{j=1}^M c_j(v) e'_j$ satisfy $c_{j}\in W^{r,2}(\gV)$ for all $j$. Consequently, we invoke the universal approximation of neural networks by \cite[Theorem 4.1]{Guhring2020}, which states that $\inf_{\psi_j \in \gF}\norm{\psi_j - c_j}_{L^2(\gV)} \le \inf_{\psi_j \in \gF}\norm{\psi_j - c_j}_{L^\infty(\gV)} \lesssim (WL)^{-r/d_v}$ for all $j$. Consequently, by the orthonormality of the basis $\{e'_j\}_{j=1}^M$:
\begin{align*}
	\rmB &\lesssim  \inf_{\psi \in \gF^M}\E \left[ \Bigl\lVert \sum_{j=1}^M (\psi_j(V) - c_j(V))e'_j \Bigr\rVert^2_{\gH_\gY} \right]   \\ 
	&= \sum_{j=1}^{M}\inf_{\psi_j \in \gF}\E [(\psi_j(V) - c_j(V))^2]       \\		           
	&  \lesssim \frac{M}{(WL)^{2r/d_v}} = \frac{M}{(WL)^{b}}.
\end{align*}

\paragraph{Term C: The Nuisance Error.}
In view of Lemma~\ref{lem:nuisance_bdd}, it suffices to show that the first-stage Deep Feature estimator is bounded. Since such an estimator is of the form $\widehat\mu_{0}(x) = \widehat{C}\psi(x)$ where $\widehat{C} \in \gC$ is bounded in the Hilbert-Schmidt norm and $\psi \in \gF^M$ is uniformly bounded in the Euclidean norm, we have
\[ \sup_{x \in \gX} \norm{\widehat{C}\psi(x)}_{\gH_\gY} \le \sup_{x \in \gX} \norm{\widehat{C}}_{\text{op}}\norm{\psi(x)} \le B_\gC B_\gF < \infty. \]
Lemma~\ref{lem:nuisance_bdd} then yields:
\[
	\mathbf{C}\lesssim \min\left\{\gR^{2}_{\pi}(\widehat\pi), \gR^{2}_{\mu_{0}}(\widehat\mu_{0})\right\}.
\]

\paragraph{Term D: The Projection Error.}

First, we prove the bound in the case $\widehat{\varphi}(\omega) \lesssim (1+\norm{\omega}^{2})^{-\tau}$. By definition, $\mu_v \coloneqq \mu_{Y^1\vert V}(v)$ is the function in the RKHS whose inner product with any $g \in \gH_{\gY}$ is $\inprod{\mu_v}{g}_{\gH_\gY}= \E[\inprod{\phi(Y^1)}{g}_{\gH_\gY}\vert V=v]$. Using the reproducing property, this becomes $\E[g(Y^1)\vert V=v]$. The function itself can be written as a Bochner integral:
\[
	\mu_v(y) = \int_{\gY}k_\gY(y, y') \fp^1(y'\vert v) \text{d}y' = \int_{\gY}\varphi(y-y') \fp^1(y'\vert v)\text{d}y'.
\]
This is a convolution of $\varphi$ and $\fp^1(\cdot\vert v)$:
\[
	\mu_v(y) = (\varphi * \fp^1(\cdot\vert v))(y).
\]
Consequently, the Fourier transform of $\mu_v$ is given by:
\begin{equation}\label{eq:Fmu}
	\widehat\mu_{v}(\omega) = \widehat{\varphi}(\omega) \widehat{\fp}^1(\omega\vert v).
\end{equation}

For the Fourier-based analysis, we define $P_{\le M^{1/d_y}}$ as the projection onto the subspace of functions with Fourier support $\norm{\omega}\le M^{1/d_y}$, and $P_{> M^{1/d_y}} = \text{Id} - P_{\le M^{1/d_y}}$. Note that $P_{\le M^{1/d_y}}$ is equivalent to $\Pi_M$ up to constants depending on the choice of basis. The squared $\gH_{\gY}$-norm of the projection error is:
\begin{align}
	\norm{\mu_v - P_{\le M^{1/d_y}}(\mu_v)}_{\gH_\gY}^2 &= \norm{P_{> M^{1/d_y}} \mu_v}_{\gH_\gY}^{2} \notag \\
	& \asymp \int_{\norm{\omega} > M^{1/d_y}}\frac{|\widehat\mu_{v}(\omega)|^{2}}{\widehat{\varphi}(\omega)} \text{d}\omega  \notag \\
	& = \int_{\norm{\omega} > M^{1/d_y}}\frac{|\widehat{\varphi}(\omega) \widehat{\fp}^1(\omega\vert v)|^{2}}{\widehat{\varphi}(\omega)}\text{d}\omega \notag  \\
	& = \int_{\norm{\omega} > M^{1/d_y}}\widehat{\varphi}(\omega) |\widehat{\fp}^1(\omega\vert v)|^{2}\text{d}\omega    \label{eq:M_bdd}   \\                                                                                  
	& \lesssim \int_{\norm{\omega} > M^{1/d_y}}|\widehat{\fp}^1(\omega\vert v)|^{2}(1+\norm{\omega}^{2})^{-\tau}\text{d}\omega  \label{eq:Phi_bdd_1} \\
	&\lesssim  \left( M^{2/d_y} \right)^{-\tau - ( s - d_y(1/q - 1/2)_+)} \notag \\
	&\quad \times \int_{\mathbb{R}^{d_y}} \left| \widehat{\fp}^1(\omega\vert v) \right|^{2} (1+\norm{\omega}^{2})^{s - d_y(1/q - 1/2)_+} \text{d}\omega \notag  \\ 
	&\lesssim \left( M^{2/d_y} \right)^{-\tau -  s + d_y(1/q - 1/2)_+} \norm{\fp^1(\cdot\vert v)}_{W^{s - d_y(1/q - 1/2)_+,2}}^2 \notag \\
	&\lesssim M^{-2(s+\tau)/d_y+2(1/q-1/2)_{+}} \norm{\fp^1(\cdot\vert v)}_{W^{s,q}}^2               \label{eq:sobolev_emb}     \\
	&\asymp M^{-2(s+\tau)/d_y+2(1/q-1/2)_{+}}, \notag                
\end{align}
where \eqref{eq:Phi_bdd_1} follows from $\widehat{\varphi}(\omega) \lesssim (1+\norm{\omega}^{2})^{-\tau}$, and \eqref{eq:sobolev_emb} follows from the Sobolev embedding $W^{s,q}(\gY) \hookrightarrow W^{s - d_y(1/q - 1/2)_+,2}(\R^{d_y})$. Taking expectations with respect to $V$ and using Assumption~\ref{ass:smooth_density}, we obtain:
\[
\mathbf{D} = \E \left[\norm{\Pi_M \mu_{Y^1\vert V}(V) - \mu_{Y^1\vert V}(V)}^{2}_{\gH_{\gY}}\right] \lesssim M^{-2(s+\tau)/d_y+2(1/q-1/2)_{+}} = M^{-a},
\]
where $a \coloneqq 2(s+\tau)/d_y - 2(1/q-1/2)_+$.

If $k_\gY$ is a Gaussian kernel, we instead bound \eqref{eq:M_bdd} with $\widehat{\varphi}(\omega) = \exp\bigl(-\sigma^2\lvert \omega \rvert^2/2\bigr) \le \exp\bigl(-\sigma^2M^{2/d_y}/2\bigr)$ for $\norm{\omega} > M^{1/d_y}$, yielding:
\begin{equation}\label{eq:df_gauss} 
\mathbf{D} = \E \left[\norm{\mu_V - P_{\le M^{1/d_y}}(\mu_V)}_{\gH_\gY}^2\right] \lesssim \exp\bigl(-\sigma^2M^{2/d_y}/2\bigr). 
\end{equation}

\paragraph{Combining the bounds.}
For the case of $\widehat{\varphi}(\omega) \lesssim (1 + \norm{\omega}^2)^{-\tau}$, we obtain the following inequality:
\begin{align*}
	\E[\norm{\widehat\mu_{\mathrm{DF}}(V) - \mu_{Y^1\vert V}(V)}^{2}_{\gH_{\gY}}] & \lesssim \mathbf{A}+ \rmB+ \mathbf{C}+ \mathbf{D}                                                                                   \\
	                                                       & \lesssim \frac{M^{2}WL \log W \log n}{n}+ \frac{M}{(WL)^{b}}+M^{-a} \\
	                                                       &\quad +  \min\left\{\gR^{2}_{\pi}(\widehat \pi), \gR^{2}_{\mu_{0}}(\widehat\mu_{0}) \right\}.
\end{align*}
By choosing $WL \asymp n^{\frac{a + 1}{(a+2)(b+1) - 1}}$ and $M \asymp n^{\frac{b}{(a+2)(b+1)-1}}$, we obtain the bound:
\[
	\E[\norm{\widehat\mu_{\mathrm{DF}}(V) - \mu_{Y^1\vert V}(V)}^{2}_{\gH_{\gY}}] \lesssim n^{-\frac{ab}{(a+2)(b+1)- 1}}(\log n)^{2}+ \min\left\{\gR^{2}_{\pi}(\widehat \pi), \gR^{2}_{\mu_{0}}(\widehat\mu_{0})\right\}.
\]

If $k_\gY$ is a Gaussian kernel, we instead obtain from \eqref{eq:df_gauss} that:
 \begin{align*}
	\E[&\norm{\widehat\mu_{\mathrm{DF}}(V) - \mu_{Y^1\vert V}(V)}^{2}_{\gH_{\gY}}]   \\
	                                                       & \lesssim \frac{M^{2}WL \log W \log n}{n}+ \frac{M}{(WL)^{b}}+\exp\left(-\sigma^2M^{2/d_y}/2\right) \\
	                                                       &\quad + \min\left\{\gR^{2}_{\pi}(\widehat \pi), \gR^{2}_{\mu_{0}}(\widehat\mu_{0})\right\}.
\end{align*}
Choosing $WL \asymp n^{\frac{1}{b+1}}$ and $M \asymp (\log n)^{d_y/2}$ yields the final bound:
\[
	\E[\norm{\widehat\mu_{\mathrm{DF}}(V) - \mu_{Y^1\vert V}(V)}^{2}_{\gH_{\gY}}] \lesssim n^{-\frac{b}{b+1}} (\log n)^{\frac{2b d_y + d_y + 4b}{2(b+1)}} + \min\left\{\gR^{2}_{\pi}(\widehat \pi), \gR^{2}_{\mu_{0}}(\widehat\mu_{0})\right\}.
\] 

\hfill $\blacksquare$

\section{{Proof of the Upper Bound for the Neural-Kernel Estimator (Theorem~\ref{thm:NK_upper_bdd})}}

\subsection{Setup}

We recall the definition of the Neural-Kernel mean embedding estimator: We consider a set of $M$ grid points, $\{\widetilde{y}_{j}\}_{j=1}^{M}$, on a uniform grid covering the domain $[-C_y, C_{y}]^{d_y}$, where $C_y > 0$ is a constant such that $\gY \subseteq [-C_y, C_y]^{d_y}$. We define the grid spacing $\min_{i\not = j}\norm{\widetilde{y}_i - \widetilde{y}_j} \asymp C_{y}M^{-1/d_y}$. 

Let $f_\theta:\gV \to \R^M$ be a neural network parameterized by $\theta \in \Theta$ with $M$ outputs. For the theoretical analysis, we consider the class of $\R^M$-valued neural networks:
\begin{equation*}
\gF^M = \left\{
f(v)= \rmW_{L}\cdot\sigma_{\mathrm{relu}}(\rmW_{L-1}\cdot \ldots \cdot\sigma_{\mathrm{relu}}(\rmW_{1}v)\ldots)
\;\Bigg|\; \begin{array}{l}
\rmW_{i}\in\R^{d_{i}\times{}d_{i-1}}, \\
d_L= M, \\
\sup_{v \in \gV}\norm{f(v)} \leq B_\gF
\end{array}
\right\}, 
\end{equation*}	
where $W = \sum_{i=1}^L d_i d_{i-1}$ is the total number of weights, $L$ is the number of layers, and $B_\gF > 0$ is a specified bound.

The Neural-Kernel estimator takes the form $\mu_{\mathrm{NK}}(v) = \sum_{j=1}^M f_\theta(v)_j\phi(\widetilde{y}_j)$, where $f_{\theta}(v)_j$ denotes the $j$-th component of $f_\theta(v)$.

Let the loss function be defined as
\[
	\ell(w, z) \coloneqq \biggl\lVert\sum_{j=1}^{M}w_{j}\phi(\widetilde{y}_{j}) - \widehat{\xi}(z)\biggr\rVert_{\gH_{\gY}}^{2},
\]
where $z = (y, a, x)$, and $\widehat{\xi}(z) = (a/\widehat\pi(x))(\phi(y) - \widehat\mu_{0}(x)) + \widehat\mu_{0}(x)$ with $\widehat\mu_{0}(x) = \sum_{j=1}^{M}\widehat{g}_{j}(x) \phi(\widetilde{y}_{j})$, where $\widehat{g}: \gX \to \R^{M}$ is a nuisance function with $\sup_{x \in \gX}\norm{\widehat{g}(x)}\le B_{g}$ for some constant $B_{g}> 0$. 

The kernel associated with the feature map $\phi$ is a scaled, translation-invariant kernel given by
\[
	k_\gY(y, y') = \inprod{\phi(y)}{\phi(y')}_{\gH_{\gY}}= \varphi\left(\frac{y-y'}{\sigma}\right),
\]
where $\sigma > 0$ is a bandwidth parameter.

\subsection{Preliminary Lemma}

The following lemma verifies that our loss function is Lipschitz and strongly convex:

\begin{lemma}\label{lem:Llamb} 
Assume that $\widehat{\varphi}(\omega)$ satisfies $\inf_{\norm{\omega} \leq 26d_y/C_y}\widehat{\varphi}(\omega) \ge B_{\varphi}$ for some constant $B_{\varphi}> 0$. Assume that $w \in W_{B}= \{ w \in \R^{M}: \norm{w}\leq B \}$ for some $B>0$, and $\pi (x) \ge \epsilon > 0$.

Under these conditions, by setting $\sigma = M^{-1/d_y}$, the function $\ell(w,z)$ is:
\begin{enumerate}
	\item Lipschitz continuous with respect to $w$ with a constant $\mathfrak{L}_{\ell}= O(1)$, independent of $M$.
	\item $\rho$-strongly convex with respect to $w$ with $\rho_{\ell}= \Omega(1)$, independent of $M$.
\end{enumerate}
\end{lemma}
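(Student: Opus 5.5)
The plan is to write the loss as a quadratic in $w$ and control everything through the eigenvalues of the grid Gram matrix $\rmK_M=(\varphi((\widetilde y_j-\widetilde y_l)/\sigma))_{j,l}$. Expanding the square with the reproducing property gives
\[
\ell(w,z)=w^\top \rmK_M w-2w^\top \mathbf{b}(z)+\norm{\widehat\xi(z)}^2_{\gH_\gY},
\]
where $\mathbf{b}(z)_j=\inprod{\phi(\widetilde y_j)}{\widehat\xi(z)}_{\gH_\gY}$. Hence $\nabla^2_w\ell=2\rmK_M$ and $\nabla_w\ell=2(\rmK_M w-\mathbf{b}(z))$. Strong convexity with $\rho_\ell=2\lambda_{\min}(\rmK_M)$ then reduces to a lower bound on $\lambda_{\min}(\rmK_M)$. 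The Lipschitz constant on $W_B$ is bounded by $2\lambda_{\max}(\rmK_M)B+2\sup_z\norm{\mathbf{b}(z)}$. So the proof comes down to showing that the eigenvalues of $\rmK_M$ are bounded above and below by constants independent of $M$, and that $\mathbf{b}$ is uniformly bounded.

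\textbf{Lower eigenvalue bound (main obstacle).} The choice $\sigma=M^{-1/d_y}$ rescales the grid to unit-order separation. The rescaled points $\widetilde y_j/\sigma$ have minimal separation $q\asymp C_y$, independent of $M$. I would then apply the standard lower bound for translation-invariant Gram matrices on separated points (Narcowich--Ward / Wendland, Theorem 12.3). It gives
\[
c^\top \rmK_M c\ \ge\ C_{d_y}\, q^{-d_y}\inf_{\norm{\omega}\le 2M_{d_y}/q}\widehat\varphi(\omega)\,\norm{c}^2,
\]
with $M_{d_y}\approx 6.38\,d_y$. With $q$ proportional to $C_y$, the frequency radius becomes $\lesssim 26d_y/C_y$, which matches Assumption~\ref{ass:kernel_nk}. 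This yields $\lambda_{\min}(\rmK_M)\gtrsim B_\varphi$, a constant. The delicate point is tracking the constants so that the radius lands exactly where the hypothesis $\inf_{\norm{\omega}\le 26d_y/C_y}\widehat\varphi\ge B_\varphi$ applies. I would follow Wendland's proof closely for this and absorb any discrepancy into the constant in the grid-spacing assumption.

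\textbf{Upper eigenvalue bound.} By Gershgorin, $\lambda_{\max}(\rmK_M)\le\max_j\sum_l|\varphi((\widetilde y_j-\widetilde y_l)/\sigma)|$. On the rescaled unit-separated grid, this sum is bounded by a constant, since $\varphi$ decays. For the Sobolev-type or Gaussian kernels considered, summing over shells of lattice points gives a convergent series independent of $M$. If only boundedness of $\varphi$ were available, I would instead use integrability of $\varphi$ together with the separation.

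\textbf{Bounding $\mathbf{b}$.} Write
\[
\mathbf{b}(z)=\frac{a}{\widehat\pi(x)}\rmk(y)+\Bigl(1-\frac{a}{\widehat\pi(x)}\Bigr)\rmK_M\widehat g(x),
\]
with $\rmk(y)_j=\varphi((\widetilde y_j-y)/\sigma)$. The second term is at most $(1+\epsilon^{-1})\lambda_{\max}(\rmK_M)B_g$. For the first term, $\norm{\rmk(y)}^2=\sum_j\varphi((\widetilde y_j-y)/\sigma)^2$ is again a lattice sum on a unit-separated grid, so it is $O(1)$ by the same decay argument, with $\varphi(0)$ bounded as in Assumption~\ref{ass:boundk}. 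Using positivity $\widehat\pi\ge\epsilon$, $\sup_z\norm{\mathbf{b}(z)}=O(1)$. This gives $\mathfrak{L}_\ell=O(1)$ and $\rho_\ell=\Omega(1)$, as claimed.
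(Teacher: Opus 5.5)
Your proposal is correct and follows essentially the same route as the paper. It writes the loss as a quadratic with Hessian $2\rmK_M$ and bounds $\lambda_{\max}(\rmK_M)$ and $\norm{\mathbf{b}(z)}$ via Gershgorin and a lattice (Riemann-sum) bound on the rescaled grid, using integrability of $\varphi$ as the paper does. It then bounds $\lambda_{\min}(\rmK_M)$ by the Narcowich--Ward separation bound, which the paper cites as Schaback's Theorem~3.1; your Wendland Theorem~12.3 is the same result, and with separation radius $C_y/2$ its frequency radius $4\cdot 6.38\,d_y/C_y$ is exactly where the paper's threshold $26d_y/C_y$ comes from.
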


\begin{proof}
The proof proceeds by analyzing the eigenvalues of the Hessian of the loss function, which we show are bounded above and below by constants independent of $M$.

The loss function is a quadratic function of the weight vector $w$. Its gradient and Hessian with respect to $w$ are:
\[
	\nabla_{w}\ell(w, z) = 2\rmK_{M}w - 2\mathbf{b}(z), \qquad H_{w}(\ell) = 2\rmK_{M},
\]
We will see that proving both statements boils down to establishing $M$-independent upper and lower bounds on the eigenvalues of $\rmK_{M}$.

The proofs will exploit the following observation: Since
\[(\rmK_{M})_{ij}= k_\gY(\widetilde{y}_{i}, \widetilde{y}_{j}) = \varphi\left(\frac{\widetilde{y}_i - \widetilde{y}_j}{\sigma}\right)\]
and $\sigma = M^{-1/d_y}$, by denoting $\widetilde{y}'_{i}= \widetilde{y}_{i}/\sigma$, we have $\norm{\widetilde{y}'_i - \widetilde{y}'_j} \geq C_{y}$ for all $i \neq j$. We can view $\rmK_{M}$ as a Gram matrix of the kernel $\varphi$ evaluated over the rescaled set $\widetilde{\gY}'_{M}= \{\widetilde{y}'_{1}, \ldots, \widetilde{y}'_{M}\}$.

\textbf{Proof of the Lipschitz continuity:}
As $\norm{w}\leq B$ for all $w \in W_{B}$, we have the following bound for the Lipschitz constant $\mathfrak{L}_{\ell}$ of $\ell(w,z)$:
\[
	\mathfrak{L}_{\ell}\le \sup_{w \in W_B}\norm{\nabla_w \ell(w, z)}\le \sup_{w \in W_B}\left( 2\norm{\rmK_{M} w}+ 2\norm{\mathbf{b}(z)}\right) \le 2\lambda_{\max}(\rmK_{M}) B + 2\norm{\mathbf{b}(z)}.
\]
The largest eigenvalue of $\rmK_{M}$ is bounded by the maximum row sum:
\begin{equation}
	\label{eq:Kmax}
	\begin{split}
		\lambda_{\max}(\rmK_{M}) \le \max_{i}\sum_{j=1}^M\abs{\varphi(\widetilde{y}'_{i} - \widetilde{y}'_{j})}
		&= C_{y}^{-d_y}\max_{i}\sum_{j=1}^M C_{y}^{d_y}\abs{\varphi(\widetilde{y}'_{i} - \widetilde{y}'_{j})}
		\\&\lesssim C_{y}^{-d_y}\norm{\varphi}_{L^1(\R^{d_y})}\\&\eqqcolon B_{\text{max}},
	\end{split}
\end{equation}
where the last inequality follows from the Riemann sum approximation over subcubes of volume $C_{y}^{d_y}$ each.

Next, we bound the norm of $\mathbf{b}(z)$. We expand the inner product defining each component $b_{i}$:
\begin{align*}
	b_{i}(z) & = \left\langle \phi(\widetilde{y}_{i}), \frac{a}{\widehat\pi(x)}(\phi(y) - \widehat\mu_{0}(x)) + \widehat\mu_{0}(x)\right\rangle_{\gH_{\gY}}                                                                                                 \\
	         & = \frac{a}{\widehat\pi(x)}\langle \phi(\widetilde{y}_{i}), \phi(y) \rangle_{\gH_{\gY}}+ \left(1 - \frac{a}{\widehat\pi(x)}\right) \left\langle \phi(\widetilde{y}_{i}), \sum_{j=1}^{M}\widehat{g}_{j}(x) \phi(\widetilde{y}_{j}) \right\rangle_{\gH_{\gY}} \\
	         & = \frac{a}{\widehat\pi(x)}k_\gY(\widetilde{y}_{i}, y) + \left(1 - \frac{a}{\widehat\pi(x)}\right) \sum_{j=1}^{M}\widehat{g}_{j}(x) k_\gY(\widetilde{y}_{i}, \widetilde{y}_{j}).
\end{align*}
The vector $\mathbf{b}(z)$ is a sum of two vectors, $\mathbf{b}(z) = \mathbf{b}^{(1)}(z) + \mathbf{b}^{(2)}(z)$, where:
\[
	b_{i}^{(1)}(z)= \frac{a}{\widehat\pi(x)}k_\gY(\widetilde{y}_{i}, y) \quad \text{and} \quad b_{i}^{(2)}(z)= \left(1 - \frac{a}{\widehat\pi(x)}\right) \sum_{j=1}^{M}(\rmK_{M})_{ij}\widehat{g}_{j}(x).
\]
We bound each term separately. For $\mathbf{b}^{(2)}(z)$, assuming $\widehat\pi(x) \ge \epsilon > 0$:
\[
	\norm{\mathbf{b}^{(2)}(z)}= \abs{1 - \frac{a}{\widehat\pi(x)}}\norm{\rmK_{M} \widehat{g}(x)} \le \left(1 + \frac{1}{\epsilon}\right) \lambda_{\max}(\rmK_{M}) B_{g}\le \left(1 + \frac{1}{\epsilon}\right) B_{\text{max}}B_{g}= O(1).
\]
For the other term, we again rely on the Riemann sum approximation:
\begin{align*}
	\norm{\mathbf{b}^{(1)}(z)}^{2}= \sum_{i=1}^{M}\left( \frac{a}{\widehat\pi(x)}k_\gY(\widetilde{y}_{i}, y) \right)^{2} & \le \frac{1}{\epsilon^{2}}\sum_{i=1}^{M}\varphi\left(\frac{\widetilde{y}_{i}- y}{\sigma}\right)^{2} \\
	     & = \frac{1}{\epsilon^{2}}\sum_{i=1}^{M}\varphi(\widetilde{y}'_{i}- y/\sigma)^{2}                     \\
	  & \lesssim \frac{1}{C_{y}^{d_y}\epsilon^{2}}\norm{\varphi}^{2}_{L^2(\R^{d_y})}.
\end{align*}
Combining all the bounds, since we assume that $\varphi \in L^{1}(\R^{d_y}) \cap L^{2}(\R^{d_y})$,
\[
	\mathfrak{L}_{\ell}\lesssim C^{-d_y}_{y}\norm{\varphi}_{L^1(\R^{d_y})}+ C^{-d_y/2}_{y}\norm{\varphi}_{L^2(\R^{d_y})} = O(1).
\]

\textbf{Proof of the strong convexity:}
The set $\widetilde{\gY}'_{M}$ forms a grid with a minimum separation distance of $C_{y}$. We then use the result from \cite[Theorem~3.1]{schaback1995}, which states that, under the condition $\inf_{\norm{\omega} \leq 26d_y/C_y}\widehat{\varphi}(\omega) \ge B_{\varphi}$, we have the following lower bound for the smallest eigenvalue of $\rmK_{M}$:
\[
	\rho_\ell \ge \lambda_{\min}(\rmK_{M}) \geq B_{d_y}B_{\varphi}C_{y}^{-d_y}= \Omega(1),
\]
where $B_{d_y}>0$ is a constant depending only on $d_y$ from \cite{schaback1995}. This completes the proof for both statements.
\end{proof}

\subsection{Proof of the Bound}

We now set up for the proof of the upper bound. Let $\mu^{\star}_{\widehat{\xi}}= \argmin_{\mu \in \gF^M}\E[\norm{\mu(V) - \widehat{\xi}(Z)}^{2}_{\gH_{\gY}}\vert \mathcal{D}_{0}]$ be the best-in-class estimator. Let $\Pi_{M}: \gH_\gY \to \gH_\gY$ be the orthogonal projection operator onto the subspace $\gH_{M}= \text{span}\{\phi(\widetilde{y}_{j})\}_{j=1}^{M}$. As with the Deep Feature estimator, we make the decomposition \eqref{eq:MSE_decom} of the MSE:
\begin{equation*}
\begin{split}
	\E[\norm{\widehat{\mu}_{\mathrm{NK}}(V) - \mu_{Y^1\vert V}(V)}^{2}_{\gH_{\gY}}] & \lesssim \underbrace{\E\left[\norm{\widehat{\mu}_{\mathrm{NK}}(V) - \mu^\star_{\widehat{\xi}}(V)}^2_{\gH_{\gY}}\right]}_{\text{$\mathbf{A}$: Statistical Error}} \\
	&\quad + \underbrace{\inf_{\mu \in \gF^M}\E\left[\norm{\mu(V) - \Pi_M\mu_{Y^1\vert V}(V)}^2_{\gH_{\gY}}\right]}_{\text{$\rmB$: Approximation Error}} \\
	                                                       &\quad + \underbrace{\E\left[\norm{\Pi_M\E[\widehat{\xi}(Z)\vert V, \mathcal{D}_0] - \Pi_M\mu_{Y^1\vert V}(V)}^2_{\gH_{\gY}}\right]}_{\text{$\mathbf{C}$: Nuisance Error}} \\
	                                                       &\quad + \underbrace{\E\left[\norm{\Pi_M \mu_{Y^1\vert V}(V) - \mu_{Y^1\vert V}(V)}^2_{\gH_{\gY}}\right]}_{\text{$\mathbf{D}$: Projection Error}}.
\end{split}
\end{equation*}
We now bound each of these terms.

\paragraph{Term $\mathbf{A}$: The Statistical Error.}

We apply the excess risk bound from Lemma~\ref{lem:erb}. First, we write
\[
	\widehat\mu_{\mathrm{NK}}(v) = \sum_{j=1}^{M} \widehat{f}_{j}(v) \phi(\widetilde{y}_{j}), \qquad \mu^{\star}_{\widehat\xi}(v) = \sum_{j=1}^{M} f^{\star}_{j}(v) \phi(\widetilde{y}_{j}),
\]
where $\widehat{f}=(\widehat{f}_{1},\ldots,\widehat{f}_{M})$ and $f^{\star}=(f^{\star}_{1},\ldots,f^{\star}_{M})$ are neural networks in $\gF^{M}$. 

From Lemma~\ref{lem:Llamb}, the loss function $\ell(w, z) = \lVert\sum_{j=1}^{M}w_{j}\phi(\widetilde{y}_{j}) - \widehat{\xi}(z)\rVert_{\gH_{\gY}}^{2}$ is Lipschitz continuous with respect to $w$ with Lipschitz constant $\mathfrak{L}_{\ell}=O(1)$, and $\rho_{\ell}$-strongly convex with $\rho_{\ell}=\Omega(1)$. 

Recall the localized function class from Appendix~\ref{sec:gen_tools}:
\begin{equation*}
 \gF^M_\star(\delta, v_{1:n}) \coloneqq \left\{ f  - f^\star \Bigm| f \in \gF^M, \frac{1}{n}\sum_{i=1}^n \norm{f(v_i) - f^\star(v_i)}^2 \le \delta^2 \right\}, 
 \end{equation*}
and the class composed with the loss:
\[ \ell \circ  \gF^M_{\star}(\delta, v_{1:n})  \coloneqq \left\{ (v,z) \mapsto \ell(f(v), z) - \ell(f^\star(v), z)  \mid f - f^\star \in   \gF^M_\star(\delta, v_{1:n})  \right\}. \] 

By the Lipschitz continuity and the vector-contraction inequality from \cite{Maurer2016}, we have the following inequality between two Rademacher complexities:
\[  \fR_n( \ell \circ  \gF^M_{\star}(\delta, v_{1:n}), v_{1:n}) \lesssim \mathfrak{L}_{\ell} M \fR_n(\gF_\star(\delta, v_{1:n}), v_{1:n}), \]
where $\gF$ is the class of scalar-valued neural networks with the same architecture as $\gF^M$ but with scalar outputs, and $\gF_\star(\delta, v_{1:n})$ is defined analogously to $\gF^M_\star(\delta, v_{1:n})$.

Therefore, using the risk bound in Lemma~\ref{lem:erb}, it suffices to find a fixed point $\delta_n$ for $\gF$. A fixed point for a class of neural networks can be obtained from Lemma~\ref{lem:nn_rad}, yielding the following bound for any $\delta \in (0,1)$ with probability at least $1-\delta$:
\begin{align*}
	\E \left[\norm{\widehat\mu_{\mathrm{NK}}(V) - \mu^\star_{\widehat{\xi}}(V)}^{2}_{\gH_{\gY}}\right] &\le \sum_{j,\ell=1}^{M} \E  \left[\abs{\widehat{f}_j(V) - f^\star_j(V)}\cdot \abs{\widehat{f}_\ell(V) - f^\star_\ell(V)}\right] k_\gY(\widetilde{y}_{j}, \widetilde{y}_{\ell})                  \\ 
	&\le \lambda_{\max}(\rmK_{M}) \E \left[\norm{\widehat{f}(V) - f^\star(V)}^{2}\right] \\
	                                                                                         & \lesssim \frac{M^{2}WL \log W \log n}{n}+ \frac{M^{2}\log (1/\delta)}{n}.
\end{align*}
By integrating the tail probability with respect to $\gD_{1}$, we obtain a bound for $\mathbf{A}$:
\[
	\mathbf{A}= \E[\norm{\widehat\mu_{\mathrm{NK}}(V) - \mu^\star_{\widehat{\xi}}(V)}^{2}_{\gH_{\gY}}] \lesssim \frac{M^{2}WL\log W \log n}{n}.
\]

\paragraph{Term $\rmB$: The Approximation Error.}

Recall Assumption~\ref{ass:smooth_density} that $\int_\gY \norm{\fp^1(\cdot \vert y)}_{W^{r,2}(\gV)} \, \text{d}y < \infty$. It then follows from Lemma~\ref{lem:sobolevcoef} (proved in Appendix~\ref{sec:misc_proofs}) that the coefficient functions $c_j$ in the expansion $\Pi_M\mu_{Y^1\vert V}(v) = \sum_{j=1}^M c_j(v) \phi(\widetilde{y}_j)$ satisfy $c_{j}\in W^{r,2}(\gV)$ for all $j$. 

Thus, we can invoke the universal approximation of neural networks by \cite[Theorem 4.1]{Guhring2020}, which states that $\inf_{f_j \in \gF}\norm{f_j - c_j}_{L^2(\gV)} \le \inf_{f_j \in \gF}\norm{f_j - c_j}_{L^\infty(\gV)} \lesssim (WL)^{-r/d_v}$ for all $j$, where $\gF$ is the class of scalar-valued neural networks. Consequently, by the boundedness of the density of $V$ and the Cauchy-Schwarz inequality:
\begin{align*}
	\rmB & \lesssim \sum_{j,\ell=1}^{M}\inf_{f \in \gF^M}\E [\abs{f_j(V) - c_j(V)}\cdot \abs{f_\ell(V) - c_\ell(V)}] k_\gY(\widetilde{y}_{j}, \widetilde{y}_{\ell})                  \\
	           & \le \sum_{j,\ell=1}^{M}\inf_{f_j, f_\ell \in \gF}\left\{\norm{f_j - c_j}_{L^2(\gV)}\cdot \norm{f_\ell - c_\ell}_{L^2(\gV)}\right\} k_\gY(\widetilde{y}_{j}, \widetilde{y}_{\ell}) \\
	           &  \lesssim \frac{M}{(WL)^{2r/d_v}}\lambda_{\max}(\rmK_{M}) \lesssim \frac{M}{(WL)^{b}}.
\end{align*}

\paragraph{Term $\mathbf{C}$: The Nuisance Error.}

By the non-expansiveness of the projection operator, we have
\begin{align*}
	\mathbf{C}= \E\left[\norm{\Pi_M\E[\widehat{\xi}(Z)\vert V, \mathcal{D}_0] - \Pi_M\mu_{Y^1\vert V}(V)}^{2}_{\gH_{\gY}}\right] \le \E\left[\norm{\E[\widehat{\xi}(Z)\vert V, \mathcal{D}_0] - \mu_{Y^1\vert V}(V)}^{2}_{\gH_{\gY}}\right].
\end{align*}
In view of Lemma~\ref{lem:nuisance_bdd}, it suffices to bound $\E_X\lVert \widehat\mu_{0}(X) \rVert^{2}_{\gH_{\gY}}$, which can be done by exploiting the fact that the eigenvalues of the kernel matrix are bounded by \eqref{eq:Kmax}:
	\begin{align*}
	\lVert \widehat\mu_{0}(x) \rVert^{2}_{\gH_{\gY}} & = \sum_{j,j'=1}^{M}\widehat{g}_{j}(x)\widehat{g}_{j'}(x)\left\lan\phi(\widetilde{y}_{j}),\phi(\widetilde{y}_{j'}) \right\ran_{\gH_\gY} \\
	&= \sum_{j,j'=1}^{M}\widehat{g}_{j}(x)\widehat{g}_{j'}(x) k_\gY(\widetilde{y}_{j},\widetilde{y}_{j'}) \\
	& \le \lambda_{\max}(\rmK_{M}) \lVert \widehat{g}(x) \rVert^{2}\le \lambda_{\max}(\rmK_{M}) B^{2}_{g}\lesssim 1.
\end{align*}
Lemma~\ref{lem:nuisance_bdd} then yields:
\[
	\mathbf{C}\lesssim \min\left\{\gR^{2}_{\pi}(\widehat\pi), \gR^{2}_{\mu_{0}}(\widehat\mu_{0})\right\}.
\]

\paragraph{Term $\mathbf{D}$: The Projection Error.}

In the case that $k_\gY(y,y') = \varphi((y-y')/\sigma)$ where $\varphi$ satisfies $\widehat{\varphi}(\omega) \lesssim (1 + \norm{\omega}^{2})^{-\tau}$, Lemma~\ref{lem:projbound} (proved in Appendix~\ref{sec:misc_proofs}) yields the following bound:
\[
	\mathbf{D}= \E \left[\norm{\Pi_M \mu_{Y^1\vert V}(V) - \mu_{Y^1\vert V}(V)}^2_{\gH_{\gY}}\right] \lesssim M^{-2(\tau + \widetilde{s})/d_y} = M^{-a},
\]
where $\widetilde{s} = s - d_y(1/q - 1/2)_+$ and $a \coloneqq 2(\tau + \widetilde{s})/d_y = 2(\tau + s)/d_y - 2(1/q - 1/2)_+$.

If $k_\gY$ is a Gaussian kernel, Lemma~\ref{lem:projbound} yields the following bound for some $B_0>0$:
\begin{equation}\label{eq:nk_gauss}
	\mathbf{D} \lesssim \exp\left( - B_0 M^{\frac{1}{2d_y}} \log M \right).
\end{equation}

\paragraph{Combining the bounds.}

For the case of $\widehat{\varphi}(\omega) \lesssim (1 + \norm{\omega}^2)^{-\tau}$, we obtain the following inequality:
\begin{align*}
	\E[\norm{\widehat\mu_{\mathrm{NK}}(V) - \mu_{Y^1\vert V}(V)}^{2}_{\gH_{\gY}}] & \lesssim \mathbf{A}+ \rmB+ \mathbf{C}+ \mathbf{D}                                                                                   \\
	                                                       & \lesssim \frac{M^{2}WL \log W \log n}{n}+ \frac{M}{(WL)^{b}}+M^{-a} \\
	                                                       &\quad +  \min\left\{\gR^{2}_{\pi}(\widehat \pi), \gR^{2}_{\mu_{0}}(\widehat\mu_{0}) \right\}.
\end{align*}
By choosing $WL \asymp n^{\frac{a + 1}{(a+2)(b+1) - 1}}$ and $M \asymp n^{\frac{b}{(a+2)(b+1)-1}}$, we obtain the following bound for the second-stage estimator:
\[
	\E[\norm{\widehat\mu_{\mathrm{NK}}(V) - \mu_{Y^1\vert V}(V)}^{2}_{\gH_{\gY}}] \lesssim n^{-\frac{ab}{(a+2)(b+1)- 1}}(\log n)^{2}+ \min\left\{\gR^{2}_{\pi}(\widehat \pi), \gR^{2}_{\mu_{0}}(\widehat\mu_{0})\right\}.
\]

If $k_\gY$ is a Gaussian kernel, we instead obtain from \eqref{eq:nk_gauss} that:
 \begin{align*}
	\E[&\norm{\widehat\mu_{\mathrm{NK}}(V) - \mu_{Y^1\vert V}(V)}^{2}_{\gH_{\gY}}]   \\
	                                                       & \lesssim \frac{M^{2}WL \log W \log n}{n}+ \frac{M}{(WL)^{b}}+\exp\left( - C M^{\frac{1}{2d_y}} \log M \right) \\
	                                                       &\quad + \min\left\{\gR^{2}_{\pi}(\widehat \pi), \gR^{2}_{\mu_{0}}(\widehat\mu_{0}) \right\}.
\end{align*}
Choosing $WL \asymp n^{\frac{1}{b+1}}$ and $M \asymp (\log n)^{2d_y}$ yields the final bound:
\[
	\E[\norm{\widehat\mu_{\mathrm{NK}}(V) - \mu_{Y^1\vert V}(V)}^{2}_{\gH_{\gY}}] \lesssim n^{-\frac{b}{b+1}} (\log n)^{\frac{4bd_y + 2d_y + 2b}{b+1}} + \min\left\{\gR^{2}_{\pi}(\widehat \pi), \gR^{2}_{\mu_{0}}(\widehat\mu_{0})\right\}.
\]

\hfill $\blacksquare$

\section{Miscellaneous Lemmas} \label{sec:misc_proofs}

\begin{lemma}[{Lemma~\ref{lem:nuisance_bdd}, restated}]\label{lem:nuisance_bdd_restate}
Assume that $\widehat\pi(x), \pi(x) \in (\epsilon, 1)$ for some $\epsilon \in (0,1)$, and that $\E_X \lVert \widehat\mu_{0}(X) \rVert^{2}_{\gH_{\gY}} < \infty$. Then we have the following bound:
\begin{equation*}
 \E\left[\norm{\E[\widehat{\xi}(Z)\vert V, \mathcal{D}_0] - \mu_{Y^1\vert V}(V)}^{2}_{\gH_{\gY}}\right] \lesssim \min\left\{\gR^{2}_{\pi}(\widehat\pi), \gR^{2}_{\mu_{0}}(\widehat\mu_{0})\right\}.
\end{equation*}
\end{lemma}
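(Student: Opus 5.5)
The plan is to compute the conditional bias of the estimated pseudo-outcome given $X$ exactly, show that it factorizes into a product of the two nuisance errors, and then pass from $X$ to $V$ by the tower property and Jensen's inequality. Throughout, $\mathcal{D}_0$ is held fixed, so that $\widehat\pi$ and $\widehat\mu_{0}$ are deterministic functions independent of the fresh draw $Z=(X,A,Y)$. This is where sample splitting in Algorithm~\ref{alg:meta_ccme} is used.

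\textbf{Step 1 (conditional bias given $X$).} Repeating the computation in the proof of Proposition~\ref{prop:identification}, but with $\widehat\pi,\widehat\mu_{0}$ in place of $\pi,\mu_{0}$, I would use $\E[A\mid X]=\pi(X)$, $\E[A\phi(Y)\mid X]=\pi(X)\mu_{0}(X)$ (conditional ignorability) and $\mu_{0}(X)=\E[\phi(Y^1)\mid X]$. This gives
\[
\E[\widehat\xi(Z)\mid X,\mathcal{D}_0]-\mu_{0}(X)=\frac{\pi(X)-\widehat\pi(X)}{\widehat\pi(X)}\bigl(\mu_{0}(X)-\widehat\mu_{0}(X)\bigr).
\]
The conditional expectations are Bochner integrals; they are well defined because $\phi$ is bounded by Assumption~\ref{ass:boundk}, $\widehat\pi\ge\epsilon$, and $\E\norm{\widehat\mu_{0}(X)}^2_{\gH_\gY}<\infty$. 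Writing $\Delta(X)$ for the right-hand side, we also have $\mu_{Y^1\vert V}(V)=\E[\mu_{0}(X)\mid V]$ because $\sigma(V)\subseteq\sigma(X)$. Hence $\E[\widehat\xi(Z)\mid V,\mathcal{D}_0]-\mu_{Y^1\vert V}(V)=\E[\Delta(X)\mid V,\mathcal{D}_0]$.

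\textbf{Step 2 (Jensen).} By conditional Jensen's inequality for the convex map $h\mapsto\norm{h}^2_{\gH_\gY}$ and then the tower property,
\[
\E\norm{\E[\Delta(X)\mid V]}^2_{\gH_\gY}\le\E\norm{\Delta(X)}^2_{\gH_\gY}\le\epsilon^{-2}\,\E\Bigl[(\pi(X)-\widehat\pi(X))^2\,\norm{\mu_{0}(X)-\widehat\mu_{0}(X)}^2_{\gH_\gY}\Bigr].
\]

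\textbf{Step 3 (the two branches of the minimum).} Since $\pi,\widehat\pi\in(\epsilon,1)$, we have $(\pi-\widehat\pi)^2\le1$. The last expectation is therefore at most $\gR^2_{\mu_{0}}(\widehat\mu_{0})$, which gives one branch. For the other branch I would bound $\norm{\mu_{0}(X)-\widehat\mu_{0}(X)}_{\gH_\gY}$ uniformly. Jensen gives $\norm{\mu_{0}(X)}_{\gH_\gY}\le B_k^{1/2}$, as in the proof of Proposition~\ref{prop:identification}. If $\widehat\mu_{0}$ is uniformly bounded, say $\sup_x\norm{\widehat\mu_{0}(x)}_{\gH_\gY}\le B$, the expectation is at most $(B_k^{1/2}+B)^2\gR^2_\pi(\widehat\pi)$. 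Combining the two bounds yields the minimum, with constants depending only on $\epsilon$, $B_k$ and $B$.

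\textbf{Main obstacle.} The hard part is the $\gR^2_\pi$ branch when only $\E\norm{\widehat\mu_{0}(X)}^2_{\gH_\gY}<\infty$ is assumed. A product of an $L^2$ error with a merely square-integrable weight cannot be bounded by the $L^2$ error alone. I would first try to strengthen the hypothesis to an essential-sup bound on $\widehat\mu_{0}$. This is harmless for the paper's uses, since it is verified explicitly for all three estimators: by $\lambda_0^{-1}$ for Ridge Regression, by $B_\gC B_\gF$ for Deep Feature, and by $\lambda_{\max}(\rmK_M)^{1/2}B_g$ for Neural-Kernel. Failing that, I would interpret the stated constant as allowed to depend on such a bound.
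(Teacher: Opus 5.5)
Your proposal follows the paper's proof essentially step for step: the same decomposition in which the $\phi(Y)-\mu_{0}(X)$ term vanishes after conditioning on $X$, leaving the product bias $\frac{\pi(X)-\widehat\pi(X)}{\widehat\pi(X)}\bigl(\mu_{0}(X)-\widehat\mu_{0}(X)\bigr)$, then Jensen, then the same two branches of the minimum (the paper stops at a conditional Cauchy--Schwarz given $V$, whereas you push Jensen all the way down to $X$). The obstacle you flag is genuine and is exactly where the paper's argument is loose: the paper bounds $\E[\norm{\widehat\mu_{0}(X)-\mu_{0}(X)}^2_{\gH_\gY}\mid V,\mathcal{D}_0]$ by a constant citing only $\E_X\norm{\widehat\mu_{0}(X)}^2_{\gH_\gY}<\infty$, and that does not suffice. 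To see this, take $V=X$ and a spike $\norm{\widehat\mu_{0}}^2_{\gH_\gY}=K$ on a set of probability $1/K$ that is the only place where $\widehat\pi\neq\pi$; this keeps that second moment equal to $1$ and $\gR^2_{\mu_{0}}(\widehat\mu_{0})$ of order one, yet makes the left side of order $K\,\gR^2_\pi(\widehat\pi)$. So a uniform bound as you propose is really needed (on the paper's route, an almost-sure bound on that conditional second moment would do), though note that the Ridge Regression bound $\lambda_0^{-1}$ is $n$-independent only if $\lambda_0$ is not sent to zero.
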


\begin{proof}
By the independence of $Z$ and $\gD_{0}$ and the law of iterated expectations,
\begin{align*}
	\E[\widehat{\xi}(Z)\vert V, \mathcal{D}_{0}] - \mu_{Y^1\vert V}(V) & = \E[\widehat{\xi}(Z)\vert V, \mathcal{D}_{0}] - \E[\xi(Z) \vert V]                                                           \\
	& =\E[\widehat{\xi}(Z) - \xi(Z) \vert V, \mathcal{D}_{0}]                                                                           \\
	& = \E\left[\left(\frac{A}{\widehat{\pi}(X)}- \frac{A}{\pi(X)}\right)(\phi(Y) - \mu_{0}(X)) \Big\vert V, \mathcal{D}_{0}\right]   \\
	& \quad - \E\left[\left(\frac{A}{\widehat{\pi}(X)}-1\right)(\widehat\mu_{0}(X) - \mu_{0}(X)) \Big\vert V, \mathcal{D}_{0}\right].
\end{align*}
We then take the expectation on both sides. By conditioning on $X$, the first term is zero due to $\E[A\phi(Y)\vert X] = \pi(X)\E[\phi(Y)\vert X, A=1] = \pi(X)\mu_{0}(X)$. Using $\E[A \vert X] = \pi(X)$, $\widehat{\pi}(X) > \epsilon$, Jensen's inequality and the Cauchy-Schwarz inequality, we obtain:
\begin{align*}
	\E&\left[\norm{\E[\widehat{\xi}(Z)\vert V, \mathcal{D}_0] - \mu_{Y^1\vert V}(V)}^{2}_{\gH_{\gY}}\right] \\
	& \lesssim \E\left[\E\left[\abs{\widehat\pi(X) - \pi(X)}\cdot \left\lVert \widehat\mu_{0}(X) - \mu_{0}(X)\right\rVert_{\gH_\gY}\Big\vert V, \gD_{0}\right]^{2}\right]               \\
	           & \le \E\left[\E\left[ \abs{\widehat\pi(X) - \pi(X)}^{2}\Big\vert V, \gD_{0}\right] \E\left[\norm{\widehat\mu_{0}(X) - \mu_{0}(X)}^{2}_{\gH_\gY}\Big\vert V, \gD_{0}\right]\right].
\end{align*}

Since both $\pi(X) < 1$ and $\widehat\pi(X) < 1$, we obtain:
\begin{equation}
	\label{eq:pibdd1}\E\left[\norm{\E[\widehat{\xi}(Z)\vert V, \mathcal{D}_0] - \mu_{Y^1\vert V}(V)}^{2}_{\gH_{\gY}}\right] \lesssim \E \left[ \E\left[\norm{\widehat\mu_{0}(X) - \mu_{0}(X)}^{2}_{\gH_\gY}\Big\vert V, \gD_{0}\right]\right] = \gR^{2}_{\mu_{0}}(\widehat\mu_{0}).
\end{equation}

Alternatively, we can instead bound the CME. Note that
\begin{align*}
	\lVert \mu_{0}(X) \rVert^{2}_{\gH_{\gY}} & = \left\lan \mu_{0}(X), \mu_{0}(X) \right\ran_{\gH_\gY} \\
	&= \left\lan \E[\phi(Y) \mid X],\E[\phi(Y') \mid X]\right\ran_{\gH_\gY} \\
	                                                & = \E[\lan \phi(Y), \phi(Y') \ran_{\gH_\gY} \mid X] = \E[k_\gY(Y,Y') \mid X] \le B_k,
\end{align*}
and by assumption, $\E_X \lVert \widehat\mu_{0}(X) \rVert^2_{\gH_{\gY}}$ is also bounded. Therefore, we have:
\begin{equation}
	\label{eq:mubdd1}\E\left[\norm{\E[\widehat{\xi}(Z)\vert V, \mathcal{D}_0] - \mu_{Y^1\vert V}(V)}^{2}_{\gH_{\gY}}\right] \lesssim \E \left[ \E\left[ \abs{\widehat\pi(X) - \pi(X)}^{2}\Big\vert V, \gD_{0}\right] \right] = \gR^{2}_{\pi}(\widehat\pi).
\end{equation}
Combining \eqref{eq:pibdd1} and \eqref{eq:mubdd1} yields:
\[
	\E\left[\norm{\E[\widehat{\xi}(Z)\vert V, \mathcal{D}_0] - \mu_{Y^1\vert V}(V)}^{2}_{\gH_{\gY}}\right]\lesssim \min\left\{\gR^{2}_{\pi}(\widehat\pi), \gR^{2}_{\mu_{0}}(\widehat\mu_{0})\right\}.
\]
\end{proof}

\begin{lemma}[Rademacher Complexity Bound for Deep Feature Class]\label{lem:DFRad}
Let the class of neural networks $\gF^M$ and the linear operators $\gC$ be defined as in \eqref{eq:nn_class} and \eqref{eq:C_class}, respectively. Consider the loss function $\ell(f(v), z) = \norm{f(v) - \widehat\xi(z)}^2_{\gH_\gY}$, where $f(v) = C\psi(v)$ for some $C \in \gC$ and $\psi \in \gF^M$. Given a sample $v_{1:n}$, we define a localized function class:
\begin{equation}\label{eq:CFdefn_misc}
 \gC\gF^M_\star(\delta, v_{1:n}) \coloneqq \left\{ f = C\psi - C^\star\psi^\star \Bigm| \frac{1}{n}\sum_{i=1}^n \norm{f(v_i)}_{\gH_\gY}^2 \le \delta^2 \right\}, 
 \end{equation}
where $C^\star\psi^\star$ is the population risk minimizer. We also define the class composed with the squared loss:
\[ \ell \circ  \gC\gF^M_{\star}(\delta, v_{1:n})  \coloneqq \left\{ (v,z) \mapsto \ell(C\psi(v), z) - \ell(C^\star\psi^\star(v), z)  \mid C\psi - C^\star\psi^\star \in   \gC\gF^M_{\star}(\delta, v_{1:n})  \right\}. \] 
Then, there exists a constant $\mathfrak{L} > 0$ such that the following bound between two local Rademacher complexities holds:
\[  \fR_n( \ell \circ  \gC\gF^M_{\star}(\delta, v_{1:n}), v_{1:n}) \le \sqrt{2}\mathfrak{L}M \fR_n(\gG_\star(\delta, v_{1:n}), v_{1:n}), \]
where 
\[\gG \coloneqq \left\{ g: \gV \to \R \mid g \in \gF, \sup_{v\in\gV}\lvert g(v) \rvert \le 2 B_\gC B_\gF \right\},\]
$\gF$ is the class of scalar-valued neural networks, and $\gG_\star(\delta, v_{1:n})$ is the localized version of $\gG$ defined analogously to \eqref{eq:CFdefn_misc}.
\end{lemma}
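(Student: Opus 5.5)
The plan is to reduce the Hilbert-valued loss class to $M$ scalar neural-network coordinate classes in two moves. First comes a Lipschitz vector-contraction step, which removes the loss. Then comes a coordinatization step, which turns the $\gH_\gY$-valued centered predictor $C\psi - C^\star\psi^\star$ into at most $M$ real-valued functions. Each of these functions should lie in the localized class $\gG_\star(\delta, v_{1:n})$.

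\textbf{Step 1 (contraction).} For fixed $z_i$, the map $h \mapsto \ell(h, z_i) - \ell(C^\star\psi^\star(v_i), z_i)$ on $\gH_\gY$ is $\mathfrak{L}$-Lipschitz on the relevant ball. This is Lemma~\ref{lem:DFLip}, using the uniform bound \eqref{eq:centered_bound} and $\|\widehat\xi\|_{\gH_\gY} \le \sqrt{B_{\widehat\xi}}$. I will apply the vector-contraction inequality of \cite{Maurer2016} in its separable Hilbert-space form, fixing an orthonormal basis $\{u_k\}$ of $\gH_\gY$. This gives
\[
\fR_n(\ell \circ \gC\gF^M_\star(\delta, v_{1:n}), v_{1:n}) \le \sqrt{2}\,\mathfrak{L}\, \E_\epsilon \sup_{f} \frac1n \sum_{i=1}^n \sum_k \epsilon_{ik} \langle f(v_i), u_k\rangle_{\gH_\gY},
\]
where $f = C\psi - C^\star\psi^\star$ ranges over the localized class. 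The constant $\sqrt 2$ is exactly Maurer's.

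\textbf{Step 2 (coordinatization).} The goal is to choose the basis so that only $M$ coordinates of $f$ are nonzero. Each of those coordinates should be a function $g_k - g_k^\star$ with $g_k \in \gG$. Write $C\psi(v) = \sum_{j=1}^M \psi_j(v)\, C e_j$. If the range of every admissible $C$, together with that of $C^\star$, lies in a common $M$-dimensional subspace $\mathcal{S}$, take $u_1,\dots,u_M$ to be an orthonormal basis of $\mathcal{S}$. Then $\langle C\psi(v), u_k\rangle = \sum_j \langle Ce_j, u_k\rangle \psi_j(v)$ is a linear combination of the outputs of $\psi$. Absorbing the matrix $(\langle Ce_j, u_k\rangle)$ into the last weight layer $\rmW_L$ shows that this coordinate is a scalar ReLU network of the same architecture. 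Its sup norm is at most $\|C\|_{\text{op}} B_\gF \le B_\gC B_\gF \le 2B_\gC B_\gF$, so it lies in $\gG$. Define $g^\star_k(v) = \langle C^\star\psi^\star(v), u_k\rangle$, which is bounded by $B_\gC B_\gF$. Since $\sum_k |\langle f(v_i), u_k\rangle|^2 = \|f(v_i)\|^2_{\gH_\gY}$, the localization $\frac1n\sum_i \|f(v_i)\|^2 \le \delta^2$ gives $\frac1n \sum_i |g_k(v_i) - g_k^\star(v_i)|^2 \le \delta^2$ for each $k$. So every coordinate lies in $\gG_\star(\delta, v_{1:n})$.

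Finally, I split the supremum of the sum over $k$ into a sum of suprema. Each of the $M$ terms is at most $\fR_n(\gG_\star(\delta, v_{1:n}), v_{1:n})$, which yields the factor $M$ and hence the claim.

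\textbf{Main obstacle.} The delicate point is the existence of a common $M$-dimensional subspace $\mathcal{S}$. For a general $C \in \gC$ the range varies with $C$, and the span of the ranges of $C$ and $C^\star$ can have dimension up to $2M$. Over all of $\gC$ it can even be infinite-dimensional, and then the coordinates are not uniformly a fixed set of $M$ networks.

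The first fix I would try is to restrict $\gC$, without loss for the analysis, to operators with range in the fixed subspace $\mathrm{span}\{e'_1,\dots,e'_M\}$. Term~B already uses only such $C_M$, and the best-in-class $\mu^\star_{\widehat\xi}$ can be taken there after projecting with $\Pi_M$. This makes $\mathcal{S}$ independent of $C$ and gives exactly $M$ coordinates.

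If that restriction is not acceptable, the fallback is to use $\mathcal{S} = \mathrm{span}\{e'_1,\dots,e'_M\} \oplus \mathrm{range}(C^\star)$ of dimension at most $2M$. This only changes the constant, from $M$ to $2M$, which is harmless for the $\lesssim$ used in Lemma~\ref{lem:erb}.
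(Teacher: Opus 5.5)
Step~1 and the mechanics of Step~2 are fine: absorbing the coefficient matrix into $\rmW_L$ and transferring the localization coordinate by coordinate both work. But the argument has a genuine gap exactly at the obstacle you flag, and neither fix closes it.

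\textbf{Why restricting $\gC$ is not without loss.} The lemma concerns $\gC$ as in \eqref{eq:C_class}, i.e.\ all operators with $\norm{C}_{\text{HS}}\le B_\gC$, and this matters downstream. Term~A applies Lemma~\ref{lem:erb} to the actual estimator. Its operator $\widehat C_{\psi}=\boldsymbol{\widehat\Xi}\rmPsi_\theta(\rmPsi_\theta^{\top}\rmPsi_\theta+n\lambda_1\mathbf{I}_M)^{-1}$ maps into the data-dependent span of the pseudo-outcomes $\widehat\xi(Z_{1i})$. These contain $\phi(Y_{1i})$, which is not band-limited, so they do not lie in $\mathrm{span}\{e'_1,\dots,e'_M\}$. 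Likewise, the population minimizer $C^\star$ over $\gC$ has no reason to map there, and replacing it by its $\Pi_M$-projection generally increases its risk. Term~B is an infimum and needs only one good element, whereas Term~A needs control of the whole class containing the empirical minimizer. Restricting $\gC$ therefore changes both the estimator and the comparator.

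\textbf{Why the fallback does not help.} Your $\mathcal S=\mathrm{span}\{e'_j\}_{j\le M}\oplus\mathrm{range}(C^\star)$ only repairs the position of $C^\star$; it still requires every admissible $C$ to map into $\mathcal S$. For the class as stated, the fixed-basis route is not merely lossy but vacuous. Take $\psi=\psi^\star$ and $C$ a small rank-one perturbation of $C^\star$ (shrinking $C^\star$ slightly if it lies on the boundary of $\gC$). Then the values $f(v_i)$ sweep a ball in an infinite-dimensional subspace while still obeying the localization constraint. Hence $\E_\epsilon\sup_f\sum_{i,k}\epsilon_{ik}\inprod{f(v_i)}{u_k}_{\gH_\gY}=\infty$ for every fixed orthonormal basis $\{u_k\}$, and the right-hand side of your Step~1 is infinite.

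\textbf{What the paper does instead.} The missing idea is that the coordinates need not be common to the whole class: the paper lets the orthonormal system depend on $C$. It fixes $e^\star_1,\dots,e^\star_M$ whose span contains $\mathrm{range}(C^\star)$. For each $C$ it appends $M$ vectors $e^C_{M+1},\dots,e^C_{2M}$ spanning the part of $\mathrm{range}(C)$ orthogonal to these. Each $f=C\psi-C^\star\psi^\star$ then has at most $2M$ nonzero coordinates $T(f)_k=g_k-g^\star_k$. Here $g_k=\inprod{C\psi}{e^C_k}_{\gH_\gY}$ is a scalar network of the same architecture (your last-layer absorption applies verbatim). The centering does not depend on $C$: $g^\star_k=\inprod{C^\star\psi^\star}{e^\star_k}_{\gH_\gY}$ for $k\le M$ and $g^\star_k=0$ for $k>M$.

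\textbf{Caution if you adopt this device.} Maurer's theorem presupposes a single basis, so it can no longer be quoted. The paper reruns his induction; since $f_1$ and $f_2$ now carry different bases, it passes through $\norm{f_1-f_2}\le\norm{f_1}+\norm{f_2}$ and applies Khintchine to each term separately. Khintchine only gives $\norm{x}\le\sqrt2\,\E\bigl|\sum_k\epsilon_kx_k\bigr|$. Maurer removes that absolute value using the $(f_1,f_2)$-symmetry of the expansion of $f_1-f_2$, and the paper's display simply drops it. In fact, with $C$-dependent coordinates no argument using only the Lipschitz constant of $\ell$ can succeed. A Lipschitz (even strongly convex) loss can react differently to $2^n$ mutually orthogonal choices of $\mathrm{range}(C)$, all of which yield identical coordinates $T(f)_k$. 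A complete proof must therefore exploit the specific form $\ell(h,z)=\norm{h}^2_{\gH_\gY}-2\inprod{h}{\widehat\xi(z)}_{\gH_\gY}+\norm{\widehat\xi(z)}^2_{\gH_\gY}$.
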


\begin{proof}
From Lemma~\ref{lem:DFLip} the loss $\ell: \gH_\gY \times \gZ \to \R$ defined by $\ell(h, z) = \norm{h - \widehat\xi(z)}^2_{\gH_\gY}$ is Lipschitz with respect to its first argument, with Lipschitz constant $\mathfrak{L} = 4B_\gC B_\gF + 2\sqrt{B_{\widehat{\xi}}}$.

Let $\{u_1, \ldots, u_M\}$ be the canonical basis in $\R^{M}$. Since $\dim(\text{Im}(C^\star)) \le M$, there exists a $v$-dependent orthonormal basis $\{e^\star_k\}_{k=1}^\infty$, $e^\star_k:\gV \to \gH_\gY$ such that $\inprod{C^\star(u_j)}{e^\star_k(v)}_{\gH_\gY} = 0$ for all $k>M$, all $j \in \{1,\ldots,M\}$, and all $v \in \gV$. Then, for each $C\in \gC$, we choose a  $\{e^C_k\}_{k=1}^\infty$, $e^C_k : \gV \to \gH_\gY$ such that $e^C_k = e^\star_{k}$ for $k=1,\ldots, M$ and $\inprod{C(u_j)}{e^C_k(v)}_{\gH_\gY} = 0$ for all $k>2M$, all $j \in \{1,\ldots,M\}$, and all $v \in \gV$. 

We then introduce a map $T:\gC\gF^M \to (\gV \to \R^{2M})$ as follows: for $f = C\psi - C^\star \psi^\star \in \gC\gF^M$,
\begin{align*} 
 T(C\psi - C^\star \psi^\star)_k(v) &\coloneqq \inprod{C\psi(v) - C^\star \psi^\star(v)}{e^C_k(v)}_{\gH_\gY} \\
 &= \sum_{j=1}^M\left\{\inprod{C(u_j)}{e^C_k(v)}_{\gH_\gY} \psi_j(v) - \inprod{C^\star(u_j)}{e^C_k(v)}_{\gH_\gY} \psi^\star_j(v) \right\}, \qquad k=1,\ldots, 2M.
 \end{align*}

We will follow the proof of the vector-contraction inequality in \cite{Maurer2016}. For brevity, we write $\ell_i(f) \coloneqq \ell(f(v_i), z_i)$. We will prove by induction on $m \in \{0, \dots, n\}$ the following bound:
\[
\fR_n(\ell \circ \gC\gF^M(\delta,v_{1:n})) \le  \E\left[\sup_{f \in \gC\gF^M(\delta,v_{1:n})}  \left(\sqrt{2}\mathfrak{L}\sum_{k=1}^{2M}\sum_{i=1}^m \epsilon_{ik} T(f)_k(v_i) + \sum_{i=m+1}^n \epsilon_i \ell_i(f) \right) \right],
\]
where $\epsilon_{ik}$'s are independent Rademacher variables. The base case $m=0$ is a trivial equality. The lemma follows by setting $m=n$ and showing that $T(f)_k \in \gG(\delta,v_{1:n})$ for all $k$.

Assume the statement holds for $m \le n-1$. Let $\E_{m+1}$ be the conditional expectation on all $\epsilon_i$'s except $\epsilon_{m+1}$, i.e., $\E_{m+1}[\ \cdot\ ] = \E[\ \cdot \ \vert \epsilon_i, i\not= m+1]$. For any $f \in \gC\gF^M(\delta,v_{1:n})$, we define
\[ R(f) \coloneqq \sqrt{2}\mathfrak{L}\sum_{k=1}^{2M}\sum_{i=1}^{m} \epsilon_{ik} T(f)_k(v_i) + \sum_{i=m+2}^n \epsilon_i \ell_i(f), \]
that is, the summation without the $(m+1)$-th term. Given any $\delta > 0$, there exist $f_1,f_2\in \gC\gF^M(\delta,v_{1:n})$, $f_1=C_1\psi_1-C^\star\psi^\star$, $f_2=C_2\psi_2-C^\star\psi^\star$ such that:
\begin{align}
&\E  \left[\sup_{f \in \gC\gF^M(\delta,v_{1:n})}\left(\sqrt{2}\mathfrak{L}\sum_{k=1}^{2M}\sum_{i=1}^m \epsilon_{ik} T(f)_k(v_i) + \sum_{i=m+1}^n \epsilon_i \ell_i(f) \right)\right] - \delta  \notag \\
&{} = \E \ \E_{m+1} \left[\sup_{f \in \gC\gF^M(\delta,v_{1:n})}\left\{  \epsilon_{m+1} \ell_{m+1}(f) + R(f) \right\} \right]  -\delta \notag \\
&{} \le \E \ \E_{m+1} \left[ \frac1{2}  \left\{ \ell_{m+1}(f_1) - \ell_{m+1}(f_2) + R(f_1) + R(f_2) \right\}\right]  \notag \\
&{} = \E \ \E_{m+1} \left[ \frac1{2}  \left\{ \ell(C_1\psi_1(v_{m+1}), z_{m+1}) - \ell(C_2\psi_2(v_{m+1}), z_{m+1})+ R(f_1) + R(f_2) \right\}\right]  \notag \\
&{} \le \E \ \E_{m+1} \left[\frac{1}{2}  \left\{ \mathfrak{L}\norm{f_1(v_{m+1}) - f_2(v_{m+1})}_{\gH_\gY}   + R(f_1) + R(f_2) \right\}  \right] \notag \\
&{} \le \E \ \E_{m+1} \left[\frac{1}{2}  \left\{ \mathfrak{L}\norm{f_1(v_{m+1})}_{\gH_\gY} + \mathfrak{L}\norm{f_2(v_{m+1})}_{\gH_\gY}   + R(f_1) + R(f_2) \right\}  \right] . \label{eq:supL}
\end{align}
We express the $\gH_\gY$ norms of $f_1$ and $f_2$ in terms of the bases $\{e^{C_1}_k\}_{k=1}^\infty$ and $\{e^{C_2}_k\}_{k=1}^\infty$, and then apply Khintchine's inequality (e.g., Proposition 6 of \cite{Maurer2016}) to obtain the following inequality with Rademacher variables $\epsilon'_{1:2M}$ and $\epsilon''_{1:2M}$:
\begin{align*}
&\norm{f_1(v_{m+1})}_{\gH_\gY} + \norm{f_2(v_{m+1})}_{\gH_\gY} \\
 &\le \sqrt{2} \E_{\epsilon'_{1:2M}} \left[ \sum_{k=1}^{2M} \epsilon'_{k} \inprod{f_1(v_{m+1})}{e^{C_1}_k(v_{m+1})}_{\gH_\gY}\right] +    \sqrt{2} \E_{\epsilon''_{1:2M}} \left[ \sum_{k=1}^{2M} \epsilon''_{k} \inprod{f_2(v_{m+1})}{e^{C_2}_k(v_{m+1})}_{\gH_\gY}\right] \\
 &= \sqrt{2} \E_{\epsilon'_{1:2M}} \left[  \sum_{k=1}^{2M}\sum_{j=1}^{M}\epsilon'_{k} \inprod{C_1(u_j)}{e^{C_1}_k(v_{m+1})}_{\gH_\gY}\psi_{1j}(v_{m+1})\right] \\
 &\quad +    \sqrt{2} \E_{\epsilon''_{1:2M}} \left[ \sum_{k=1}^{2M}\sum_{j=1}^{M} \epsilon''_{k} \inprod{C_2(u_j)}{e^{C_2}_k(v_{m+1})}_{\gH_\gY}\psi_{2j}(v_{m+1})\right] \\
 &= \sqrt{2} \E_{\epsilon'_{1:2M}} \left[ \sum_{k=1}^{2M} \epsilon'_{k} T(f_1)_k(v_{m+1})\right] +    \sqrt{2} \E_{\epsilon''_{1:2M}} \left[ \sum_{k=1}^{2M} \epsilon''_{k} T(f_2)_k(v_{m+1})\right] .
\end{align*}
Substituting this bound back in~\eqref{eq:supL} and taking the supremum over $f_1,f_2 \in \gC\gF^M(\delta,v_{1:n})$ yield the following upper bound:
\begin{align}
 \eqref{eq:supL} &\le  \E \ \E_{m+1} \ \E_{\epsilon'_{1:2M}} \left[\sup_{f \in \gC\gF^M(\delta,v_{1:n})} \left\{ \sqrt{2}\mathfrak{L}\sum_{k=1}^{2M} \epsilon'_{k} T(f)_k(v_{m+1})  + R(f)  \right\} \right] \notag \\
 &= \E  \left[\sup_{f \in \gC\gF^M(\delta,v_{1:n})}\left(\sqrt{2}\mathfrak{L}\sum_{k=1}^{2M}\sum_{i=1}^{m+1} \epsilon'_{ik} T(f)_k(v_i) + \sum_{i=m+2}^n \epsilon_i \ell_i(f) \right)\right], \label{eq:f1f2}
\end{align}
which concludes the induction. 

We now verify that each $T(f)_k$ belongs to $\gG(\delta, v_{1:n})$. First, for each $v \in \gV$, it follows from the Cauchy-Schwarz inequality and the definition of the operator norm that
\begin{align*} 
T(f)_k(v)^2  &\le 2 \sum_{k=1}^{2M} \lvert \inprod{C\psi(v)}{e^C_k(v)}_{\gH_\gY}  \rvert^2 +2 \sum_{k=1}^{2M} \lvert \inprod{C^\star \psi^\star(v)}{e^C_k(v)}_{\gH_\gY}  \rvert^2  \\
&= 2 \norm{C\psi(v)}^2_{\gH_\gY}  +2 \norm{C^\star\psi^\star(v)}^2_{\gH_\gY}  \\
&\le 2 \norm{C}^2_{\text{op}}\norm{\psi(v)}^2 + 2\norm{C^\star}^2_{\text{op}}\norm{\psi^\star(v)}^2 \\
&\le 4B^2_\gC B^2_\gF.   
\end{align*}
To check the locality condition, we recall the empirical constraint on $f$ in the definition of $\gC\gF^M(\delta,v_{1:n})$ in \eqref{eq:CFdefn_misc}:
\begin{align*}
\frac1{n} \sum_{i=1}^n T(f)_k(v_i)^2 &\le  \frac1{n}\sum_{i=1}^n \sum_{k=1}^{2M} \abs{\inprod{C \psi(v_i) - C^\star \psi^\star(v_i)}{e^C_k(v_i)}_{\gH_\gY}}^2 \\
 &=  \frac1{n}\sum_{i=1}^n\norm{C \psi(v_i) - C^\star \psi^\star(v_i)}^2_{\gH_\gY}  \le \delta^2.
\end{align*}
Hence, $T(f)_k$ belongs to $\gG_\star(\delta, v_{1:n})$ for all $k$. 
\end{proof}

\begin{lemma}[Sobolev Regularity of Projection Coefficients] \label{lem:sobolevcoef}
Under Assumptions \ref{ass:kernel} and \ref{ass:smooth_density}, consider the CCME $\mu_{Y^1\vert V}(v) = \E[\phi(Y^1) \vert V=v]$. Let $\Pi_{M}: \gH_\gY \to \gH_\gY$ be the orthogonal projection operator onto the subspace $\gH_{M}= \text{span}\{\phi(\widetilde{y}_{j})\}_{j=1}^{M} \subset \gH_\gY$. If we represent the projection as:
\[
	\Pi_{M}\mu_{Y^1\vert V}(v) = \sum_{j=1}^{M}c_{j}(v)\phi(\widetilde{y}_{j}),
\]
then the coefficient functions satisfy $c_{j} \in W^{r,2}(\gV)$ for all $j = 1,\ldots,M$.
\end{lemma}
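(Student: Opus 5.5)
The coefficients are obtained by solving a fixed linear system, so they are linear combinations of the functions $v \mapsto \mu_{Y^1\vert V}(v)(\widetilde{y}_j)$. The plan is therefore to prove that each of these functions lies in $W^{r,2}(\gV)$.

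\textbf{Step 1: explicit coefficients.} Since $k_\gY$ is strictly positive definite (its Fourier transform is positive), Proposition (Independence of Feature Maps) shows that $\rmK_M$ is invertible. The normal equations of the orthogonal projection read
\[
\inprod{\Pi_M \mu_{Y^1\vert V}(v)}{\phi(\widetilde{y}_l)}_{\gH_\gY} = \inprod{\mu_{Y^1\vert V}(v)}{\phi(\widetilde{y}_l)}_{\gH_\gY}.
\]
By the reproducing property this gives $\rmK_M \mathbf{c}(v) = \mathbf{m}(v)$, where $m_l(v) = \mu_{Y^1\vert V}(v)(\widetilde{y}_l)$. Hence
\[
c_j(v) = \sum_{l=1}^M (\rmK_M^{-1})_{jl}\, m_l(v).
\]
Because $\rmK_M^{-1}$ is a constant matrix and $W^{r,2}(\gV)$ is a vector space, it suffices to show $m_l \in W^{r,2}(\gV)$ for each $l$.

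\textbf{Step 2: regularity of $m_l$.} As in the proof of Term D, the embedding is a convolution:
\[
m_l(v) = \int_\gY k_\gY(\widetilde{y}_l, y)\, \fp^1(y\vert v)\, \text{d}y .
\]
Fix a multi-index $\alpha$ with $|\alpha| \le r$. The plan is to move the weak derivative inside the integral, giving
\[
D^\alpha m_l(v) = \int_\gY k_\gY(\widetilde{y}_l,y)\, D^\alpha_v \fp^1(y\vert v)\, \text{d}y .
\]
To justify this, test against $\varphi \in C_c^\infty(\gV)$ and swap the $v$- and $y$-integrals by Fubini. Fubini applies because $\int_\gY \norm{\fp^1(y\vert\cdot)}_{W^{r,2}}\text{d}y < \infty$ by Assumption~\ref{ass:smooth_density}; Cauchy–Schwarz and the boundedness of $\gY$ handle the $L^1$ versus $L^2$ issue. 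Then Minkowski's integral inequality and $|k_\gY| \le B_k$ (Assumption~\ref{ass:boundk}) yield
\[
\norm{D^\alpha m_l}_{L^2(\gV)} \le B_k \int_\gY \norm{D^\alpha_v \fp^1(y\vert\cdot)}_{L^2(\gV)}\,\text{d}y .
\]
By Cauchy–Schwarz on the bounded set $\gY$ this is at most $B_k |\gY|^{1/2} \bigl(\int_\gY \norm{\fp^1(y\vert\cdot)}^2_{W^{r,2}}\,\text{d}y\bigr)^{1/2} < \infty$. Summing over $|\alpha| \le r$ gives $m_l \in W^{r,2}(\gV)$, and hence $c_j \in W^{r,2}(\gV)$ by Step 1.

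\textbf{Main obstacle and other bases.} The delicate step is the interchange of weak derivative and integral. It needs joint measurability of $(y,v) \mapsto D^\alpha_v \fp^1(y\vert v)$, which I would take from the integrability hypothesis in Assumption~\ref{ass:smooth_density} by viewing $\fp^1$ as a Bochner-integrable $W^{r,2}$-valued map of $y$. I would also note, as a caveat, that the $W^{r,2}$-norm of $c_j$ may grow with $M$ through $\norm{\rmK_M^{-1}}$; under Lemma~\ref{lem:Llamb} this norm is controlled, though the lemma only asserts membership. The same argument covers the Deep Feature basis $\{e'_j\}$. There the basis is orthonormal, so $c_j(v) = \inprod{\mu_{Y^1\vert V}(v)}{e'_j}_{\gH_\gY}$, and the $k_\gY(\widetilde{y}_l,\cdot)$ in Step 2 is replaced by the bounded function $y \mapsto \inprod{\phi(y)}{e'_j}_{\gH_\gY} = e'_j(y)$.
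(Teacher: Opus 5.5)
Your proposal is correct and follows the paper's proof step for step. In both, the normal equations $\rmK_M\mathbf{c}(v)=\mathbf{b}(v)$ reduce the claim to showing that $b_i(v)=\int_\gY k_\gY(y,\widetilde{y}_i)\,\fp^1(y\vert v)\,\text{d}y$ lies in $W^{r,2}(\gV)$, which follows by differentiating under the integral and using the boundedness of $k_\gY$ with Assumption~\ref{ass:smooth_density}. Your test-function/Fubini justification of the interchange and your explicit constant are more careful than the paper's one-line version, and your remark that the argument carries over to the orthonormal basis $\{e'_j\}$ covers the Deep Feature use of the lemma, which the lemma as stated does not literally address.
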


\begin{proof}
By the definition of the orthogonal projection $\Pi_{M}$, the residual $\mu_{Y^1\vert V}(v) - \Pi_{M}\mu_{Y^1\vert V}(v)$ is orthogonal to the subspace $\gH_M$. Consequently, for every basis vector $\phi(\widetilde{y}_i)$ with $i \in \{1, \dots, M\}$:
\[
	\left\langle \mu_{Y^1\vert V}(v) - \sum_{j=1}^{M}c_{j}(v)\phi(\widetilde{y}_{j}), \phi(\widetilde{y}_{i}) \right\rangle_{\gH_\gY} = 0.
\]
Using the linearity of the inner product, this leads to the linear system:
\[
	\sum_{j=1}^{M}c_{j}(v) \langle \phi(\widetilde{y}_{j}), \phi(\widetilde{y}_{i}) \rangle_{\gH_\gY} = \left\langle \mu_{Y^1\vert V}(v), \phi(\widetilde{y}_{i}) \right\rangle_{\gH_\gY}.
\]
Let $\rmK_{M}$ be the Gram matrix with entries $(\rmK_{M})_{ij}= k_\gY(\widetilde{y}_{i}, \widetilde{y}_{j})$, and let $\mathbf{b}(v) \in \R^M$ be the vector with entries $b_{i}(v) = \langle \mu_{Y^1\vert V}(v), \phi(\widetilde{y}_{i}) \rangle_{\gH_\gY}$. The system of equations can be written as $\rmK_{M}\mathbf{c}(v) = \mathbf{b}(v)$. Since $\widetilde{y}_1,\ldots,\widetilde{y}_M$ are distinct, $\rmK_{M}$ is invertible, yielding $\mathbf{c}(v) = \rmK_{M}^{-1}\mathbf{b}(v)$. Since $\rmK_{M}^{-1}$ is a constant matrix, to prove $\mathbf{c}(v) \in W^{r,2}(\gV; \R^M)$, it suffices to show that each scalar function $b_{i}(v)$ belongs to $W^{r,2}(\gV)$.

Using the definition of the CCME as a Bochner integral $\mu_{Y^1\vert V}(v) = \int_{\gY}\phi(y) \fp^1(y\vert v) dy$, we can express $b_i(v)$ explicitly as:
\begin{align*}
	b_{i}(v) = \left\langle \int_{\gY}\phi(y) \fp^1(y\vert v) \, \text{d}y, \phi(\widetilde{y}_{i}) \right\rangle_{\gH_\gY} &= \int_{\gY} \langle \phi(y), \phi(\widetilde{y}_{i}) \rangle_{\gH_\gY} \fp^1(y\vert v) \, \text{d}y \\
	&= \int_{\gY} k_\gY(y, \widetilde{y}_{i}) \fp^1(y\vert v) \, \text{d}y.
\end{align*}
For a multi-index $\alpha$ with $|\alpha| \le r$, we apply the derivative operator $D^{\alpha}_{v}$. Under Assumption \ref{ass:smooth_density}, we can differentiate under the integral sign:
\[
	D^{\alpha}_{v}b_{i}(v) = \int_{\gY} k_\gY(y, \widetilde{y}_{i}) (D^{\alpha}_{v}\fp^1(y\vert v)) \, \text{d}y.
\]
By Assumption \ref{ass:kernel}, $|k_\gY(y, \widetilde{y}_{i})| \le  B_k$. Thus,
\[
	 \norm{b_i}^2_{W^{r,2}(\gV)} = \sum_{|\alpha| \le r} \int_\gV \left| D^\alpha_v b_i(v) \right|^2 \, \text{d}v  \le B_k \int_\gY \norm{\fp^1(y\vert \cdot)}^2_{W^{r,2}(\gV)}  \, \text{d}y < \infty,
\]
where the last inequality follows from Assumption~\ref{ass:smooth_density}.
\end{proof}

\begin{lemma}[Projection Error Bound]\label{lem:projbound} 
    Let $\gY \subset \R^{d_y}$ be a bounded domain satisfying an interior cone condition. Let $\{\widetilde{y}_j\}_{j=1}^M \subset \gY$ be a set of grid points with fill distance $h \coloneqq \sup_{y \in \gY} \min_{j} \|y - \widetilde{y}_j\| \asymp M^{-1/d_y}$. Let $\Pi_M: \gH_\gY \to \gH_\gY$ denote the orthogonal projection from $\gH_\gY$ onto the subspace spanned by $\{k_\gY(\cdot, \widetilde{y}_j)\}_{j=1}^M = \{\phi(\widetilde{y}_j)\}_{j=1}^M$.
    \begin{enumerate}
        \item \textbf{(Sobolev Kernel)} Assume $k_\gY(y,y')=\varphi(y-y')$ for all $y,y'\in \gY$, where $\widehat{\varphi}(\omega) \lesssim (1 + \|\omega\|^2)^{-\tau}$ for some $\tau > d_y/2$. Suppose that $\fp^1(\cdot\vert V) \in W^{s,q}(\gY)$ almost surely for some $s \in \mathbb{N}_0$ and $q \in [1, \infty]$. Let $\widetilde{s} \coloneqq s - d_y(1/q - 1/2)_+$. Then, the expected approximation error satisfies:
        \begin{equation}\label{eq:proj_bdd_1}
            \E  \left[ \| \mu_{Y^1\vert V}(V) - \Pi_M \mu_{Y^1\vert V}(V) \|_{\gH_\gY}^2 \right] \lesssim M^{-\frac{2(\tau + \widetilde{s})}{d_y}} \E  \left[ \| \fp^1(\cdot\vert V) \|_{W^{s,q}(\gY)}^2 \right].
        \end{equation}
        \item \textbf{(Gaussian Kernel)} Assume that $\gH_\gY$ is the RKHS generated by a Gaussian kernel. Suppose that $\fp^1(\cdot\vert V) \in L^2(\gY)$ almost surely. Then, there exists a constant $B_0 > 0$ such that:
        \begin{equation}\label{eq:proj_bdd_2}
            \E  \left[ \| \mu_{Y^1\vert V}(V) - \Pi_M \mu_{Y^1\vert V}(V) \|_{\gH_\gY}^2 \right] \lesssim \exp\left( - B_0 M^{\frac{1}{2d_y}} \log M \right) \E  \left[ \| \fp^1(\cdot\vert V) \|_{L^2(\gY)}^2 \right].
        \end{equation}
    \end{enumerate}
\end{lemma}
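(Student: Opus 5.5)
The plan is to reduce the projection error to a kernel interpolation error and then apply standard scattered-data error estimates. The key observation is that the orthogonal projection $\Pi_M$ onto $\mathrm{span}\{\phi(\widetilde y_j)\}_{j=1}^M$ coincides with the kernel interpolant $I_M$ at the nodes. Indeed, $\langle \mu - \Pi_M\mu, \phi(\widetilde y_j)\rangle_{\gH_\gY} = 0$ means $(\Pi_M\mu)(\widetilde y_j) = \mu(\widetilde y_j)$ for all $j$. So for each fixed $v$ we must bound $\|\mu_v - I_M\mu_v\|_{\gH_\gY}$, where $\mu_v = \varphi * \fp^1(\cdot|v)$ by \eqref{eq:Fmu}. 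We then take expectations over $V$.

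For a generic $\mu = \varphi * p$ we use the ``doubling trick''. The residual $e = \mu - I_M\mu$ vanishes at the nodes and is orthogonal to $\gH_M$. Hence
\[
\|e\|_{\gH_\gY}^2 = \langle e, \mu\rangle_{\gH_\gY} = \int \frac{\widehat e(\omega)\overline{\widehat\varphi(\omega)\widehat p(\omega)}}{\widehat\varphi(\omega)}\,d\omega = \int_{\gY} e(y)\,p(y)\,dy,
\]
using Proposition~\ref{prop:kernelinv}, and extending $p$ by zero outside $\gY$ (or after a bounded extension). This gives $\|e\|_{\gH_\gY}^2 \le \|e\|_{L^{2}(\gY)}\,\|p\|_{L^2(\gY)}$. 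More sharply, pairing in the dual Sobolev scale gives $\|e\|^2_{\gH_\gY} \le \|e\|_{H^{-\widetilde s}}\,\|p\|_{H^{\widetilde s}}$, where $\|p\|_{H^{\widetilde s}} \lesssim \|p\|_{W^{s,q}}$ by the Sobolev embedding $W^{s,q}\hookrightarrow W^{\widetilde s,2}$.

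For the Sobolev case, we first bound $e$ with sampling inequalities for functions vanishing on a quasi-uniform set in a cone-condition domain (Narcowich--Ward--Wendland type). These give $\|e\|_{L^2(\gY)} \lesssim h^{\tau}\|e\|_{W^{\tau,2}} \lesssim h^{\tau}\|e\|_{\gH_\gY}$, using the norm equivalence of Assumption~\ref{ass:kernel}. Dividing the doubling identity by $\|e\|_{\gH_\gY}$ yields the first gain, $\|e\|_{\gH_\gY} \lesssim h^\tau \|p\|_{L^2}$. To get the additional $h^{\widetilde s}$ factor, we iterate: write $\|e\|_{\gH_\gY}^2 = \int e\,p$ and exploit the extra smoothness of $p$. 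One route is to use that $e$ is orthogonal to the interpolant of any function, together with the local polynomial reproduction underlying the sampling inequality. This gives $\int e\,p \lesssim h^{\tau+\widetilde s}\|e\|_{\gH_\gY}\|p\|_{H^{\widetilde s}}$.

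Combining with $h \asymp M^{-1/d_y}$ gives the pointwise bound $\|e\|^2_{\gH_\gY} \lesssim M^{-2(\tau+\widetilde s)/d_y}\|\fp^1(\cdot|v)\|^2_{W^{s,q}}$. Integrating over $V$ gives \eqref{eq:proj_bdd_1}. We expect this superconvergence step, i.e.\ extracting the $h^{\widetilde s}$ beyond $h^\tau$ under mere $W^{s,q}$ regularity and on a bounded domain, to be the main obstacle. The fallback is the classical result that kernel interpolation error for $\mu = \varphi*p$ (the ``native space range of the integral operator'') doubles the rate when $p$ is smooth. Boundary effects would then be handled by the Lipschitz-boundary extension operator.

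For the Gaussian case we use the same identity $\|e\|^2_{\gH_\gY} \le \|e\|_{L^2}\|p\|_{L^2}$. We then invoke the exponential sampling inequality for analytic (Gaussian native space) functions vanishing on a set of fill distance $h$ (Rieger--Zwicknagl): $\|e\|_{L^2} \lesssim \exp(-B_0\,\log(1/h)/\sqrt h)\,\|e\|_{\gH_\gY}$. Dividing through and substituting $h \asymp M^{-1/d_y}$ gives, after squaring and adjusting constants, $\exp(-B_0 M^{1/(2d_y)}\log M)\|p\|^2_{L^2}$. Taking expectations yields \eqref{eq:proj_bdd_2}.
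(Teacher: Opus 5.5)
Your skeleton is the paper's own: the identity $\norm{e}^2_{\gH_\gY}=\inprod{e}{\mu}_{\gH_\gY}=\int_\gY e\,p$, a dual Sobolev pairing, a sampling inequality for functions vanishing on the nodes, and Rieger--Zwicknagl in the Gaussian case. The Gaussian part and your ``first gain'' $\norm{e}_{\gH_\gY}\lesssim h^{\tau}\norm{p}_{L^2(\gY)}$ are sound. The gap is the step you flagged, and it cannot be closed.

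\textbf{The intermediate inequality fails.} Your route uses only $e\perp\gH_M$, which is the same as $e(\widetilde y_j)=0$ for all $j$. Take $\gY=[0,1]$, nodes $jh$ with $1/h\in\sN$, and the Laplace kernel $e^{-|y-y'|}$. Here $\tau=1$, and for $f(0)=f(1)=0$ one has $\norm{f}^2_{\gH_\gY}=\tfrac12\int_0^1(f^2+f'^2)$. The function $u(y)=\sin^2(\pi y/h)$ vanishes at every node and has $\norm{u}_{\gH_\gY}\asymp h^{-1}$, yet $\int_0^1 u\cdot 1=1/2$. So $\int e\,p\lesssim h^{\tau+\widetilde s}\norm{e}_{\gH_\gY}\norm{p}_{H^{\widetilde s}}$ would force $1/2\lesssim h^{\widetilde s}$.

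\textbf{The target bound itself fails for $\widetilde s>0$.} In the same setting take the uniform density $p\equiv 1$.
\begin{itemize}
\item On each cell, $\Pi_M\mu$ is a combination of $e^{\pm y}$, while $\mu''=\mu-2p$ on $(0,1)$.
\item Hence $e''-e=-2$ with $e=0$ at the cell endpoints, so $e(y)=2\bigl(1-\cosh(y-y_c)/\cosh(h/2)\bigr)\approx h^2/4-(y-y_c)^2$, where $y_c$ is the cell midpoint.
\item This gives $\norm{e}^2_{\gH_\gY}\approx h^2/6\asymp M^{-2}$.
\end{itemize}
The lemma with $q=2$ and any $s\ge 1$ claims $O(M^{-2-2s})$, so it is contradicted.

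This is the classical saturation of kernel interpolation in the native norm. The ``doubling'' result you cite as a fallback gives exactly $h^{\tau}$ in $\gH_\gY$ (and $h^{2\tau}$ in $L^2$), and extra smoothness of $p$ does not improve it.

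The paper's proof has the same hole. After the sampling inequality it asserts $\norm{e_v}_{W^{-\widetilde s,2}(\gY)}\lesssim h^{\widetilde s}\norm{e_v}_{L^2(\gY)}$ ``by duality''. The $\sin^2$ example, paired with the constant $1$, refutes this.

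What your argument does prove, for $\widetilde s\ge 0$, is the Sobolev-kernel bound with exponent $2\tau/d_y$ in place of $2(\tau+\widetilde s)/d_y$, using only $\norm{p}_{L^2}\lesssim\norm{p}_{W^{s,q}}$. This correction propagates to the exponent $a$ in the Neural-Kernel theorem.
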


\begin{proof}
We first prove \eqref{eq:proj_bdd_1}. Given any $v \in \gV$, we denote $\mu_v \coloneqq \mu_{Y^1\vert V}(v)$ and $e_v \coloneqq \mu_v - \Pi_M \mu_v$. Since $e_v$ is orthogonal to $\Pi_M \mu_v$, we have
\[ \norm{e_v}^2_{\gH_\gY}  = \langle e_v, e_v \rangle_{\gH_\gY} = \langle e_v, \mu_v\rangle_{\gH_\gY} . \]
As shown in \eqref{eq:Fmu}, the Fourier transform of $\mu_v$ is $\widehat\mu_v(\omega) = \widehat{\varphi}(\omega)\widehat{\fp}^1(\omega\vert v)$. Consequently,
\begin{equation*}
    \langle e_v, \mu_v\rangle_{\gH_\gY} = \langle \widehat{e}_v, \widehat\mu_v \rangle_{L^2(\R^{d_y})} = \langle \widehat{e}_v,  \widehat\fp^1(\cdot\vert v) \rangle_{L^2(\R^{d_y})} = \langle e_v, \fp^1(\cdot\vert v)\rangle_{L^2(\gY)}.
\end{equation*}
Denoting $\widetilde{s} = s - d_y\left(\frac{1}{q} - \frac{1}{2}\right)_+$, we apply the Cauchy-Schwarz inequality:
\begin{equation} \label{eq:duality_main}
\begin{split}
    \langle e_v, \fp^1(\cdot\vert v)\rangle_{L^2(\gY)}  &\leq \| e_v \|_{W^{-\widetilde{s},2}(\gY)} \| \fp^1(\cdot\vert v) \|_{W^{\widetilde{s},2}(\gY)} \\
    &\lesssim  \| e_v \|_{W^{-\widetilde{s},2}(\gY)} \| \fp^1(\cdot\vert v) \|_{W^{s, q}(\gY)},
    \end{split}
\end{equation}
where the last inequality follows from the Sobolev embedding $W^{s, q}(\gY) \hookrightarrow W^{\widetilde{s},2}(\gY)$. 

To bound $\| e_v \|_{W^{-\widetilde{s},2}(\gY)}$, we first mention the following approximation bound from \cite[Theorem~3.2]{rieger2010}, which holds for any $u \in W^{\widetilde{s},2}(\gY)$:
\begin{equation} \label{eq:Sobolev_bdd}
    \| u \|_{L^2(\gY)} \lesssim h^{\widetilde{s}} \| u \|_{W^{\widetilde{s},2}(\gY)} + h^{d_y/2}\biggl(\sum_{j=1}^M u(\widetilde{y}_j)^2 \biggr)^{1/2}.
\end{equation}
By the fact that $e_v$ is orthogonal to $k_\gY(\cdot, \widetilde{y}_j) = \phi(\widetilde{y}_j)$ for any $j$ and the reproducing property, we have $e_v(\widetilde{y}_j) = \inprod{e_v}{k_\gY(\cdot, \widetilde{y}_j)}_{\gH_\gY} = 0$ for all $j$. Thus, by plugging $u = e_v$ in \eqref{eq:Sobolev_bdd}, we obtain: 
\[ \| e_v \|_{L^2(\gY)} \lesssim h^{\widetilde{s}} \| e_v \|_{W^{\widetilde{s},2}(\gY)}. \] 
From this, we apply the duality argument and the fact that $\widehat{\varphi}(\omega) \lesssim (1 + \norm{\omega}^2)^{-\tau}$ to obtain:
\begin{align*}
    \| e_v \|_{W^{-\widetilde{s},2}(\gY)}  \lesssim h^{\widetilde{s}} \| e_v \|_{L^2(\gY)} &\leq h^{\tau+\widetilde{s}} \| e_v \|_{W^{\tau,2}(\gY)} \\
     &\lesssim h^{\tau + \widetilde{s}} \| e_v \|_{\gH_\gY}.
\end{align*}
Plugging this inequality back in \eqref{eq:duality_main} yields $\| e_v \|^2_{\gH_\gY} \lesssim h^{\tau+\widetilde{s}} \| e_v \|_{\gH_\gY}\| \fp^1(\cdot\vert v) \|_{W^{s, q}(\gY)}$, which implies 
\[\| e_v \|_{\gH_\gY} \lesssim h^{\tau+\widetilde{s}} \| \fp^1(\cdot\vert v) \|_{W^{s, q}(\gY)} = h^{\tau+s - d_y(1/q - 1/2)_+} \| \fp^1(\cdot\vert v) \|_{W^{s, q}(\gY)}.\]

Taking the expectation with respect to $V$ and using Assumption~\ref{ass:smooth_density} yields:
\begin{align*}
   \E  \left[ \| e_V \|_{\gH_\gY}^2 \right] \lesssim h^{2(\tau+s - d_y(1/q - 1/2)_+)} \E  \left[\| \fp^1(\cdot\vert V) \|_{W^{s, q}(\gY)}^2 \right].
\end{align*}
We obtain \eqref{eq:proj_bdd_1} by plugging in $h = M^{-1/d_y}$.

We now prove \eqref{eq:proj_bdd_2}. In this case, $\gH_\gY$ is an RKHS with a Gaussian kernel. We apply the Cauchy-Schwarz inequality to the left-hand side of \eqref{eq:duality_main}:
    \begin{equation} \label{eq:duality_split}
        \| e_v \|_{\gH_\gY}^2 \leq \| e_v \|_{L^2(\gY)} \| \fp^1(\cdot\vert v) \|_{L^2(\gY)}.
    \end{equation}
    To bound the $L^2$-norm of $e_v$, we invoke \cite[Theorem~3.5]{rieger2010}, which states that for any function $u \in \gH_\gY$ there exists a constant $B_0>0$ such that:
    \begin{equation} \label{eq:rieger_thm35}
        \| u \|_{L^2(\gY)} \leq \exp\left( \frac{B_0 \log h}{\sqrt{h}} \right) \| u \|_{\gH_\gY} + \sup_{1 \le j \le M} \lvert u(\widetilde{y}_j) \rvert.
    \end{equation}
    With $u = e_v$ in \eqref{eq:rieger_thm35}, we obtain:
    \begin{equation} \label{eq:L2_exp_bound}
        \| e_v \|_{L^2(\gY)} \leq \exp\left( \frac{B_0 \log h}{\sqrt{h}} \right) \| e_v \|_{\gH_\gY}.
    \end{equation}
    We then substitute \eqref{eq:L2_exp_bound} back into \eqref{eq:duality_split}:
    \[ \| e_v \|_{\gH_\gY}^2 \leq \left[ \exp\left( \frac{B_0 \log h}{\sqrt{h}} \right) \| e_v \|_{\gH_\gY} \right] \| \fp^1(\cdot\vert v) \|_{L^2(\gY)}. \]
    Dividing both sides by $\| e_v \|_{\gH_\gY}$ yields:
    \[ \| e_v \|_{\gH_\gY} \leq \exp\left( \frac{B_0 \log h}{\sqrt{h}} \right) \| \fp^1(\cdot\vert v) \|_{L^2(\gY)}. \]
    With this, we take the expectation with respect to $V$, use Assumption~\ref{ass:smooth_density}, and let $h = M^{-1/d_y}$ to obtain \eqref{eq:proj_bdd_2}.
\end{proof}

\end{document}